\documentclass[default]{sn-jnl-preprint}
\jyear{2022}

\usepackage[utf8]{inputenc}
\usepackage{emile}
\usepackage{bm}

\theoremstyle{theorem}

\theoremstyle{theorem}

\newcommand{\op}{f}
\newcommand{\boostf}{\rho}
\newcommand{\truth}{{\boldsymbol{t}}}
\newcommand{\truths}{t}
\newcommand{\truthalt}{{\boldsymbol{u}}}
\newcommand{\truthsalt}{u}
\newcommand{\boostv}{{\hat{\truth}}}
\newcommand{\boosts}{{\hat{\truths}}}

\newcommand{\minboost}{\rho^*}
\newcommand{\minboostv}{{\truth^*}}
\newcommand{\minboosts}{{\truths^*}}
\newcommand{\weight}{t}
\newcommand{\weightmax}{\weight^{\max}}
\newcommand{\weightmin}{\weight^{\min}}

\newcommand{\revis}[1]{{\hat{t}_{#1}}}
\newcommand{\revismin}[1]{{\min_{#1}}}
\newcommand{\revismax}[1]{{\max_{#1}}}

\newcommand{\add}{g}

\newcommand{\newmax}{\boosts_{\max}}
\newcommand{\newmin}{\boosts_{\min}}

\newcommand{\boost}{refinement}
\newcommand{\Boost}{Refinement}
\newcommand{\boosted}{refined}
\newcommand{\Boosted}{Refined}

\newcommand{\lukincrease}{\lambda}

\newcommand{\predicates}{{\mathcal{P}}}
\newcommand{\constants}{{\mathcal{C}}}
\newcommand{\truthc}{{\boldsymbol{c}}}

\newcommand\todo[1]{}
\newcommand\tododeclarations[1]{\textcolor{red}{}}
\newcommand\ale[1]{{#1}}
\newcommand\emile[1]{}
\newcommand\emi[1]{{#1}}
\newcommand\frank[1]{}
\newcommand\ls[1]{} 

\definecolor{codegreen}{rgb}{0,0.6,0}
\definecolor{codegray}{rgb}{0.5,0.5,0.5}
\definecolor{codepurple}{rgb}{0.58,0,0.82}
\definecolor{backcolour}{rgb}{0.95,0.95,0.92}

\lstdefinestyle{mystyle}{
    backgroundcolor=\color{backcolour},   
    commentstyle=\color{codegreen},
    keywordstyle=\color{magenta},
    numberstyle=\tiny\color{codegray},
    stringstyle=\color{codepurple},
    basicstyle=\ttfamily\footnotesize,
    breakatwhitespace=false,         
    breaklines=true,                 
    captionpos=b,                    
    keepspaces=true,                 
    numbers=left,                    
    numbersep=5pt,                  
    showspaces=false,                
    showstringspaces=false,
    showtabs=false,                  
    tabsize=2
}

\lstset{style=mystyle}

\begin{document}

\title[Refining neural network predictions]{Refining neural network predictions using background knowledge}

\author[1]{\fnm{Alessandro} \sur{Daniele}} \email{daniele@fbk.eu}
\equalcont{These authors contributed equally to this work.}
\author[2]{\fnm{Emile} \sur{van Krieken}}\email{e.van.krieken@vu.nl}
\equalcont{These authors contributed equally to this work.}
\author[1]{\fnm{Luciano} \sur{Serafini}}
\author[2]{\fnm{Frank} \sur{van Harmelen}}

\affil[1]{\orgdiv{Data and Knowledge Management unit}, \orgname{Fondazione Bruno Kessler}, \orgaddress{\street{via Sommarive 18}, \city{Trento}, \postcode{38123}, \country{Italy}}}

\affil[2]{\orgdiv{Department of Computer Science}, \orgname{Vrije Universiteit Amsterdam}, \orgaddress{\street{de Boelelaan 1081a}, \city{Amsterdam}, \postcode{1081HV}, \country{Netherlands}}}

\abstract{Recent work has shown logical background knowledge can be used in learning systems to compensate for a lack of labeled training data. 
Many methods work by creating a loss function that encodes this knowledge. However, often the logic is discarded after training, even if it is still useful at test time. Instead, we ensure neural network predictions satisfy the knowledge by refining the predictions with an extra computation step. We introduce differentiable \emph{\boost\ functions} that find a corrected prediction close to the original prediction. 
We study how to effectively and efficiently compute these \boost\ functions. Using a new algorithm called Iterative Local Refinement (ILR), we combine \boost\ functions to find \boosted\ predictions for logical formulas of any complexity. ILR finds \boost s on complex SAT formulas in significantly fewer iterations and frequently finds solutions where gradient descent can not. Finally, ILR produces competitive results in the MNIST addition task.}

\maketitle

\section{Introduction}
Recent years have shown promising examples of using symbolic background knowledge in learning systems: From training classifiers with weak supervision signals \cite{manhaeveDeepProbLogNeuralProbabilistic2018}, generalizing learned classifiers to new tasks \cite{roychowdhuryRegularizingDeepNetworks2021}, compensating for a lack of good supervised data \cite{diligentiSemanticbasedRegularizationLearning2017,donadelloLogicTensorNetworks2017}, to enforcing the structure of outputs through a logical specification \cite{xuSemanticLossFunction2018}. The main idea underlying these integrations of learning and reasoning, often called neuro-symbolic integration, is that background knowledge can complement the neural network when one lacks high-quality labeled data \cite{giunchigliaDeepLearningLogical2022}. Although pure deep learning approaches excel when learning over \emph{vast} quantities of data with \emph{gigantic} amounts of compute \cite{chowdheryPaLMScalingLanguage2022,rameshHierarchicalTextConditionalImage2022}, most tasks are not afforded this luxury. 

Many neuro-symbolic methods work by creating a differentiable loss function that encodes the background knowledge (Figure \ref{fig:LRL-comparison}a). However, often the logic is discarded after training, even though this background knowledge could still be helpful at test time \cite{roychowdhuryRegularizingDeepNetworks2021,giunchigliaROADRAutonomousDriving2022}. Instead, we ensure we constrain the neural network with the background knowledge, both during train time and test time, by correcting its output such that it will satisfy the background knowledge (Figure \ref{fig:LRL-comparison}b). In particular, we consider how to make such corrections while being as close as possible to the original predictions of the neural network.

We study how to effectively and efficiently correct the neural network by ensuring its predictions satisfy the symbolic background knowledge. In particular, we consider fuzzy logics formed using functions called t-norms \cite{klementTriangularNorms2013,rossFuzzyLogicEngineering2010}. Prior work has shown how to use a gradient ascent-based optimization procedure to find a prediction that satisfies this fuzzy background knowledge \cite{diligentiSemanticbasedRegularizationLearning2017,roychowdhuryRegularizingDeepNetworks2021}. However, a recent model called KENN \cite{danieleKnowledgeEnhancedNeural2019} shows how to compute the correction analytically for a fragment of the G\"{o}del logic. 

To extend this line of work, we introduce the concept of \emph{\boost\ functions}, and derive \boost\ functions for many fuzzy logic operators. \Boost\ functions are functions that find a prediction that satisfies the background knowledge while staying close to the neural network's original prediction. Using a new algorithm called \emph{Iterative Local Refinement} (ILR), we can combine \boost\ functions for different fuzzy logic operators to efficiently find \boost s for logical formulas of any complexity. Since \boost\ functions are differentiable, we can easily integrate them as a neural network layer. In our experiments, we compare ILR with an approach using gradient ascent. We find that ILR finds optimal \boost s in significantly fewer iterations. \ale{Moreover, ILR often produces results that stay closer to the original predictions or better satisfy the background knowledge.
} Finally, we evaluated ILR on the MNIST Addition task~\cite{manhaeveDeepProbLogNeuralProbabilistic2018} and show that ILR can be combined with neural networks to solve common neuro-symbolic tasks.

In summary, our contributions are:
\begin{enumerate}
\item We formalize the concept of minimal \boost\ functions in Section \ref{sec:minimal-boost-function}.
\item We introduce the ILR algorithm in Section \ref{sec:ILR}, which uses the minimal \boost\ functions for individual fuzzy operators to find \boost s for general logical formulas.
\item We discuss how to use ILR for neuro-symbolic AI in Section \ref{sec:neuro-symbolic}, where we exploit the fact that ILR is a differentiable algorithm.
\item We analytically derive minimal \boost\ functions for individual fuzzy operators constructed from the G\"{o}del, \luk, and product t-norms in Section \ref{sec:basic-t-norm}. 
\item We discuss a large class of t-norms for which we can analytically derive minimal \boost\ functions in Section \ref{sec:general-analysis}.
\item We compare ILR to gradient descent approaches and show it finds \boost s on complex SAT formulas in significantly fewer iterations and frequently finds solutions where gradient descent can not. 
\item We apply ILR to the MNIST Addition task~\cite{manhaeveDeepProbLogNeuralProbabilistic2018} to test how ILR behaves when injecting knowledge into neural network models.
\end{enumerate}

\label{sec:lrl}
\begin{figure}
    \includegraphics[width=\linewidth]{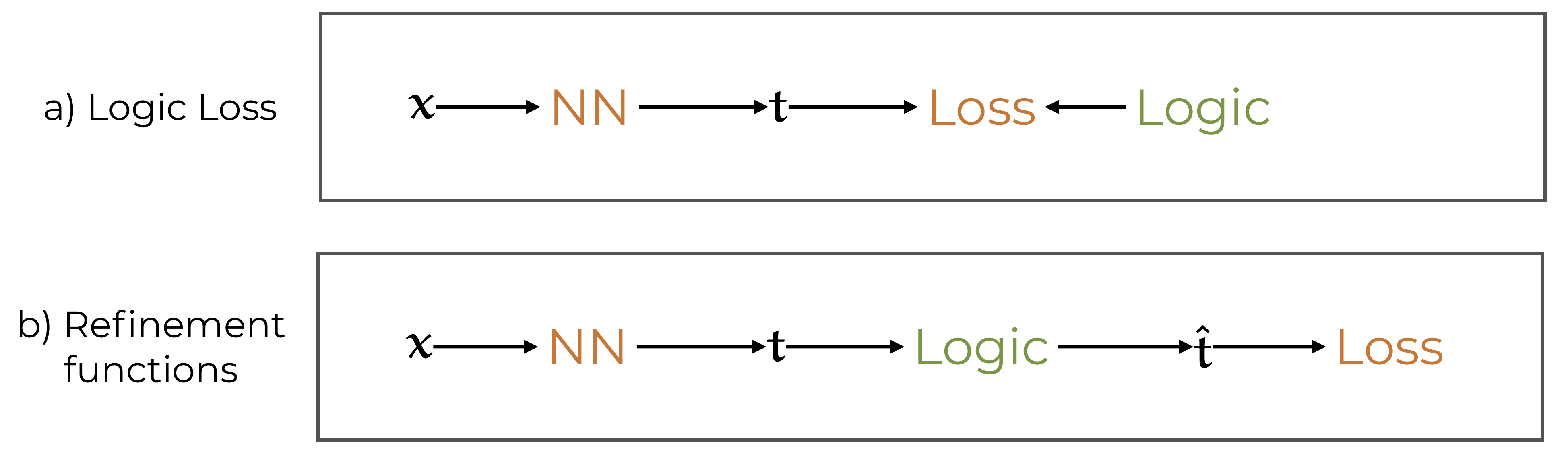}
    \caption{Comparing different approaches for constraining neural networks with background knowledge. Loss-based approaches include LTN, SBR, and Semantic Loss, while KENN, C-HMCNN(h), and SBR-CC are representatives for \boost\ functions.}
    \label{fig:LRL-comparison}
\end{figure}

\section{Related work}
ILR falls into a larger body of work that attempts to integrate background knowledge expressed as logical formulas into neural networks. For an overview, see \cite{giunchigliaDeepLearningLogical2022}. As shown in Figure \ref{fig:LRL-comparison}, methods can be categorized by whether they only use background knowledge during training in the form of a loss function~\cite{badreddineLogicTensorNetworks2022,xuSemanticLossFunction2018,diligentiSemanticbasedRegularizationLearning2017,fischerDL2TrainingQuerying,yangInjectingLogicalConstraints,vankriekenAnalyzingDifferentiableFuzzy2022} or whether the background knowledge is part of the model and therefore enforces the knowledge also at test time~\cite{danieleKnowledgeEnhancedNeural2019,wangSATNetBridgingDeep2019,giunchigliaMultiLabelClassificationNeural2021,ahmedSemanticProbabilisticLayers2022,hoernleMultiplexNetFullySatisfied2022,dragoneNeuroSymbolicConstraintProgramming}. ILR is a method in the second category. We note that these approaches can be combined~\cite{giunchigliaROADRAutonomousDriving2022,roychowdhuryRegularizingDeepNetworks2021}.

First, we discuss approaches that construct loss functions from the logical formulas (Figure \ref{fig:LRL-comparison}a). These loss functions measure when the deep learning model violates the background knowledge, such that minimizing the loss function amounts to \say{correcting} such violations \cite{vankriekenAnalyzingDifferentiableFuzzy2022}. 
While these methods show significant empirical improvement, they do not guarantee that the neural network will satisfy the formulas outside the training data. 
LTN and SBR \cite{badreddineLogicTensorNetworks2022,diligentiSemanticbasedRegularizationLearning2017} use fuzzy logic to provide compatibility with neural network learning, while Semantic Loss \cite{xuSemanticLossFunction2018} uses probabilistic logics. The formalization of \boost\ functions can be extended to probabilistic logics by defining a suitable notion of minimality, like the KL-divergence between the original and \boosted\ distributions over ground atoms. 

Among the methods where knowledge is part of the model, KENN inspired \ale{ILR \cite{danieleKnowledgeEnhancedNeural2019,KENN_rel}. KENN is a framework that injects knowledge into neural networks by iteratively refining its predictions. It uses 
a relaxed version of the G\"{o}del t-conorm obtained through a relaxation of the argmax function, which it applies }
in logit space. 
Closely related to both ILR and KENN is C-HMCNN(h) \cite{giunchigliaMultiLabelClassificationNeural2021}, which we see as computing the minimal \boost\ function for stratified normal logic programs under G\"{o}del t-norm semantics. We discuss this connection in more detail in Section \ref{sec:godel-t-norm}.

The loss-function based method SBR also introduces a procedure for using the logical formulas at test time in the context of collective classification~\cite{diligentiSemanticbasedRegularizationLearning2017,roychowdhuryRegularizingDeepNetworks2021}. Unlike KENN \cite{danieleKnowledgeEnhancedNeural2019}, these approaches do not enforce the background knowledge during training but only use it as a test time procedure. In particular, \cite{roychowdhuryRegularizingDeepNetworks2021} shows that doing these corrections at test time improves upon just using the loss-function approach. Unlike our analytic approach to \boost\ functions, SBR finds new predictions using a gradient descent procedure very similar to the algorithm we discuss in Section \ref{sec:gradient-descent}. We show it is much slower to compute than ILR.

Another method closely related to ILR is the neural network layer SATNet \cite{wangSATNetBridgingDeep2019}, which has a setup closely related to ours. However, SATNet does not have a notion like minimality and uses a different underlying logic constructed from a semidefinite relaxation. DeepProbLog \cite{manhaeveDeepProbLogNeuralProbabilistic2018} also is a probabilistic logic, but unlike Semantic Loss is used to derive new statements through proofs and cannot directly be used to correct the neural network on predictions that do not satisfy the background knowledge. Instead, ILR can be used both for injecting constraints on the output of a neural network, as well as for proving new statements starting from the neural network predictions.

Finally, some methods are limited to equality and inequality constraints rather than general symbolic background knowledge \cite{fischerDL2TrainingQuerying,hoernleMultiplexNetFullySatisfied2022}. DL2 \cite{fischerDL2TrainingQuerying} combines these constraints into a real-valued loss function, while MultiplexNet \cite{hoernleMultiplexNetFullySatisfied2022} adds the knowledge as part of the model. However, MultiplexNet requires expressing the logical formulas as a DNF formula, which is hard to scale.

\todo{Ale: please add comparison LRNN/NLM/NMLN?}

\section{Fuzzy Operators}


\looseness=-1
We will first provide the necessary background knowledge for defining and analyzing minimal \boost\ functions. In particular, we will consider fuzzy operators, which generalize the connectives of classical boolean logic. 
For formal treatments of the study of such operators, we refer the reader to \cite{klementTriangularNorms2013}, which discusses t-norms and t-conorms, to \cite{jayaramFuzzyImplications2008} for fuzzy implications, to \cite{calvoAggregationOperatorsProperties2002} for aggregation functions, and to \cite{vankriekenAnalyzingDifferentiableFuzzy2022} for an analysis of the derivatives of these operators.


\begin{table}[]
    \centering
    \begin{tabular}{ll}
    \hline
    Name          & T-norm\\ \hline 
    Minimum & ${T_G}(\truth) = \min_{i=1}^n \truths_i$ \\ 
    Product & ${T_P}(\truth) = \prod_{i=1}^n \truths_i$  \\ 
    \luk{}  & ${T_{L}}(\truth) = \max(\sum_{i=1}^n \truths_i - (n - 1), 0)$ 
    \end{tabular}
    \caption{Some common t-norms extended to any-arity aggregation operators.}
    \label{tab:t-norms}
\end{table}

\begin{definition}
    \label{deff:tnorm}
    A function $T: [0,1]^2\rightarrow [0, 1]$ is a \textit{t-norm} (triangular norm) if it is commutative, associative, increasing in both arguments, and if for all $t\in [0,1]$, $T(1, t) = t$.

    Similarly, a function $S: [0, 1]^2 \rightarrow [0, 1]$ is a \textit{t-conorm} if the last condition instead is that for all $t\in[0, 1]$, $S(0, t)=t$. 
\end{definition}
\emph{Dual} t-conorms are formed from a t-norm $T$ using $S(t_1, t_2)=1-T(1-t_1, 1-t_2)$. We list any-arity extensions, constructed using $T(\truth)=T(\truths_1, T(\truth_{2:n}))$, $T(t_i)=t_i$ of five basic t-norms in Table \ref{tab:t-norms}. Here $\truth=[\truths_1, ..., \truths_n]^\top\in [0, 1]^n$ is a vector of fuzzy truth values, which we will often refer to as \emph{(truth) vectors}. These any-arity extensions are examples of fuzzy aggregation operators \cite{calvoAggregationOperatorsProperties2002}. 
\begin{definition}
  A t-norm $T$ is \emph{Archimedean} if for all $x, y\in (0, 1)$, there is an $n$ such that
  $T(\underbrace{x,\dots,x}_{n\times})<y$.

  A continuous t-norm $T$ is \emph{strict} if, in addition, for all $x\in (0, 1)$, $0 < T(x, x) < x$. 
\end{definition}

\begin{definition}
    \label{def:implication}
    A function $I: [0, 1]^2\rightarrow [0, 1]$ is a \textit{fuzzy implication} if for all $t_1, t_2\in [0, 1]$, $I(\cdot, t_2)$ is decreasing, $I(t_1, \cdot)$ is increasing and if $I(0, 0) = 1$,  $I(1, 1) = 1$ and $I(1, 0) = 0$.
\end{definition}

Note that fuzzy implications do not have $n$-ary extensions as they are not associative. The so-called  \textit{S-implications} are formed from the t-conorm by generalizing the material implication using $I(a, c)=S(1-a, c)$. 
Furthermore, every t-norm induces a unique \textit{residuum} or \textit{R-implication} \cite{jayaramFuzzyImplications2008} $R_T(a, c)= \sup \{ z \vert T(z, a) \leq c \}$.

Logical formulas $\varphi$ can be evaluated using compositions of fuzzy operators. We assume $\varphi$ is a propositional logic formula, but note this evaluation procedure can be extended to grounded first-order logical formulas on finite domains. \ale{For instance, \cite{KENN_rel} introduced a technique for propositionalizing universally quantified formulas of predicate logic in the context of KENN. Moreover, this technique can be extended to existential quantification} by treating it as a disjunction.  We assume a set of propositions $\predicates= \{P_1, ..., P_n\}$ and constants $\constants = \{C_1, ..., C_m\}$, where each constant has a fixed value $C_i\in [0, 1]$.

\begin{definition}
\label{def:evaluation}
If $T$ is a t-norm, $S$ a t-conorm and $I$ a fuzzy implication, then the \emph{fuzzy evaluation operator} $f_\varphi:[0, 1]^n\rightarrow [0,1]$ of the formula $\varphi$ with propositions $\predicates$ and constants $\constants$ is a function of truth vectors $\truth$ and given as
\begin{align}
    \op_{P_i}(\truth) &= \truths_i\\
    \op_{C_j}(\truth) &= C_j \\
    \op_{\neg \phi}(\truth) &= 1-f_{\phi}(\truth) \\
    \op_{\bigwedge_{j=1}^m \phi_j}(\truth) &= T(\op_{\phi_1}(\truth), ..., \op_{\phi_m}(\truth)) \\
    \op_{\bigvee_{j=1}^m \phi_j}(\truth) &= S(f_{\phi_1}(\truth), ..., f_{\phi_m}(\truth)) \\
    \op_{\phi\rightarrow \psi}(\truth) &= I(f_\phi(\truth), f_\psi(\truth)).
\end{align}
where we match on the structure of the formula $\varphi$ in the subscript $f_\varphi$. 
\end{definition}

\section{Minimal Fuzzy \Boost\ Functions}
\label{sec:minimal-boost-function}
We will next define (fuzzy) \boost\ functions, which consider how to change the input arguments of fuzzy operators such that the output of the operators is a given truth value. 
We prefer changes to the input arguments that are as small as possible. 
We will introduce several definitions to facilitate studying this concept. The first is an optimality criterion. 

\begin{definition}[Fuzzy \boost\ function]
    Let $\op_\varphi: [0, 1]^n\rightarrow [0,1]$ be a fuzzy evaluation operator. Then $\boostv: [0, 1]^n$ is called a \emph{\boosted\ (truth) vector} for the \emph{\boost\ value} $\revis{\varphi}\in[0, 1]$ if $\op_\varphi(\boostv) = \revis{\varphi}$.

    Furthermore, let $\revismin{\varphi}=\min_{\boostv\in[0, 1]^n} \op_\varphi(\boostv)$ and $\revismax{\varphi}=\max_{\boostv \in [0, 1]^n} \op_\varphi(\boostv)$. 
    Then $\boostf: [0, 1]^{n}\times[0, 1]\rightarrow [0, 1]^n$ is a \emph{(fuzzy) \boost\ function}\footnote{\ale{The concept of \boost\ functions is closely related to the concept of \emph{Fuzzy boost function} in the KENN paper \cite{danieleKnowledgeEnhancedNeural2019}.}}
    for $\op_\varphi$ if for all $\truth \in [0, 1]^n$, 
    \begin{enumerate}
    \item for all $\revis{\varphi}\in [\revismin{\varphi}, \revismax{\varphi}]$, $\boostf(\truth, \revis{\varphi})$ is a \boosted\ vector for $\revis{\varphi}$;
    \item for all $\revis{\varphi} < \revismin{\varphi}$, $\boostf(\truth, \revis{\varphi})=\boostf(\truth, \revismin{\varphi})$;
    \item for all  $\revis{\varphi} > \revismax{\varphi}$, $\boostf(\truth, \revis{\varphi})=\boostf(\truth, \revismax{\varphi})$. 
    \end{enumerate}
    \end{definition}




A \boost\ function for $f_\varphi$ changes the input truth vector in such a way that the new output of $f_\varphi$ will be $\revis{\varphi}$. 
Whenever $\revis{\varphi}$ is high, we want the \boosted\ vector to satisfy the formula $\varphi$, while if $\revis{\varphi}$ is low, we want it to satisfy its negation. When $\revis{\varphi}=1$, the constraint created by the formula is a hard constraint, while if it is in $(0, 1)$, this constraint is soft. 
We require bounding the set of possible $\revis{\varphi}$ by $\revismin{\varphi}$ and $\revismax{\varphi}$, since if there are constants $C_i$, or if $\varphi$ has no satisfying (discrete) solutions, there can be formulas such that there can be no \boosted\ vectors $\boostv$ for which $f_\varphi(\boostv)$ equals 1. 

Next, we introduce a notion of minimality of \boost\ functions. The intuition behind this concept is that we prefer the new output, the \boosted\ vector $\boostv$, to stay as close as possible to the original truth vector $\truth$. Therefore, we assume we want to find a truth vector near the neural network's output that satisfies the background knowledge.
\begin{definition}[Minimal \boost\ function]
    Let $\minboost$ be a \boost\ function for operator $\op_\varphi$. $\minboost$ is a \emph{minimal} \boost\ function with respect to some norm $\|\cdot\|$ if for each $\truth\in[0, 1]^n$ and $\revis{\varphi}\in [\revismin{\varphi}, \revismax{\varphi}]$, there is no \boosted\ vector $\boostv'$ for $\revis{\varphi}$  such that $\|\minboost(\truth, \revis{\varphi}) - \truth\| > \|\boostv' - \truth\|$.
\end{definition}


For a particular fuzzy evaluation operator $\op_\varphi$, finding the minimal \boost\ function corresponds to solving the following optimization problem:

\begin{equation}
    \label{eq:optim-problem}
    \begin{aligned}
        \textrm{For all } \quad & \truth\in [0,1]^n, \revis{\varphi} \in [\revismin{\varphi}, \revismax{\varphi}]  \\
        \min_{\boostv} \quad & \|\boostv - \truth\|  \\
        \textrm{such that } \quad & \op_\varphi(\boostv) = \revis{\varphi},  \\
        & 0 \leq  \boosts_i \leq 1
    \end{aligned}
\end{equation}

For some $f_\varphi$ we can solve this problem analytically using the Karush-Kuhn-Tucker (KKT) conditions. However, while $\|\cdot\|$ is convex, $f_\varphi$ (usually) is not. Therefore, we can not rely on efficient convex solvers. Furthermore, for strict t-norms, finding exact solutions to this problem is equivalent to solving PMaxSAT when $\revis{\varphi}=1$ \cite{diligentiSemanticbasedRegularizationLearning2017,giunchigliaROADRAutonomousDriving2022}, hence this problem is NP-complete. In Sections \ref{sec:general-analysis} and \ref{sec:class-analysis}, we will analytically derive minimal \boost\ functions for a large amount of individual fuzzy operators. These results are the theoretical contribution of this paper. We first discuss in Section \ref{sec:ILR} a method called ILR for finding general solutions to the problem of finding minimal \boost\ functions. ILR uses the analytical minimal \boost\ functions of individual fuzzy operators in a forward-backward algorithm. Then, in Section \ref{sec:neuro-symbolic}, we discuss how to use this algorithm for neuro-symbolic AI.

\section{Iterative Local Refinement}
\label{sec:ILR}
\begin{figure}
    \includegraphics[width=\linewidth]{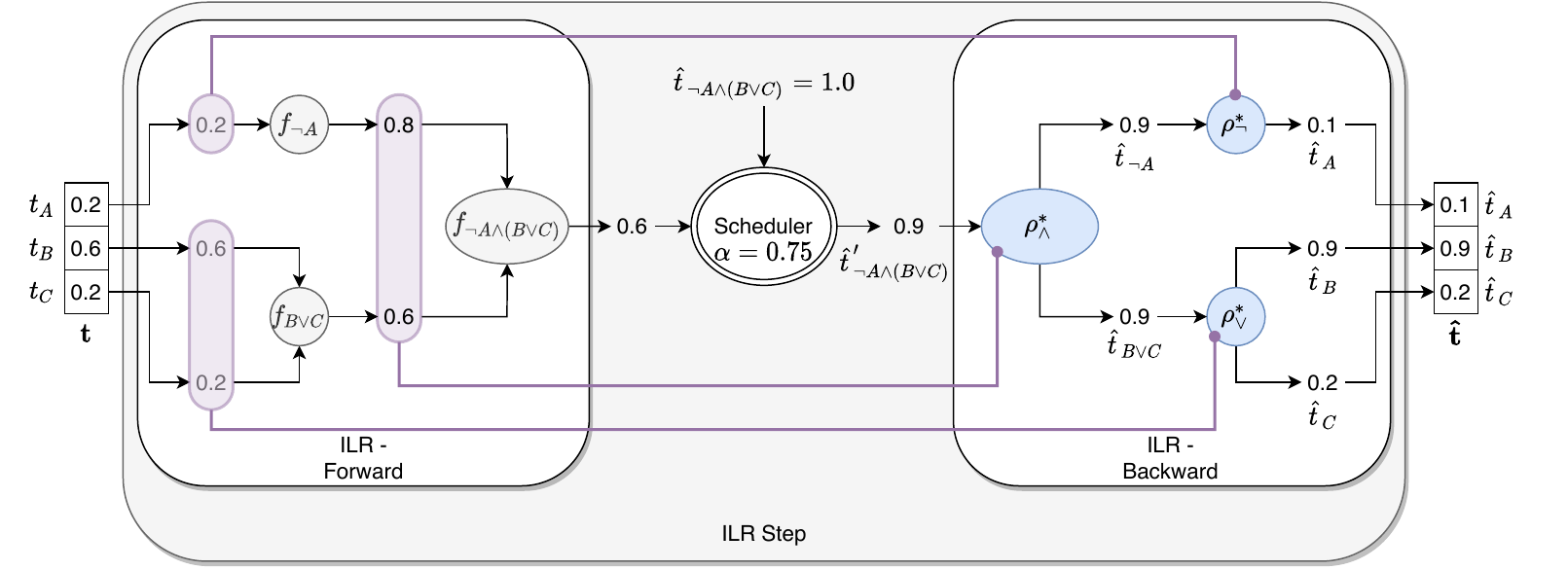}
    \caption{\ale{Visualization of one step of ILR for the G\"{o}del logic and formula $\phi = \lnot A \land (B \lor C)$. In the forward pass (left), ILR computes the truth value of $\phi$. In the backward pass (right), ILR traverses the computational graph of the forward step in reverse to calculate the \boosted\ vector $\boostv$. ILR substitutes each fuzzy operator of the forward pass with the corresponding \boost\ function. Each \boost\ function receives as input the initial truth values used by the fuzzy operator in the forward step (purple lines) and the target value for the corresponding subformula. The scheduler calculates the target value $\hat{t}'_{\lnot A \land (B \lor C)}$ for the entire formula, which ILR calls between the forward and backward steps.}}
    \label{fig:ILR}
\end{figure}

We introduce a fast, iterative, differentiable but approximate algorithm called \emph{Iterative Local Refinement (ILR)} that finds minimal \boost\ functions for general formulas. ILR is a forward-backward algorithm acting on the computation graph of formulas. First, it traverses the graph from its leaves to its root to compute the current truth values of subformulas. Then, it traverses the graph back from its root to the leaves to compute new truth values for the subformulas. ILR makes use of analytical minimal \boost\ functions to perform this backward pass.  ILR is a differentiable algorithm if the fuzzy operators and their corresponding minimal \boost\ functions are differentiable as it computes compositions of these functions. \ale{An example of one step of the ILR algorithm is presented in Figure~\ref{fig:ILR}, while Algorithm~\ref{alg:main} contains the entire pseudocode.}

First, ILR computes the truth value of the formula in the forward pass \ale{(left side of Figure~\ref{fig:ILR})}, where the truth vectors of intermediate subformulas \ale{are saved (in Algorithm~\ref{alg:main}: $\truth_{\mathsf{sub}}$. In Figure~\ref{fig:ILR}: numbers inside colored boxes)}. Then, ILR computes a backward pass \ale{(right side of Figure~\ref{fig:ILR})}. ILR uses previously computed truth vectors of subformulas to compute the minimal \boosted\ vectors for the components of that subformula.  
We use the results from Sections \ref{sec:basic-t-norm} to compute these for the G\"{o}del, \luk\ and product fuzzy operators. 

In lines \ref{alg-line:backward-t-norm-start} to \ref{alg-line:backward-t-norm-end}, ILR computes the minimal \boosted\ vector for a conjunction of subformulas. We retrieve the truth values of the subformulas from the forward pass and call the minimal \boost\ function $\minboost_T$ for the chosen t-norm. This procedure gives us a \boosted\ vector, where each value corresponds to the \boosted\ value of a subformula. ILR then goes in recursion on the subformulas. Note that the pseudocode for disjunction and implication is analogous.  

One choice in ILR is how to combine the results from different subformulas. When a proposition appears in multiple subformulas, it can be assigned multiple different \boosted\ values. We found the heuristic in line \ref{alg-line:combine} generally works well, which takes the $\boosts_j$ with the largest absolute value. We also explored averaging the different \boosted\ values, but this took significantly longer to converge. Another choice is the convergence criterion. A simple option is to stop running the algorithm whenever it has stopped getting closer to the \boosted\ value for a couple of iterations. In our experiments, we observed that ILR monotonically decreases the distance to the \boosted\ value, after which it gets stuck either on a single local optimum or oscillates between two local minima.

Moreover, we experimented with a scheduling mechanism to smooth the behavior of ILR. We implement this in line \ref{alg-line:scheduling}. The scheduling mechanism works by choosing a different \boosted\ value at each iteration: The difference between the current truth value and the \boosted\ value is multiplied by a scheduling parameter $\alpha$, which we choose to be either 0.1 or 1 (no scheduling). While usually not necessary, for some formulas, the scheduling mechanism allowed for finding better solutions.

ILR is not guaranteed to find a \boosted\ vector $\boostv$ such that $\op_\varphi(\boostv)=\revis{\varphi}$. This is easy to see theoretically because, for many fuzzy logics like the product and G\"{o}del logics, $\revis{\varphi}=1$ corresponds to the PMaxSAT problem, which is NP-complete \cite{diligentiSemanticbasedRegularizationLearning2017,giunchigliaROADRAutonomousDriving2022}, while ILR has linear time complexity. 
However, this is traded off by 1) being highly efficient, usually requiring only a couple of iterations for convergence, and 2) not having any hyperparameters to tune, except arguably for the combination function. Furthermore, ILR usually converges quickly in neuro-symbolic settings since background knowledge is very structured, and the solution space is relatively dense. These settings are unlike the randomly generated SAT problems we study in Section \ref{sec:results-sat}. These contain little structure the ILR algorithm can exploit. 


\begin{algorithm}
\caption{Iterative Local Refinement}\label{alg:main}
\begin{algorithmic}[1]
    \Require{$\varphi, \revis{\varphi}, \truth$, $\alpha\in (0, 1]$}
    \State $\truth' \gets \truth$
    \While{not converged}
        \State $\truth_{\mathsf{sub}}\gets \{\}$
        \For{subformula $\phi$ of $\varphi$}
            \State $\truth_{\mathsf{sub}}[\phi]\gets \op_\phi(\truth')$ \Comment{Forward pass using Definition \ref{def:evaluation}}
        \EndFor
        \State $\revis{\varphi}' \gets f_\varphi(\truth) + (\revis{\varphi} - f_\varphi(\truth))\alpha$ \label{alg-line:scheduling}
        \State $\truth' \gets$ \Call{Backward}{$\varphi$, $\revis{\varphi}'$, $\truth_{\mathsf{sub}}$}
    \EndWhile
    \State \Return $\truth'$
    \Function{Backward}{$P_i$, $\revis{P_i}$, $\truth_{\mathsf{sub}}$}
        \State \Return $[\truths_1, \dots, \revis{P_i}, \dots, \truths_n]^\top$ \Comment{$\truth$ except at position $i$.}
    \EndFunction
    \Function{Backward}{$\neg \phi$, $\revis{\neg \phi}$, $\truth_{\mathsf{sub}}$}
        \State \Return \Call{Backward}{$\phi$, $1-\revis{\neg \phi}$, $\truth_{\mathsf{sub}}$}
    \EndFunction
    \Function{Backward}{$\bigwedge_{i=1}^m\phi_i$, $\revis{\varphi}$, $\truth_{\mathsf{sub}}$} \label{alg-line:backward-t-norm-start}
        \State $\boostv_{\wedge}\gets \minboost_T(\left[\truth_{\mathsf{sub}}[\phi_1], ..., \truth_{\mathsf{sub}}[\phi_m]\right]^ \top, \revis{\varphi})$ \Comment{Minimal \boost\ function}
        \State $\boostv \gets \boldsymbol{0}$
        \For{$i\gets 1$ to $m$}
            \State $\boostv' \gets$ \Call{Backward}{$\phi_i$, $\boosts_{\wedge, i}$, $\truth_{\mathsf{sub}}$}
            \State $\boosts_j \gets$ \algorithmicif\ $\vert \boosts_j \vert > \vert \boosts'_j \vert$ \algorithmicthen\ $\boosts_j$ \algorithmicelse\ $\boosts' _j$ for all $j\in \{1, ..., n\}$ \label{alg-line:combine}
        \EndFor
        \State \Return $\boostv$ \label{alg-line:backward-t-norm-end}
    \EndFunction
\end{algorithmic}
\end{algorithm}

\section{Neuro-Symbolic AI using ILR}
\label{sec:neuro-symbolic}
The ILR algorithm can be added as a module after a neural network $g$ to create a neuro-symbolic AI model. The neural network predicts (possibly some of) the initial truth values $\truth$. Since both the forward and backward passes of ILR are differentiable computations, we can treat ILR as a constrained output layer \cite{giunchigliaDeepLearningLogical2022}. For instance, in Figure~\ref{fig:ILR}, the input $\truth$ could be generated by the neural network, and we provide supervision directly on the predictions $\boostv$. ILR ensures the predictions, i.e., the \boosted\ vector $\boostv$, satisfy the background knowledge while staying close to the original predictions made by the neural network. Loss functions like cross-entropy can use $\boostv$ as the prediction. We train the neural network $g$ by minimizing the loss function with gradient descent and backpropagating through the ILR layer. 

One strength of ILR is the flexibility of the \boost\ values $\revis{\varphi_i}$ for each formula $\varphi_i$. These can be set to 1 to treat $\varphi_i$ as a hard constraint that always needs to be satisfied. Alternatively, \boost\ values can be trained as part of a larger deep learning model. Since ILR is a differentiable layer, we can compute gradients of the \boost\ values. This procedure allows ILR to learn what formulas are useful for prediction. For instance, in Figure~\ref{fig:ILR}, $\revis{\lnot A \land (B \lor C)}$ can either be given or act as a parameter of the model that is learned together with the neural network parameters.

We give an example of the integration of ILR with a neural network in Figure~\ref{fig:ILR_MNIST}, where we use ILR for the MNIST Addition task proposed by~\cite{manhaeveDeepProbLogNeuralProbabilistic2018}. In this task, we have access to a training set composed of triplets $(x, y, z)$, where $x$ and $y$ are images of MNIST~\cite{lecunMNISTHandwrittenDigit2010} handwritten digits, and $z$ is a label representing an integer in the range $\{0,...,18\}$, corresponding to the sum of the digits represented by $x$ and $y$. The task consists of learning the addition function and a classifier for the MNIST digits, with supervision only on the sums. To achieve this, knowledge consisting of the rules of addition is given. For instance, the rule
$
Is(x, 3) \land Is(y, 2) \to Is(x+y, 5)
$ states that the sum of 3 and 2 is 5.

The architecture of Figure~\ref{fig:ILR_MNIST} consists of a neural network (a CNN) that performs digit recognition on the inputs $x$ and $y$. After this step, ILR predicts a truth value for each possible sum. Notice that we define the CNN outputs $\boldsymbol{C}_x,\boldsymbol{C}_y\in[0, 1]^{10}$ as constants, i.e., ILR does not change the predictions of the digits. Moreover, the initial prediction for the truth vector of possible sums $\truth_{x+y}\in[0, 1]^{19}$ is the zero vector. This allows ILR to act as a proof-based method. Indeed, similarly to DeepProbLog~\cite{manhaeveDeepProbLogNeuralProbabilistic2018}, the architecture proposed in Figure~\ref{fig:ILR_MNIST} uses the knowledge in combination with the predictions of the neural network to derive truth values for new statements (the sum of the two digits). We apply the loss function to the final predictions $\hat{\truth}_{x+y}$. During learning, the error is back-propagated through the entire model, reaching the CNN, which learns to classify the MNIST images from indirect supervision.

We present the results obtained by ILR in Section~\ref{sec:MNIST_exp}, and compare its performance with other neuro-symbolic AI frameworks.

\begin{figure}
    \includegraphics[width=\linewidth]{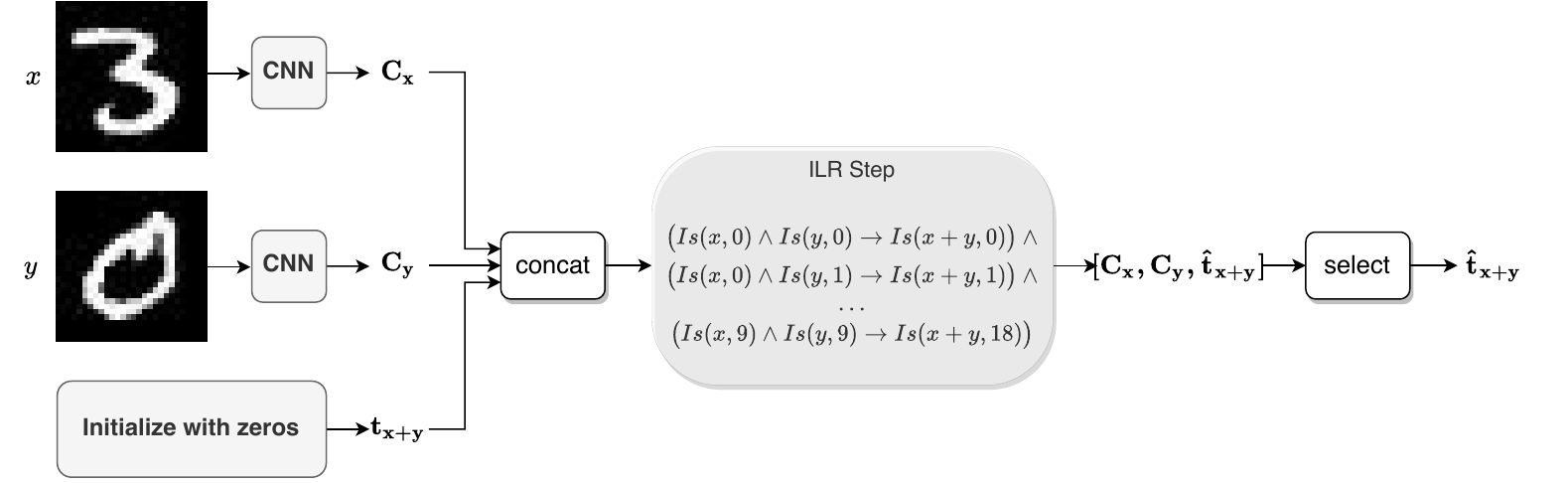}
    \caption{Neuro-symbolic architecture based on ILR for the MNIST Addition task. A CNN takes in input two images of MNIST digits, returning their classification. The predictions of the CNN are concatenated together with a vector of zeros, representing the initial prediction for the Addition task. We perform an ILR step to update the sum of the two numbers, which is the final output of the model.}
    \label{fig:ILR_MNIST}
\end{figure}

\section{Analytical minimal \boost\ functions}
\label{sec:general-analysis}
Having introduced the ILR algorithm, we next study the problem of finding minimal \boost\ functions for individual fuzzy operators. We need these in closed form to compute the ILR algorithm, as ILR uses them during the backward pass. This section first discusses several transformations of minimal \boost\ functions and gives the minimal \boost\ functions of the basic t-norms G\"{o}del, \luk\, and product. In Section \ref{sec:general-analysis} we investigate a large class of t-norms for which we have closed-form formulas for the minimal \boost\ functions.
\subsection{General results}
We first provide several basic results on minimal \boost\ functions for fuzzy operators. In particular, we will consider formulas such as $\varphi=\bigwedge_{i=1}^n P_i \bigwedge_{i=1}^m C_i$, that is, conjunctions of propositions and constants. 
As abuse of notation, from here on, we will refer to $\revismin{\varphi}$ and $\revismax{\varphi}$ when evaluated by the t-norm $T$ as $\revismin{T}$ and $\revismax{T}$ and will do so also for other fuzzy operators.
We find using Definition \ref{deff:tnorm} that for some t-norm $T$,  $\revismin{T} = 0$ and $\revismax{T} = T(\truthc)$, where $\truthc$ is the  values of the constants $C_1, ..., C_m$ as a truth vector, while for some t-conorm $S$, $\revismin{S} = S(\truthc)$ and $\revismax{S}=1$. Note that for $m=0$, $\revismax{T}=1$ and $\revismin{S}=0$. Next, we find some useful transformations of minimal \boost\ functions to derive new results:

\begin{proposition}
    \label{prop:dual-t-conorm}
    Consider the formulas $\phi=\bigwedge_{i=1}^n P_i\bigwedge_{i=1}^m C_i$ and $\psi= \neg (\bigvee_{i=1}^n P_i \bigvee_{i=1}^m C_i)$. Assume $\minboost_\phi$ is a minimal \boost\ function for $\op_\phi$ evaluated using t-norm $T$. Consider $\op_{\psi}(\truth)$ evaluated using dual t-conorm $S$ of $T$. Then $\minboost_{\psi}(\truth, \revis{\psi})=\boldsymbol{1}-\minboost_{\phi}(\boldsymbol{1}-\truth, \revis{\psi})$ is a minimal \boost\ function for $\op_{\psi}$.
\end{proposition}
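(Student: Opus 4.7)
The plan is to reduce the minimality claim for $\psi$ to the minimality assumption for $\phi$ via the bijection $\sigma:[0,1]^n\to[0,1]^n$ given by $\sigma(\truth)=\mathbf{1}-\truth$, exploiting De Morgan duality together with the invariance of any norm under coordinate sign flips. Concretely, I would establish three successive claims and then close by a short contradiction argument.

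First, I would prove the identity $\op_\psi(\truth)=\op_\phi(\mathbf{1}-\truth)$ for all $\truth\in[0,1]^n$. Unfolding Definition~\ref{def:evaluation} gives $\op_\psi(\truth)=1-S(\truths_1,\dots,\truths_n,\truthc_1,\dots,\truthc_m)$, and substituting the dual relation $S(a_1,a_2)=1-T(1-a_1,1-a_2)$ (extended inductively via the any-arity extension $T(\truth)=T(\truths_1,T(\truth_{2:n}))$) collapses this to $T(1-\truths_1,\dots,1-\truths_n,\,\cdot\,,\dots,\,\cdot\,)$, which matches $\op_\phi(\mathbf{1}-\truth)$ (up to the implicit identification of the constants, which I would flag as an abuse of notation in the statement). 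An immediate consequence is that $\op_\psi=\op_\phi\circ\sigma$, and since $\sigma$ is a bijection of $[0,1]^n$, the ranges coincide, so $\revismin{\psi}=\revismin{\phi}$ and $\revismax{\psi}=\revismax{\phi}$. This already shows that the three clauses of the definition of a \boost\ function for $\psi$ follow from the corresponding clauses for $\phi$ after applying $\sigma$: for $\revis{\psi}\in[\revismin{\psi},\revismax{\psi}]$, $\op_\psi(\mathbf{1}-\minboost_\phi(\mathbf{1}-\truth,\revis{\psi}))=\op_\phi(\minboost_\phi(\mathbf{1}-\truth,\revis{\psi}))=\revis{\psi}$, and the out-of-range clauses transport directly.

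Second, I would argue minimality. For any norm $\|\cdot\|$, reflection through $\mathbf{1}$ is an isometry: $\|\mathbf{1}-\boostv-\truth\|=\|-(\boostv-(\mathbf{1}-\truth))\|=\|\boostv-(\mathbf{1}-\truth)\|$. Setting $\boostv=\minboost_\phi(\mathbf{1}-\truth,\revis{\psi})$ yields
\begin{equation*}
\|\minboost_\psi(\truth,\revis{\psi})-\truth\|=\|\minboost_\phi(\mathbf{1}-\truth,\revis{\psi})-(\mathbf{1}-\truth)\|.
\end{equation*}
Suppose, for contradiction, that some \boosted\ vector $\boostv'$ for $\revis{\psi}$ under $\op_\psi$ satisfied $\|\boostv'-\truth\|<\|\minboost_\psi(\truth,\revis{\psi})-\truth\|$. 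Then $\mathbf{1}-\boostv'\in[0,1]^n$ is a \boosted\ vector for $\revis{\psi}$ under $\op_\phi$ (by the identity from the first step), and the same isometry gives $\|(\mathbf{1}-\boostv')-(\mathbf{1}-\truth)\|=\|\boostv'-\truth\|$, so $\mathbf{1}-\boostv'$ would beat $\minboost_\phi(\mathbf{1}-\truth,\revis{\psi})$ at input $\mathbf{1}-\truth$, contradicting minimality of $\minboost_\phi$.

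The only real subtlety is the role of the constants. The statement writes the same symbols $C_i$ on both sides, but the De Morgan step forces the constants appearing in $\op_\phi$ to be $1-C_i$ (or, symmetrically, one should read the constants in $\phi$ as placeholders whose values are taken to be $1-C_i$). I would note this explicitly at the top of the proof so that the identity $\op_\psi(\truth)=\op_\phi(\mathbf{1}-\truth)$ holds cleanly; once that identification is in place, the rest of the argument is essentially a symmetry argument and is the most straightforward piece. The main place where care is needed is checking that $\minboost_\psi$ as defined respects the out-of-range clauses (2) and (3) of the \boost\ function definition, but this again follows from $\revismin{\psi}=\revismin{\phi}$, $\revismax{\psi}=\revismax{\phi}$ and the corresponding clauses for $\minboost_\phi$.
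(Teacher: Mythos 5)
Your proposal is correct and follows essentially the same route as the paper's proof: De Morgan duality to get $\op_\psi(\truth)=T(\boldsymbol{1}-\truth,\boldsymbol{1}-\truthc)$, the substitution $\truth'=\boldsymbol{1}-\truth$, and the verification that the reflected vector attains $\revis{\psi}$. In fact you are more thorough than the paper, which only checks the boosted-vector identity and leaves the norm-invariance/minimality step and the treatment of the constants implicit, whereas you spell both out.
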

\begin{proof}
    First, note $\op_{\psi}(\truth) = 1-S(\truth, \truthc) = 1-(1-T(\boldsymbol{1}-\truth, \boldsymbol{1}-
\truthc))=T(\boldsymbol{1}-\truth, \boldsymbol{1}-\truthc)$. Consider $\truth'=\boldsymbol{1}-\truth$. By the assumption of the proposition, $\minboost_\phi(\truth', \revis{\phi})$ is a minimal \boost\ function for $T(\truth', \boldsymbol{1}-\truthc)=T(\boldsymbol{1}-\truth, \boldsymbol{1}-\truthc)=\op_\psi(\truth)$. Furthermore, note that 
    \begin{align*}
        \op_\psi(\minboost_{\psi}(\truth, \revis{\psi}))=T(\boldsymbol{1}-\minboost_{\psi}(\truth, \revis{\psi}), \boldsymbol{1}-\truthc) =T(\minboost_\phi(\truth', \revis{\psi}), \boldsymbol{1}-\truthc) = \revis{\psi}
    \end{align*}
\end{proof}
An analogous argument can be made for $\phi'=\bigvee_{i=1}^n P_i\bigvee_{i=1}^m C_i$ and $\psi=\neg(\bigwedge_{i=1}^n P_i\bigwedge_{i=1}^m C_i)$ to show that, given minimal \boost\ function $\minboost_{\phi'}$ of dual t-conorm $S$, the minimal \boost\ function for $\op_{\psi}(\truth)$ is $\minboost_{\psi}(\truth, \revis{\psi})=\boldsymbol{1}-\minboost_{\phi}(\boldsymbol{1}-\truth, \revis{\psi})$.

We will use this result to simplify the process of finding minimal \boost\ functions for the t-norms and dual t-conorms. For example, assume we have a minimal \boost\ function $\minboost_T$ for $\revis{T}\in [T(\truth), \revismax{T}]$. Let $S$ be the corresponding dual t-conorm. Then, we can change the constraint $S(\boostv, \truthc)=\revis{S}$ in Equation \ref{eq:optim-problem} to the equivalent constraint $\boldsymbol{1}-S(\boostv, \truthc)=\boldsymbol{1}-\revis{S}$. We then use Proposition \ref{prop:dual-t-conorm} to find the minimal \boosted\ vector for $\revis{S}\in[\revismin{S}, S(\truth)]$ as $\boldsymbol{1}-\minboost_T(\boldsymbol{1}-\truth, 1-\revis{S})$. \todo{What if $1-\weight \not \in [\weightmin, \weightmax]$? Then this is technically not defined...}

\begin{proposition}
    \label{prop:s-implication}
    Consider the formulas $\phi= P_1 \vee P_2$ and $\psi= \neg P_1 \vee P_2$. Assume $\minboost_\phi$ is a minimal \boost\ function for $\op_\phi$ evaluated using the t-conorm $S$, and define $\truth'=[1-\truths_1, \truths_2]$. Then $\minboost_{\psi}(\truth, \revis{\psi})=\left[1-\minboost_{\phi}(\truth', \revis{\psi})_1, \minboost_{\phi}(\truth', \revis{\psi})_2\right]^\top$ is a minimal \boost\ function for $\op_{\psi}$.
\end{proposition}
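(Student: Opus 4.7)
The plan is to mimic the structure of the proof of Proposition \ref{prop:dual-t-conorm}, exploiting the fact that negating a single argument of a t-conorm is reflected by a componentwise flip of the corresponding coordinate of the truth vector. Concretely, I would start by observing the identity
\begin{equation*}
    f_\psi(\truth) = S(1-\truths_1, \truths_2) = S(\truth'_1, \truth'_2) = f_\phi(\truth'),
\end{equation*}
so the optimization problem defining $\minboost_\psi$ at $(\truth, \revis{\psi})$ is, up to a coordinate change in the first variable, the same problem that defines $\minboost_\phi$ at $(\truth', \revis{\psi})$.

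Next I would verify that the proposed $\minboost_\psi(\truth, \revis{\psi}) = [1-\minboost_\phi(\truth', \revis{\psi})_1,\ \minboost_\phi(\truth', \revis{\psi})_2]^\top$ is a \boosted\ vector for $\revis{\psi}$. Writing $\boostv' = \minboost_\phi(\truth', \revis{\psi})$ and plugging into $f_\psi$, we get $f_\psi(\minboost_\psi(\truth, \revis{\psi})) = S\bigl(1-(1-\boosts'_1),\ \boosts'_2\bigr) = S(\boosts'_1, \boosts'_2) = \revis{\psi}$, where the last equality uses that $\boostv'$ refines $f_\phi$ (which equals $f_\psi$ after the coordinate flip) to $\revis{\psi}$. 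I would also briefly check that the $\revismin{\psi}, \revismax{\psi}$ range coincides with $\revismin{\phi}, \revismax{\phi}$ so that condition 1 of the definition of a \boost\ function applies in the same range (and conditions 2 and 3 are then inherited automatically from $\minboost_\phi$).

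For minimality, the key observation is that for any norm invariant under componentwise sign flips (in particular every $\ell_p$ norm), the coordinate change $v \mapsto (1-v_1, v_2)$ is a distance-preserving bijection between $[0,1]^2$ and itself that sends $\truth$ to $\truth'$. Explicitly,
\begin{equation*}
    \|\minboost_\psi(\truth,\revis{\psi}) - \truth\| = \bigl\|[-(\boosts'_1 - \truth'_1),\ \boosts'_2 - \truth'_2]\bigr\| = \|\minboost_\phi(\truth', \revis{\psi}) - \truth'\|.
\end{equation*}
The argument for minimality is then by contradiction: if some $\boostv''$ were a \boosted\ vector for $f_\psi$ at $(\truth, \revis{\psi})$ with strictly smaller distance to $\truth$, then $[1-\boosts''_1, \boosts''_2]^\top$ would be a \boosted\ vector for $f_\phi$ at $(\truth', \revis{\psi})$ with strictly smaller distance to $\truth'$, contradicting minimality of $\minboost_\phi$.

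The main obstacle, or at least the point where a little care is needed, is justifying that the norm is preserved under the componentwise flip; the paper implicitly works with standard norms on $[0,1]^n$ where this is obvious, but I would state this assumption explicitly. Everything else reduces to algebraic manipulation analogous to the dual t-conorm proposition above.
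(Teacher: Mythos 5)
Your proposal is correct and follows essentially the same route as the paper's proof: both rest on the identity $f_\psi(\truth)=S(1-\truths_1,\truths_2)=f_\phi(\truth')$ and the algebraic verification that the flipped vector evaluates to $\revis{\psi}$. In fact your write-up is more complete than the paper's, which leaves the minimality step implicit, whereas you spell out that the coordinate flip is a distance-preserving bijection (for sign-flip-invariant norms such as the $\ell_p$ norms) and run the contradiction argument explicitly.
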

\begin{proof}
    First, note $\op_{\psi}(\truth) = S(1-\truth_1, \truth_2)$. By the assumption of the proposition, $\minboost_\phi(\truth', \revis{\psi})$ is a minimal \boost\ function for $S(\truth')=\op_\psi(\truth)$. Furthermore, note that 
    \begin{align*}
        \op_\psi(\minboost_{\psi}(\truth, \revis{\psi})) &= S(1-\minboost_\psi(\truth', \revis{\psi})_1, \minboost_\psi(\truth', \revis{\psi})_2)   \\
        &= S(1-(1-\minboost_\phi(\truth', \revis{\psi})_1), \minboost_\phi(\truth', \revis{\psi})_2)  = S(\minboost_\phi(\truth', \revis{\psi})) = \revis{\psi}.
    \end{align*}
\end{proof}
Similar to the previous proposition, this proposition gives us a simple procedure for finding the minimal \boost\ functions for the S-implication of some t-conorm.

\subsection{Basic T-norms}
\label{sec:basic-t-norm}
In this section, we introduce the minimal \boost\ functions for the t-norms and t-conorms of the three main fuzzy logics (G\"{o}del, \luk, and Product). In particular, we consider when these t-norms and t-conorms can act on both propositions and constants, that is, $\varphi=\bigwedge_{i=1}^n \truths_i \bigwedge_{i=1}^m C_i$, which is evaluated with $T(\truth, \truthc)$. We present the main results with simple examples. 

\subsubsection{G\"{o}del t-norm}
\label{sec:godel-t-norm}
In this section, we derive minimal \boost\ functions for the G\"{o}del t-norm and t-conorm for the family of $p$-norms.

\begin{proposition}
    The minimal \boost\ function of the G\"{o}del t-norm for $\revis{T_G}\in [0, \min_{i=1}^m C_i]$ is
\begin{equation}
    \minboost_{T_G}(\truth, \revis{T_G})_i=\begin{cases}
        \revis{T_G}  & \text{if } \revis{T_G} \geq T_G(\truth) \text { and } \truths_i < \revis{T_G}, \\
        \revis{T_G}  & \text{if } \revis{T_G} < T_G(\truth) \text { and } i=\arg\min_{j=1}^n \truths_j, \\
        \truths_i & \text {otherwise,}
    \end{cases}
\end{equation} 
The minimal \boost\ function of the G\"{o}del t-conorm and $\revis{S_G} \in [\max_{i=1}^m C_i, 1]$ is 
\begin{equation}
    \minboost_{S_G}(\truth, \revis{S_G})_i=\begin{cases}
        \revis{S_G} & \text{if } \revis{S_G} \geq S_G(\truth) \text { and } i=\arg\max_{j=1}^m \truths_j, \\
        \revis{S_G} & \text{if } \revis{S_G} < S_G(\truth) \text { and } \truths_i > \revis{S_G}, \\
        0 & \text {otherwise.}
    \end{cases}
\end{equation} 
\end{proposition}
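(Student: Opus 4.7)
The plan is to solve the constrained optimization problem in Equation \ref{eq:optim-problem} directly. Because $\revis{T_G}\in[0, \min_j C_j]$, the constraint $T_G(\boostv, \truthc) = \revis{T_G}$ collapses to $\min_i \boosts_i = \revis{T_G}$; equivalently, $\boosts_i \geq \revis{T_G}$ for every $i$ with equality attained for at least one index. For any $p$-norm the objective $\sum_i |\boosts_i - \truths_i|^p$ decouples across coordinates once this bound is imposed, so I would split the analysis on the sign of $\revis{T_G} - T_G(\truth)$.

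In the regime $\revis{T_G}\geq T_G(\truth)$, the per-coordinate minimizer of $|\boosts_i-\truths_i|^p$ subject to $\boosts_i\geq \revis{T_G}$ is $\max(\truths_i, \revis{T_G})$, reproducing the first branch of the formula. The ``attain equality somewhere'' requirement is met automatically: the index realizing $\min_j \truths_j$ is either already at $\revis{T_G}$ or raised exactly to $\revis{T_G}$. In the regime $\revis{T_G} < T_G(\truth)$, every coordinate strictly exceeds $\revis{T_G}$, so feasibility forces at least one coordinate down to $\revis{T_G}$. Lowering a single index $i^{*}$ while leaving the others untouched is feasible and costs $(\truths_{i^{*}} - \revis{T_G})^p$; any additional modification only adds nonnegative terms, so the optimum is achieved at $i^{*} = \arg\min_j \truths_j$, which is the second branch.

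For the G\"odel t-conorm I would invoke Proposition \ref{prop:dual-t-conorm} (in the $\bigvee/\neg\bigwedge$ direction noted immediately after it), using that $\max$ is the dual of $\min$. Substituting $\truth \mapsto \boldsymbol{1}-\truth$ in the already-proved t-norm formula swaps $\arg\min$ with $\arg\max$ and reverses the two inequality conditions, producing the claimed formula for $\minboost_{S_G}$; the range translates correctly since $\min_j(1-C_j) = 1-\max_j C_j$.

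The main obstacle is justifying the decoupling argument and handling non-uniqueness cleanly. At the boundary $\revis{T_G} = T_G(\truth)$ the two branches agree, and whenever $\arg\min_j \truths_j$ is realized by several indices, any choice yields the same optimal cost, so the minimal \boost\ function is well-defined up to representative. The argument is tight for $p\in[1,\infty)$; for $p=\infty$ the minimizer is non-unique but the stated formula still produces one of the optimal vectors, which is all the definition of minimality requires.
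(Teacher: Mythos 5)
Your proof is correct and rests on the same two pillars as the paper's: an elementary coordinate-wise optimality argument for the G\"{o}del t-norm (the paper's Proposition \ref{prop:godel-t-norm} establishes the increasing case by contradiction, using that $\|\cdot\|_p$ is strictly convex in each coordinate with minimum at $\truth$ --- the contrapositive of your decoupling argument) and Proposition \ref{prop:dual-t-conorm} to transfer results between the t-norm and its dual t-conorm. The only substantive difference is in which cases are taken as primitive. The paper proves the \emph{increasing} direction for both the t-norm (Appendix \ref{appendix:godel-t-norm}) and the t-conorm (Appendix \ref{appendix:godel-t-conorm}, where the result is stated without proof and credited to the KENN paper) and then obtains the two decreasing directions by duality; you instead prove both directions for the t-norm directly and dualize once to get both directions for the t-conorm. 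Your decomposition is slightly more self-contained, since your direct lower bound for the decreasing t-norm case --- any feasible $\boostv$ must pin some coordinate $k$ to $\revis{T_G}$ at cost $(\truths_k-\revis{T_G})^p\geq(\min_{j}\truths_j-\revis{T_G})^p$ --- replaces the appeal to the t-conorm base case that the paper leaves unproven. Incidentally, your duality computation produces $\truths_i$ in the \say{otherwise} branch of $\minboost_{S_G}$, which agrees with Proposition \ref{prop:godel-t-conorm} and indicates that the $0$ printed in the statement of the t-conorm case is a typo.
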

\begin{proof}
    Follows from Propositions \ref{prop:dual-t-conorm}, \ref{prop:godel-t-norm} and \ref{prop:godel-t-conorm}, see Appendix \ref{appendix:godel-t-norm} and \ref{appendix:godel-t-conorm}.
\end{proof}

\begin{proposition}
    \label{prop:godel-impl}
    A minimal \boost\ function of the G\"{o}del implication $R_G(\truths_1, \truths_2)=\begin{cases}\truths_2 & \text{if } \truths_1 > \truths_2, \\ 1 & \text{otherwise.}\end{cases}$ for $\revis{R_G}\in[\revismin{R_G}, \revismax{R_G}]$ is 
    \begin{equation}
            \minboost_{R_G}(\truths_1, \truths_2, \revis{R_G}) = \begin{cases}
                [\max(\revis{R_G} + \epsilon, \truths_1), \revis{R_G}]^\top  & \text{if } \revis{R_G} < 1 \\
                [\truths_1, \max(\truths_1, \truths_2)]^\top & \text{otherwise.} 
            \end{cases}
    \end{equation}
    where $\epsilon$ is an arbitrarily small positive number.
\end{proposition}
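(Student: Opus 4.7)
The plan is to split on whether $\revis{R_G} = 1$ or $\revis{R_G} < 1$, mirroring the piecewise definition of $R_G$. I would first characterize the feasible set $F = \{(\boosts_1, \boosts_2) \in [0,1]^2 : R_G(\boosts_1, \boosts_2) = \revis{R_G}\}$. By the definition of $R_G$, when $\revis{R_G} = 1$ this is the closed half-plane $\{\boosts_1 \le \boosts_2\}$, while when $\revis{R_G} < 1$ feasibility simultaneously requires $\boosts_1 > \boosts_2$ and forces $\boosts_2 = \revis{R_G}$, so $F$ is the half-open ray $\{(\boosts_1, \revis{R_G}) : \boosts_1 > \revis{R_G}\}$. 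These two descriptions pin down the form of the refinement in each branch of the piecewise formula.

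For $\revis{R_G} < 1$, since $\boosts_2$ is already determined by feasibility to equal $\revis{R_G}$, the optimization reduces to minimizing $|\boosts_1 - \truths_1|$ over the one-dimensional constraint $\boosts_1 > \revis{R_G}$. If $\truths_1 > \revis{R_G}$, the unconstrained optimum $\boosts_1 = \truths_1$ is already feasible; otherwise the infimum is attained only in the limit $\boosts_1 \searrow \revis{R_G}$. Combining the two sub-cases gives exactly $\boosts_1 = \max(\revis{R_G} + \epsilon, \truths_1)$, where $\epsilon$ formalizes the need to approach the open boundary without crossing it.

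For $\revis{R_G} = 1$, the feasible set is closed and convex. If $\truths_1 \le \truths_2$, the pair $(\truths_1, \truths_2)$ already lies in $F$, so it is trivially self-optimal, matching $[\truths_1, \max(\truths_1, \truths_2)]^\top = [\truths_1, \truths_2]^\top$. If $\truths_1 > \truths_2$, I would argue that in the $L^1$ norm the distance from $(\truths_1, \truths_2)$ to $F$ is exactly $\truths_1 - \truths_2$, since crossing the diagonal $\boosts_1 = \boosts_2$ requires at least that much total coordinate change, and then verify that $[\truths_1, \truths_1]^\top$ attains this bound while lying on the boundary of $F$.

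The main obstacle is the open-boundary issue for $\revis{R_G} < 1$: strictly speaking no true minimum exists, and the $\epsilon$ notation is a workaround for this non-attainment caused by the discontinuity of $R_G$ across the diagonal. A secondary subtlety is that the proposition says \emph{a} minimal refinement rather than \emph{the}: for $\revis{R_G} = 1$ with $\truths_1 > \truths_2$ the $L^1$ minimizer is non-unique (any point on the diagonal segment from $(\truths_2, \truths_2)$ to $(\truths_1, \truths_1)$ achieves the same $L^1$ cost), so $[\truths_1, \truths_1]^\top$ is one distinguished representative that leaves $\boosts_1$ unchanged; I would be explicit about which norm is being used to verify minimality, since under $L^2$ the $\revis{R_G} = 1$ case would instead single out the perpendicular projection onto the diagonal.
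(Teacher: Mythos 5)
Your proposal follows essentially the same route as the paper's own (informal) proof: split on $\revis{R_G}<1$ versus $\revis{R_G}=1$, read the feasible set directly off the definition of $R_G$, and observe that the first branch forces $\boosts_2=\revis{R_G}$ with $\boosts_1$ strictly above it (hence the $\epsilon$), while the second reduces to reaching the half-plane $\boosts_1\le\boosts_2$. Your treatment is in fact slightly more careful than the paper's sketch, which only remarks that "any value between $\truths_2$ and $\truths_1$ is minimal" without flagging, as you do, that this non-uniqueness and the minimality of $[\truths_1,\truths_1]^\top$ are specific to the $L^1$ norm.
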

The proof is in Appendix \ref{prop:godel-impl}.

\begin{figure}
    \includegraphics[width=\linewidth]{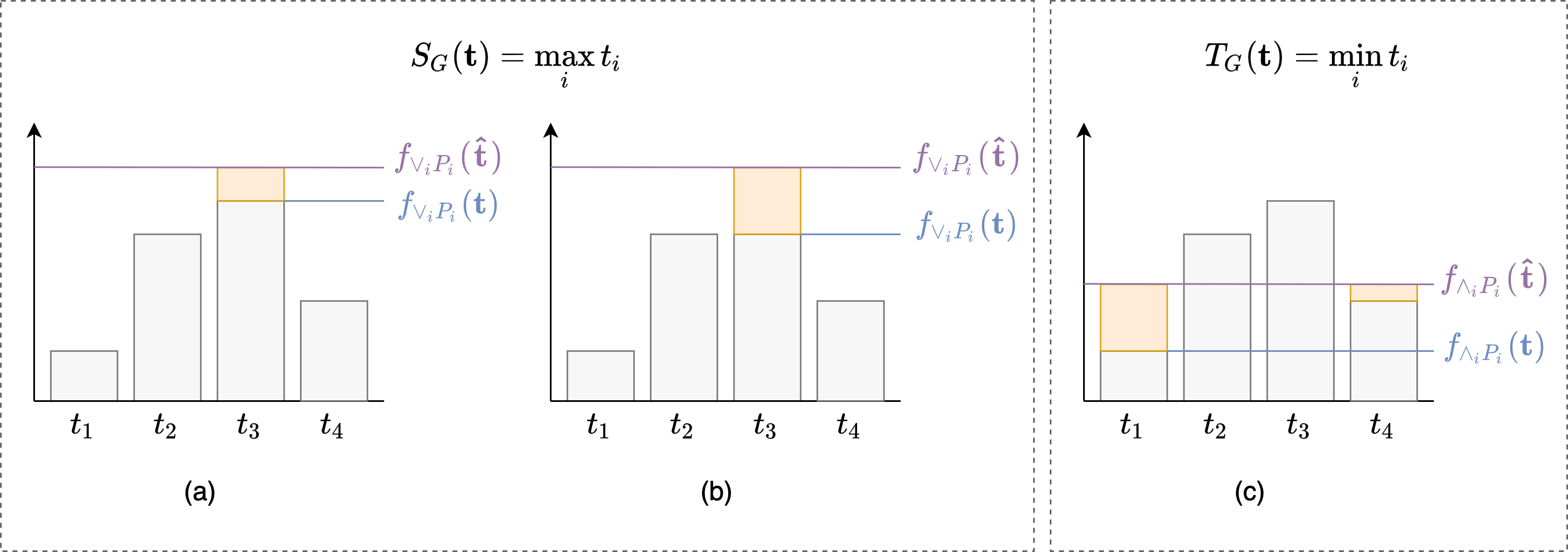}
    \caption{G\"{o}del minimal \boost\ functions. The grey bars represent the initial truth vectors $\truth$; the light blue and purple lines indicate the initial truth value of the formula and the revision value $\revis{\varphi}$, and the orange bars are the corresponding minimal \boosted\ vectors. (a) t-conorm; (b) t-conorm with two literals with same truth value; (c) t-norm.}
    \label{fig:Godel}
\end{figure}

The bar plot in Figure~\ref{fig:Godel}(a) shows an example for the G\"{o}del t-conorm with four literals. The minimal \boosted\ vector is represented with the orange boxes, while the initial and \boost\ values of the entire formula are represented as a blue and purple line respectively. \ale{Here, our goal is to increase the value of the t-conorm, i.e., the maximum value. Increasing other literals up to $\revis{\varphi}$ would require longer orange bars and bigger values for the L$_p$ norm. Figure~\ref{fig:Godel}(b) represents when multiple literals have the largest truth value. Here, only one should be increased\footnote{\ale{In our experiments, we choose randomly.}}. Finally, Figure~\ref{fig:Godel}(c) shows the \boosted\ vector for the G\"{o}del t-norm. Since the smallest truth value should be at least $\revis{\varphi}$, we simply ensure all truth values are at least $\revis{\varphi}$.}

Our results are closely related to that of \cite{giunchigliaMultiLabelClassificationNeural2021}, which considers hard constraints, i.e., $\revis{\varphi}=1$. In the hierarchical multi-label classification setting, the authors introduce an output layer that ensures predictions satisfy a set of hierarchy constraints. This layer corresponds to applications of the minimal \boost\ function for the G\"{o}del implication with $\revis{R_G}=1$. Furthermore, \cite{giunchigliaMultiLabelClassificationNeural2021} introduces C-HMCNN(h). This method considers an output layer that ensures predictions satisfy background knowledge expressed in a stratified normal logic program. The authors introduce an iterative algorithm that computes the minimal solution for such programs. This algorithm is related to that of ILR in Section \ref{sec:ILR}. However, their formalization differs somewhat from ours, and future work could study whether these results also hold for our formalization of minimal \boost\ functions and if they can be extended to any value of $\revis{\varphi}$. Finally, \cite{giunchigliaMultiLabelClassificationNeural2021} introduces a loss function that compensates for gradient bias introduced by the constrained output layer. 

\subsubsection{\luk\ t-norm}
\label{seq:lukasiewicz}
In this section, we derive minimal \boost\ functions for the \luk\ t-norm and t-conorm, for the family of $p$-norms. We will start using the following notation here: $\truth^\uparrow$ refers to the truth values $\truths_i$ sorted in ascending order, while $\truth^\downarrow$ refers to the truth values sorted in descending order.


\begin{proposition}
 Let $\revis{T_L}\in[0, \max(\|\truthc\|_1 - (m - 1), 0)]$ and define $\lukincrease_K=\frac{\revis{T_L}+ m + K -1-\|\truthc\|_1 - \sum_{i=1}^ K\truths^\uparrow_{i}}{K}$. Let $K^ *$ be the largest integer $1\leq K\leq n$ such that $\lukincrease_{K}<1-\truths^ \uparrow_{K}$. Then the minimal \boost\ vector of the \luk\ t-norm is 
\begin{equation}
    \minboost_{T_L}(\truth, \revis{T_L})_i=\begin{cases}
        \truths_i + \lukincrease_{K^*} & \text{if } \revis{T_L} > T_L(\truth) \text{ and } \truths_i \leq \truths^\uparrow_{K^*}, \\
        1 & \text{if } \revis{T_L} > T_L(\truth) \text{ and } \truths_i > \truths^\uparrow_{K^*}, \\
        \truths_i - \frac{\max(\|\truth\|_1 + \|\truthc\|_1 + 1 - n - \revis{T_L}, 0)}{n} & \text{otherwise.}
    \end{cases} 
\end{equation}
Let $\revis{S_L} \in [\min(\|\truthc\|_1, 1), 1]$ and define $\lambda_K =  \frac{\|\truth\|_1 + \|\truthc\|_1  - \revis{S_L}}{K}$. Let  $K^*$ be the largest integer $1\leq K\leq n$ such that $\lambda_K < \truths^\downarrow_{K}$. Then the minimal \boost\ function of the \luk\ t-conorm is

\begin{equation}
    \minboost_{S_L}(\truth, \revis{S_L})_i=\begin{cases}
        \truths_i + \frac{\max(\revis{S_L}-\|\truth\|_1 - \|\truthc\|_1, 0)}{n} & \text{if } \revis{S_L} > S_L(\truth), \\
        \truths_i - \lambda_{K^*} & \text{if } \revis{S_L} < S_L(\truth) \text { and } \truths_i \geq \truths^\downarrow_{K^*}, \\
        0 & \text{otherwise.}
    \end{cases}
\end{equation} 
\end{proposition}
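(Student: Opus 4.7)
The plan is to attack the constrained optimization in Equation~\eqref{eq:optim-problem} directly via Karush--Kuhn--Tucker (KKT) conditions and then obtain the t-conorm formula via Proposition~\ref{prop:dual-t-conorm}. When $\revis{T_L}>0$, the constraint $T_L(\boostv,\truthc)=\revis{T_L}$ becomes the affine equation $\|\boostv\|_1 = \revis{T_L} + (n+m-1) - \|\truthc\|_1$. The feasible set is therefore a simplex slice of the unit cube and the $p$-norm objective is strictly convex for $p>1$ (and handled by a uniform-shift argument for $p=1$), so KKT is necessary and sufficient.

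For the decreasing case $\revis{T_L}<T_L(\truth,\truthc)$, I would first drop the box constraints. The stationarity condition $|\truths_i-\boosts_i|^{p-2}(\truths_i-\boosts_i)=\mu$ forces a constant gap $\truths_i-\boosts_i=\lukincrease$, and the sum constraint fixes $\lukincrease = (T_L(\truth,\truthc)-\revis{T_L})/n$, which simplifies (for $m=0$, matching the stated formula) to $(\|\truth\|_1+\|\truthc\|_1+1-n-\revis{T_L})/n$. The upper box is inactive because we move downward from $\truth\in[0,1]^n$, and I would explicitly verify the lower box stays inactive under the regime the formula covers; the outer $\max(\cdot,0)$ absorbs the corner case $T_L(\truth,\truthc)\leq \revis{T_L}$ where no change is needed.

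For the increasing case $\revis{T_L}>T_L(\truth,\truthc)$, the same Lagrangian analysis yields a uniform shift $\boosts_i=\truths_i+\lukincrease$, but the upper bound $\boosts_i\leq 1$ may now bind. By KKT complementary slackness, the set of saturated indices forms an upper tail of $\truth^\uparrow$: if index $j$ is saturated to $1$ while $\truths_i>\truths_j$ is not, swapping a small amount of mass from $i$ to $j$ preserves feasibility and strictly reduces the $p$-norm, a contradiction. Hence only the $K$ smallest values receive the uniform increment, and solving the sum constraint with $n-K$ ones gives $\lukincrease_K = (\revis{T_L}+m+K-1-\|\truthc\|_1 - \sum_{i=1}^K\truths_i^{\uparrow})/K$, exactly the stated quantity. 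The correct cutoff $K^*$ is the largest $K$ for which the unsaturated indices remain admissible, i.e.\ $\lukincrease_K<1-\truths_K^{\uparrow}$; I would show $\lukincrease_K$ is monotone in $K$ so this threshold is crossed exactly once, giving a unique water-filling optimum.

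For the t-conorm, I would invoke Proposition~\ref{prop:dual-t-conorm} with substitutions $\boostv\mapsto\boldsymbol{1}-\boostv$, $\truth\mapsto\boldsymbol{1}-\truth$, $\truthc\mapsto\boldsymbol{1}-\truthc$: the increasing case of $S_L$ maps to the decreasing case of $T_L$ (uniform shift on all coordinates, no saturation among propositions) and the decreasing case of $S_L$ maps to the increasing case of $T_L$, where the saturated-at-$1$ tail becomes a saturated-at-$0$ tail on $\truth^{\downarrow}$; after this substitution, the $\truths^\uparrow$ in the t-norm formula becomes $1-\truths^\downarrow$, the sign flips, and algebraic simplification matches the claimed $\lambda_K$ and $K^*$. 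The main obstacle will be the bookkeeping around $K^*$: proving monotonicity of $\lukincrease_K$ in $K$, verifying that the candidate $K^*$ simultaneously satisfies $\lukincrease_{K^*}<1-\truths_{K^*}^{\uparrow}$ and $\lukincrease_{K^*+1}\geq 1-\truths_{K^*+1}^{\uparrow}$, and ruling out alternative active sets via strict convexity. Everything else is a routine KKT verification.
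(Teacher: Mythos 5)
Your proposal is correct and follows essentially the same route as the paper: a KKT analysis of the affine constraint $\|\boostv\|_1 = \revis{T_L}+(n+m-1)-\|\truthc\|_1$ for one member of each dual pair (uniform shift when no box constraint binds; water-filling with the saturated tail and the threshold $K^*$ certified by $\lukincrease_{K^*}<1-\truths^\uparrow_{K^*}$ and $\lukincrease_{K^*+1}\geq 1-\truths^\uparrow_{K^*+1}$, which is exactly the paper's dual-feasibility step), combined with Proposition~\ref{prop:dual-t-conorm} to transfer to the remaining cases. The only cosmetic difference is which member of each dual pair you treat as primitive (you prove both t-norm cases directly and derive the t-conorm; the paper proves the increasing t-norm and increasing t-conorm cases directly), and the only caution is that $\lukincrease_K$ need not be globally monotone in $K$ — but the local comparison at $K^*$ and $K^*+1$ that you also list is all that is actually required.
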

\begin{proof}
    This follows from Propositions \ref{prop:dual-t-conorm}, \ref{prop:luk-t-norm} and \ref{prop:luk-t-conorm}, see Appendix \ref{appendix:luk-tnorm} and \ref{appendix:luk-tconorm}.
\end{proof}

Although slightly obfuscated, these \boost\ functions simply increase each of the literals equally, while properly dealing with constraints on the truth values. \ale{We explain this using Figure~\ref{fig:Lukasiewciz_p2}, where the optimal solution corresponds to a vector that, from the original truth values $\truth$, is perpendicular to the contour line of the operator at the value $\revis{\varphi}$. Moreover, the figure also provides some intuition for our proofs. The stationary points of the Lagrangian correspond to the points where the constraint function (blue circumference) tangentially touches the contour line of the \boosted\ value (orange line).
}

\begin{figure}
    \includegraphics[width=\linewidth]{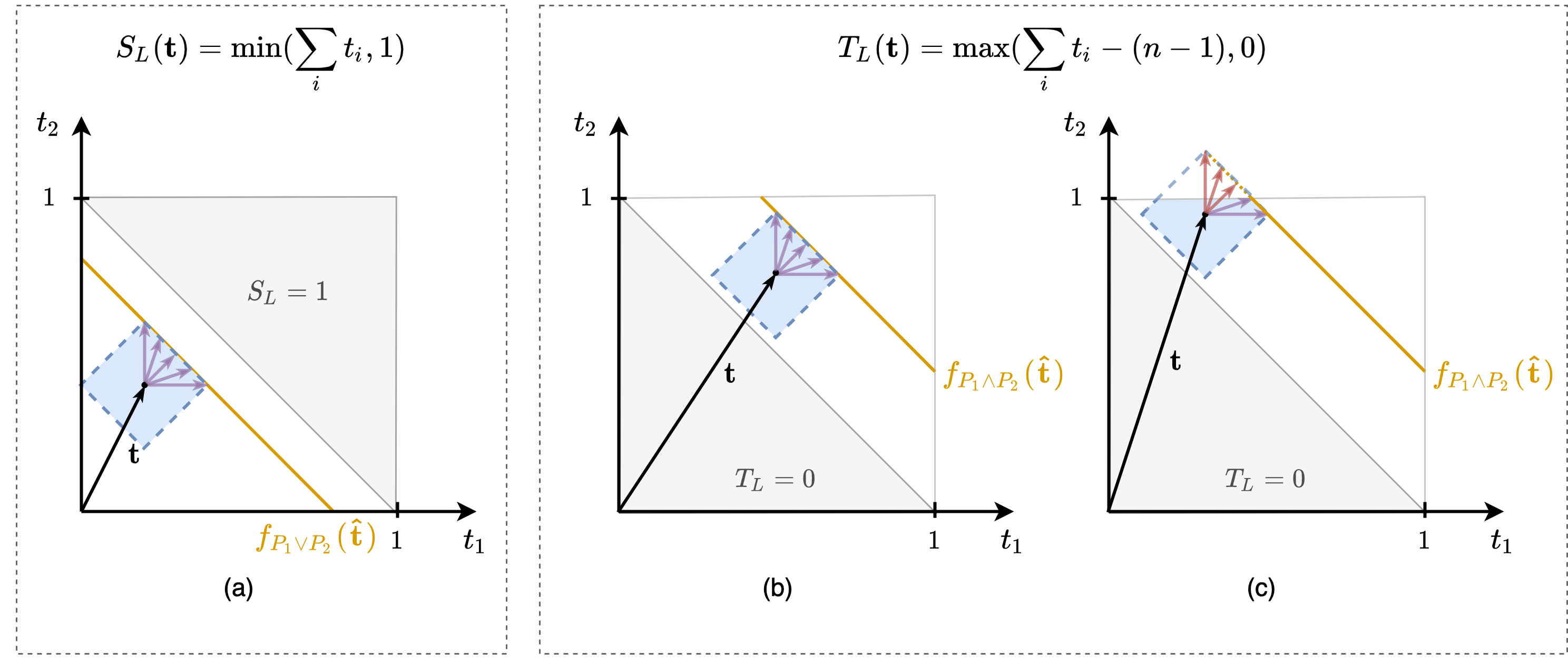}
    \caption{\ale{\luk\ minimal \boost\ functions. The orange line corresponds to the contour line of the $S_L$ and $T_L$ at the value $\revis{\varphi}$. Dotted blue circumference corresponds to a set of points at an equal distance from $\truth$. (a) t-conorm; (b) t-norm; (c) t-norm in the limit case.}}
    \label{fig:Lukasiewciz_p2}
\end{figure}

The change \ale{applied by the \boost\ function} is proportional to the \boost\ value $\revis{}$. Computing these \boost\ functions requires finding $K^*$, which can be done efficiently in log-linear time using a sort on the input truth values and a binary search.

The residuum of the \luk\ t-norm is equal to its S-implication formed using $S_L(1-a, c)$, and so its minimal \boost\ function can be found using Proposition \ref{prop:s-implication}. 

The \luk\ logic is unique in containing large convex and concave fragments \cite{gianniniConvexLogicFragment2019}. In particular, any CNF formula interpreted using the weak conjunction (Godel t-norm) and \luk\ t-conorm is concave, allowing for efficient maximization using a quadratic program of a slightly relaxed variant of the problem in Equation \ref{eq:optim-problem}. \cite{gianniniConvexLogicFragment2019} studies this property in a setting similar to ours in the context of collective classification. Future work could study using this convex fragment to find minimal \boost\ functions for more complex formulas. 

\subsubsection{Product t-norm}
To present the three basic t-norms together, we give the closed-form \boost\ function for the product t-norm with the L1 norm. Our proof is a special case of the general results on a large class of t-norms we will discuss in Section \ref{sec:general-analysis}. In particular, the product t-norm is a strict, Schur-concave t-norm with an additive generator. It is an example of a t-norm for which we can find a closed-form \boost\ function for the L1 norm using Propositions \ref{prop:additive-generator} and \ref{prop:dual-t-conorm}. First, we show the minimal \boost\ function for the product t-norm.

\begin{equation}
   \minboost_{T_P}(\truth, \revis{T_P})_i= \begin{cases}
       \sqrt[n-K^*]{\frac{\revis{T_P}}{\prod_{j=1}^{K^*}\truths_j^\downarrow \prod_{j=1}^mC_j}} & \text{if } T_P(\truth, \truthc) > \revis{T_P} \text{ and } \truths_i \leq \truths^\downarrow_{K^*+1}, \\
       \sqrt{\frac{\revis{T_P}}{\prod_{j\neq i}\truths_i^\downarrow \prod_{i=1}^mC_i}} & \text{if } T_P(\truth, \truthc) < \revis{T_P} \text{ and } i=\arg\min_{j=1}^n \truths_j,   \\
       \truths_i & \text{otherwise.}
   \end{cases} 
\end{equation}

Next, we present the result for the product t-conorm:
\begin{equation}
    \minboost_{S_P}(\truth, \revis{S_P})_i= \begin{cases}
        1-\sqrt{\frac{1-\revis{S_P}}{\prod_{j\neq i}1-\truths_i^\downarrow \prod_{i=1}^m1-C_i}} & \text{if } S_P(\truth, \truthc) < \revis{S_P} \text{ and } i=\arg\min_{j=1}^n \truths_j,   \\
        1-\sqrt[n-K^*]{\frac{1-\revis{S_P}}{\prod_{j=1}^{K^*}1-\truths_j^\downarrow \prod_{j=1}^m1-C_j}} & \text{if } S_P(\truth, \truthc) > \revis{S_P} \text{ and } \truths_i \leq \truths^\downarrow_{K^*+1}, \\
        \truths_i & \text{otherwise.}
    \end{cases} 
 \end{equation}

This \boosted\ function increases all the literals smaller than a certain threshold up to the threshold itself, where we assume $\revis{T_P}$ is greater than the initial truth value. In fact, like the other t-norms in the class discussed in Section \ref{sec:general-analysis}, it is similar to the G\"{o}del t-norm in that it increases all literals above some threshold to the same value. Similarly, the \boost\ function for the t-conorm increases the highest literal. Figure~\ref{fig:Product} gives an intuition behind this behaviour.
 
 \begin{figure}
    \includegraphics[width=\linewidth]{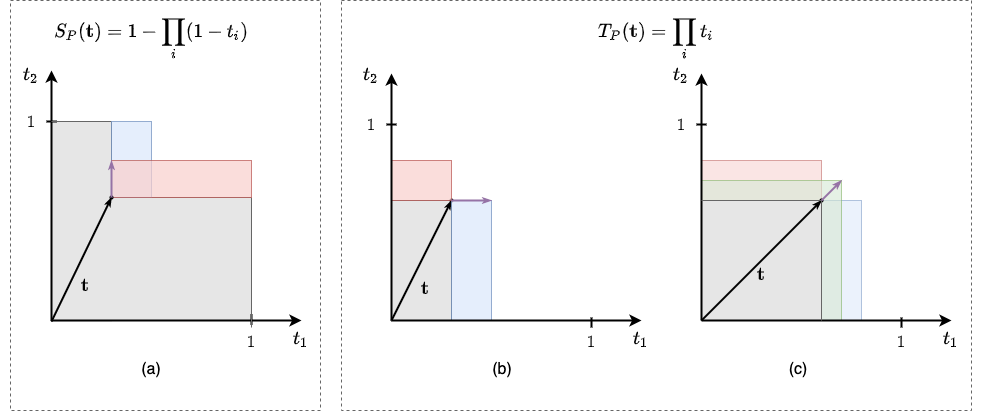}
    \caption{\ale{Product minimal \boost\ functions. The grey areas represent the truth value of the operator associated with the initial vector $\truth$. Red and blue areas represent the \boosted\ values when increasing a single literal.
    (a) t-conorm; (b) t-norm; (c) t-norm when multiple literals have same truth value. The green area represents the improvement obtained by increasing both literals equally.}}
    \label{fig:Product}
\end{figure}

Finally, the residuum minimal \boost\ function can be found with $\minboost_{I_P}(\truths_1, \truths_2, \revis{I_P}) = [\truths_1, \frac{\revis{I_P}}{\truths_1}]^\top$. 

We also studied the minimal \boost\ function for the L2 norm, but concluded that the result is a $2n$th degree polynomial with no simple closed-form solutions. For details, see Appendix \ref{Appendix:product-l2}. 


\section{A general class of t-norms with analytical minimal \boost\ functions}
\label{sec:class-analysis}
In this section, we will introduce and discuss a general class of t-norms that have analytic solutions to the problem in Equation \ref{eq:optim-problem} in order to find their corresponding minimal \boost\ functions. We can find those for the t-norm, t-conorm, and the residuum. 
\subsection{Background on t-norms}
To be able to properly discuss this class of t-norms, we first have to provide some more background on the theory of t-norms. 
\subsubsection{Additive generators}
The study of t-norms frequently involves the study of their \emph{additive generator} \cite{klementTriangularNorms2013,klementTriangularNormsPosition2004}, which are univariate functions that can be used to construct t-norms, t-conorms and residuums. 

\begin{definition}
    A function $\add: [0, 1]\rightarrow [0, \infty]$ such that $\add(1)=0$ is an \emph{additive generator} if it is strictly decreasing, right-continuous at 0, and if for all $t_1, t_2\in [0, 1]$, $\add(t_1)+\add(t_2)$ is either in the range of $\add$ or in $[\add(0^{+}), \infty ]$. 
\end{definition}
\begin{theorem}
    If $\add$ is an additive generator, then the function $T: [0, 1]^n\rightarrow [0, 1]$ defined as 
    \begin{equation}
        \label{eq:additive-generator}
        T(\truth) = \add^{-1}(\min(\add(0^+), \sum_{i=1}^n \add(\truths_i)))
    \end{equation}
    is a t-norm.
\end{theorem}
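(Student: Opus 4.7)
The plan is to verify that the binary restriction $T(t_1, t_2) = \add^{-1}(\min(\add(0^+), \add(t_1) + \add(t_2)))$ satisfies the four axioms of Definition \ref{deff:tnorm}, and then check that the $n$-ary formula in the theorem agrees with the canonical associative extension $T(\truth) = T(\truths_1, T(\truth_{2:n}))$. Commutativity is immediate from commutativity of addition. For the identity, $\add(1) = 0$ gives $\add(1) + \add(t) = \add(t) \leq \add(0^+)$, so the $\min$ is inactive and $T(1, t) = \add^{-1}(\add(t)) = t$ by strict monotonicity (hence injectivity) of $\add$ on $[0,1]$. For monotonicity in the first argument, if $t_1 \leq t_1'$ then $\add(t_1) \geq \add(t_1')$ since $\add$ is strictly decreasing, hence $\add(t_1) + \add(t_2) \geq \add(t_1') + \add(t_2)$; the map $x \mapsto \min(\add(0^+), x)$ is non-decreasing and the pseudo-inverse $\add^{-1}$ (extended by $0$ beyond $\add(0^+)$) is non-increasing, so their composition gives $T(t_1, t_2) \leq T(t_1', t_2)$. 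Symmetry handles the other argument.

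The main obstacle is associativity, which requires careful handling of the pseudo-inverse at saturation. Unfolding the binary definition once gives
\begin{align*}
T(T(t_1, t_2), t_3) = \add^{-1}\!\Bigl( \min\!\bigl( \add(0^+),\, \add(T(t_1, t_2)) + \add(t_3) \bigr) \Bigr).
\end{align*}
The key technical lemma I need is that $\add(T(t_1, t_2)) = \min(\add(0^+), \add(t_1) + \add(t_2))$, i.e., applying $\add$ to $\add^{-1}$ of a (possibly saturated) sum recovers that sum modulo the clamp. This is precisely where the third condition on additive generators does its work: the sum $s = \add(t_1) + \add(t_2)$ is either in the range of $\add$, in which case $\add \circ \add^{-1}$ returns $s$ exactly, or it lies in $[\add(0^+), \infty]$, in which case the $\min$ collapses it to $\add(0^+)$ and $\add(\add^{-1}(\add(0^+))) = \add(0) = \add(0^+)$ by right-continuity at $0$. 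Granting this lemma, the expression collapses to $\add^{-1}(\min(\add(0^+), \add(t_1) + \add(t_2) + \add(t_3)))$, which is manifestly symmetric in its three arguments and therefore also equals $T(t_1, T(t_2, t_3))$.

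Finally, the $n$-ary formula follows by a straightforward induction on $n$: iterating $T(\truth) = T(\truths_1, T(\truth_{2:n}))$ produces nested pseudo-inverses that the same collapse lemma successively flattens, contributing one more $\add(\truths_i)$ to the inner sum at each step. The heart of the argument is the collapse identity, which hinges entirely on the range-or-saturation condition in the definition of an additive generator; without it, there could be a gap between the range of $\add$ and $[\add(0^+), \infty]$ on which $\add \circ \add^{-1}$ would not act as the identity, breaking both associativity and the equivalence between the binary and $n$-ary forms.
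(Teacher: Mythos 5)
The paper does not actually prove this theorem: it is stated as a classical result imported from the t-norm literature (cf.\ \cite{klementTriangularNorms2013,klementTriangularNormsPosition2004}), so there is no in-paper argument to compare yours against. Your blind verification is, however, essentially correct and complete in outline. The four axioms are handled properly, and you correctly identify the crux: the collapse identity $\add(T(t_1,t_2))=\min(\add(0^+),\add(t_1)+\add(t_2))$, which is exactly where the range-or-saturation clause and the right-continuity at $0$ (giving $\add(0)=\add(0^+)$, hence $\add(\add^{-1}(\add(0^+)))=\add(0^+)$) are consumed. Two small points you gloss over but which do check out: first, after applying the collapse lemma you still need the elementary identity $\min(M,\min(M,a+b)+c)=\min(M,a+b+c)$ for $a,b,c\ge 0$ and $M=\add(0^+)$ (split on whether $a+b\le M$) before the three-argument expression becomes manifestly symmetric; second, the two-argument range-or-saturation condition only directly guarantees that sums of \emph{two} generator values land in the domain of $\add^{-1}$, so the well-definedness of the $n$-ary closed form is not immediate --- it follows from your induction precisely because at each step $\add^{-1}$ is only ever applied to $\min(\add(0^+),\add(T(\cdot,\cdot))+\add(t_i))$, a two-term instance, and the collapse identity then shows this real number coincides with the clamped $n$-term sum. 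Also, in the identity check $T(1,t)=t$ the $\min$ is not strictly ``inactive'' at $t=0$, but it returns $\add(0^+)=\add(0)$ there, so the conclusion stands. None of these affect the validity of the argument.
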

Using Equation \ref{eq:additive-generator}, the function $g$ acts like an invertible function. It transforms truth values into a new space that can be seen as measuring `untruthfulness'. $\sum_{i=1}^n \add(\truths_i)$ can be seen as a measure of `untruth' of the conjunction. T-norms constructed in this way are necessarily Archimedean, and each continuous Archimedean t-norm has an additive generator. $T_P$, $T_L$ and $T_D$ have an additive generator, but $T_G$ and $T_N$ do not. Furthermore, if $\add(0^+)=\infty$, $T$ is strict and we find $T(\truth)=\add^{-1}(\sum_{i=1}^n \add(\truths_i))$. 
The residuum constructed from continuous t-norms with an additive generator can be computed using $\add^{-1}(\max(\add(c)-\add(a), 0))$ \cite{jayaramFuzzyImplications2008}. 

\subsubsection{Schur-concave t-norms}
We will frequently consider the class of Schur-concave t-norms, with their dual t-conorms and residuums formed from these Schur-concave t-norms. We denote with $\truth^\downarrow$ the truth vector $\truth$ sorted in descending order, and with $\truth^ \uparrow$ as $\truth$ sorted in ascending order. \todo{Do we need to define the latter one?}
\begin{definition}
    \label{def:schur-concave}
    A vector $\truth\in\mathbb{R}^n$ is said to \emph{majorize} another vector $\truthalt \in\mathbb{R}^ n$, denoted $\truth\succ\truthalt$, if $\sum_{i=1}^n \truths_{i}=\sum_{i=1}^n {\truthsalt_i}$ and if for each $i\in\{1, ..., n\}$ it holds that $\sum_{j=1}^i \truths_{ j}^\downarrow \geq \sum_{j=1}^i \truthsalt_{j}^\downarrow$. 
\end{definition}
\begin{definition}
    A function $[0, 1]^ n\rightarrow [0,1]$ is called \emph{Schur-convex} if for all $\truth, \truthalt\in [0, 1]^n$, $\truth\succ \truthalt$ implies that $f(\truth) \geq f(\truthalt)$. Similarly, a \emph{Schur-concave} function has that $\truth \succ \truthalt$ implies that $f(\truth) \leq f(\truthalt)$. 
\end{definition}
The dual t-conorm of a Schur-concave t-norm is Schur-convex. The three basic and continuous t-norms $T_G$, $T_P$ and $T_L$ are Schur-concave. There are also non-continuous Schur-concave t-norms, such as the Nilpotent minimum \cite{takaciSchurconcaveTriangularNorms2005,vankriekenAnalyzingDifferentiableFuzzy2022}. The drastic t-norm is an example of a t-norm that is not Schur-concave \cite{takaciSchurconcaveTriangularNorms2005}. This class includes all quasiconcave t-norms since all symmetric quasiconcave functions are also Schur-concave \cite[p98 C.3]{marshallSchurConvexFunctions2011}. Therefore, this class constitutes a significant class of relevant t-norms. For a more precise characterization of Schur-concave t-norms, see \cite{takaciSchurconcaveTriangularNorms2005,alsinaSchurConcaveTNormsTriangle1984}.

\subsection{Minimal \boost\ functions for Schur-concave t-norms}
We now have the background to discuss several useful and interesting results on Schur-concave t-norms. First, we present two results that characterize Schur-concave minimal \boost\ functions.  We use the notion of \say{strictly cone-increasing} functions here that is discussed in Appendix \ref{appendix:cone-monotonicity}.
\begin{theorem}
    \label{theorem:schur-concave-t-norm}
    Let $T$ be a Schur-concave t-norm that is strictly cone-increasing at $\revis{T}$ and let $\|\cdot \|$ be a strict norm. Then there is a minimal \boosted\ vector $\minboostv$ for $\truth$ and $\revis{T}$ such that whenever $\truths_i> \truths_j$, then $\minboosts_i - \truths_i\leq \minboosts_j - \truths_j$. 
\end{theorem}

For proof, see Appendix \ref{appendix:boost-schur-tnorm}. We note that we can make this argument in the other direction to show that any Schur-convex t-conorm will have a minimal \boosted\ vector such that $\truths_i> \truths_j$ implies $\minboosts_i \geq \minboosts_j$. Furthermore, if we know that a t-norm has a unique minimal \boost\ function, we can use this theorem to infer a useful ordering on how it changes the truth values. 

Next, we will consider the L1 norm $\sum_{i=1}^n\vert \boosts_i - \truths_i\vert$, for which we can find general solutions for the t-norm, t-conorm and R-implication when the t-norm is Schur-concave. 
\begin{proposition}
    Let $\truth\in [0, 1]^n$ and let $T$ be a Schur-concave t-norm that is strictly cone-increasing at $\revis{T}\in [T(\truth, \truthc), \revismax{T}]$. Then there is a value $\lambda\in [0, 1]$ such that the vector $\minboostv$,
    \begin{equation}
        \label{eq:minboost-t-norm-schur-concave}
        \minboosts_i = \begin{cases}
            \lambda, & \text{if } \truths_i < \lambda, \\
            \truths_i, & \text{otherwise,}
        \end{cases}
    \end{equation}
    is a minimal \boosted\ vector for $T$ and the L1 norm at $\truth$ and $\revis{T}$.
\end{proposition}

For proof, see Appendix \ref{appendix:closed-form-additive}. We found this result rather surprising: It is optimal for a large class of t-norms and the L1 norm to increase the lower truth values to some value $\lambda$. In this sense, these solutions are very similar to that of the G\"{o}del \boost\ functions. The value of $\lambda$ depends on the choice of t-norm and $T(\minboostv, \truthc)$ is a non-decreasing function of $\lambda$. We show in Section \ref{sec:additive-generators} how to compute these. 

We have a similar result, proof in the end of Appendix \ref{appendix:closed-form-additive}, for the \boost\ functions of Schur-convex t-conorms. This proposition shows that, under the L1 norm, it is optimal to increase only the largest literal, just like with the G\"{o}del t-norm.

\begin{proposition}
    Let $\truth\in [0, 1]^n$ and let $S$ be a Schur-convex t-conorm that is strictly cone-increasing at $\revis{S}\in [S(\truth, \truthc), 1]$. Then there is a value $\lambda\in [0, 1]$ such that the vector $\minboostv$,
    \begin{equation}
        \minboosts_i = \begin{cases}
            \lambda & \text{if } i={\arg\max}_{i\in D}\truths_i, \\
            \truths_i, & \text{otherwise,}
        \end{cases}
    \end{equation}
    is a minimal \boosted\ vector for $S$ and the L1 norm at $\truth$ and $\revis{S}$.
\end{proposition}

\subsection{Closed forms using Additive Generators}
\label{sec:additive-generators}
Where the previous section gives general results on the form or \say{shape} of minimal \boost\ functions for t-norms and t-conorms under the L1 norm, we still need to figure out what the value of $\lambda$ is for a particular $\revis{\varphi}$. Luckily, additive generators will do the job here. 
\begin{proposition}
    \label{prop:additive-generator}
    Let $T$ be a Schur-concave t-norm with additive generator $g$ and let $0<\revis{T}\in [T(\truth, \truthc), \revismax{T}]$. 
    Let $K\in \{0, ..., n-1\}$ denote the number of truth values such that $\minboosts_i=\truths_i$ in Equation \ref{eq:minboost-t-norm-schur-concave}.
    Then using

    \begin{equation}
        \lambda_K = g^ {-1}\left(\frac{1}{n-K}\left(g(\revis{T}) -\sum_{i=1}^K g(\truths^\downarrow_i) - \sum_{i=1}^m g(C_i)\right)\right)
    \end{equation}

    in Equation $\ref{eq:minboost-t-norm-schur-concave}$ gives $T(\minboostv, \truthc)=\revis{T}$ if $\minboostv\in [0, 1]^n$. 
\end{proposition}

See Appendix \ref{appendix:closed-form-additive} for a proof. $g(\revis{T})$ can be seen as the `untruth'-value in $g$-space that $\minboostv$ should attain. Since we have $n-K$ truth values that we can move freely, we need to make sure that their `untruth'-value in $g$-space is $g(\revis{T})/(n-K)$. However, we also need to handle the truth values we cannot change freely, which is why those are subtracted from $g(\revis{T})$. 

We should note that this does not yet give a procedure for computing the correct $K\in \{0, ..., n-1\}$. The intuition here is that we should find an $K$ such that $\truths_i \geq \lambda_K$ for the $K$ largest values, and $\truths_i < \lambda_K$ for the remaining $n-K$. Like with computing the $K^*$ for the \boost\ function for the \luk\ t-norm (Section \ref{seq:lukasiewicz}), we can do this in logarithmic time after sorting $\truth$, but we choose to compute $\lambda_K$ for each $K\in \{0, n-1\}$ in parallel. 

We can similarly find a closed form for the t-conorms: 
\begin{align}
    \lambda &= 1 - g^{-1}\left(g(1-\revis{S}) - \sum_{i\neq j}g(1-\truths_i) -  \sum_{i=1}^m g(1-C_i)\right)
\end{align}

\emile{I could not figure out how to prove this when not assuming $T$ is strict...}
\begin{proposition}
    Let $\truths_1, \truths_2\in [0,1]$ and let $T$ be a strict Schur-concave t-norm with additive generator $g$. Consider its residuum $R(\truths_1, \truths_2)=\sup \{z\vert T(\truths_1, z)\leq \truths_2\}$ that is strictly cone-increasing at $0<\revis{R}\in [R(\truths_1, \truths_2), \revismax{R}]$. Then there is a value $\lambda\in [0, 1]$ such that $\minboostv=[\truths_1, g^{-1}(g(\revis{R}) + g(\truths_1))]^\top$ is a minimal \boosted\ vector for $R$ and the L1 norm at $\truth$ and $\weight$. 
\end{proposition}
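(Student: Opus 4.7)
The plan is to reduce the two-dimensional constrained problem to a one-dimensional unconstrained one by exploiting the additive-generator form of the residuum, and then argue that the resulting one-variable objective is minimized at $\boosts_1=\truths_1$.

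Since $T$ is a strict t-norm with additive generator $g$ (so $g(0)=\infty$ and $g(1)=0$), its residuum has the closed form $R(a,c)=g^{-1}(\max(g(c)-g(a),0))$. In the non-trivial regime $0<\revis{R}<1$, the constraint $R(\boosts_1,\boosts_2)=\revis{R}$ becomes
\[
\boosts_2=h(\boosts_1):=g^{-1}\bigl(g(\boosts_1)+g(\revis{R})\bigr),
\]
defined for $\boosts_1\in(0,1]$ and mapping into $(0,1)$. Since $\revis{R}>R(\truths_1,\truths_2)$ forces $\truths_2<\truths_1$ (otherwise $R(\truths_1,\truths_2)=1$), the original vector $\truth$ lies on or below this curve, so $h(\truths_1)\ge \truths_2$.

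I will then parametrize the feasible set by $\boosts_1$ alone and minimize the L1 objective $d(\boosts_1)=\lvert\boosts_1-\truths_1\rvert+\lvert h(\boosts_1)-\truths_2\rvert$. Implicit differentiation gives $h'(\boosts_1)=g'(\boosts_1)/g'(h(\boosts_1))$. Because $T$ is Schur-concave its additive generator $g$ is convex, so $g'$ is non-decreasing; combined with $h(\boosts_1)<\boosts_1$ and $g'<0$, this forces $h'(\boosts_1)\in[0,1]$. Splitting the domain into three pieces: (a) for $\boosts_1>\truths_1$, both summands of $d$ are strictly increasing; (b) for $\boosts_1<\truths_1$ with $h(\boosts_1)\ge \truths_2$, decreasing $\boosts_1$ by $\delta$ adds $\delta$ to the first term while removing at most $h'(\boosts_1)\delta\le\delta$ from the second, so $d$ is non-decreasing; (c) once $\boosts_1$ is small enough that $h(\boosts_1)<\truths_2$, both terms grow with total slope $1+h'(\boosts_1)>0$. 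Hence $\boosts_1=\truths_1$ minimizes $d$, which yields $\minboostv=[\truths_1,\,g^{-1}(g(\truths_1)+g(\revis{R}))]$ exactly as stated.

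The main obstacle I expect is cleanly justifying that Schur-concavity of $T$ is equivalent to convexity of $g$. This is a classical fact in the theory of t-norms that I would quote from \cite{klementTriangularNorms2013}; a self-contained route goes via the Schur-concavity inequality $T(x+\epsilon,y-\epsilon)\ge T(x,y)$ for $x\le y$, which transfers through $g^{-1}$ to $g(x+\epsilon)+g(y-\epsilon)\le g(x)+g(y)$ and in the limit yields midpoint convexity. A secondary subtlety is verifying that the box constraints $\boosts_i\in[0,1]$ are inactive at the candidate: strictness of $g$ together with $0<\revis{R}<1$ and $0<\truths_1\le 1$ gives $\boosts_2\in(0,1)$, while the boundary case $\revis{R}=1$ is ruled out by the strictly cone-increasing hypothesis.
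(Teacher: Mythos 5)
Your proof is correct in substance but takes a genuinely different route from the paper's. The paper argues by contradiction through its dual problem (Theorem \ref{theorem:max-to-min}): it fixes the L1 budget, considers the competing allocation $[\truths_1-\epsilon,\lambda-\epsilon]^\top$ that spends part of that budget on lowering the antecedent, rewrites the resulting inequality between residuum values as $T(\lambda,\truths_1-\epsilon)<T(\lambda-\epsilon,\truths_1)$, and contradicts this with a direct majorization argument using Schur-concavity of $T$. You instead attack the primal problem head-on: you parametrize the feasible curve $R(\boosts_1,\boosts_2)=\revis{R}$ as $\boosts_2=h(\boosts_1)=g^{-1}(g(\boosts_1)+g(\revis{R}))$ and show the one-dimensional L1 objective is minimized at $\boosts_1=\truths_1$ because $h$ is non-decreasing and $1$-Lipschitz. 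Your route is more self-contained --- it does not need the dual-problem machinery and yields the stronger statement that the candidate is a global minimizer along the entire feasible curve --- but it imports the classical equivalence between Schur-concavity of a generated t-norm and convexity of $g$, which the paper's exchange argument sidesteps by using Schur-concavity directly. Two repairs are needed to make your argument airtight. First, additive generators need not be differentiable, so the implicit-differentiation step $h'=g'(\boosts_1)/g'(h(\boosts_1))$ is not available in general; replace it by the three-chord inequality for the convex decreasing function $g$: from $g(h(a))-g(h(b))=g(a)-g(b)$ together with $h(x)\le x$ (which follows from $g\ge 0$) one gets $h(b)-h(a)\le b-a$ directly, and monotonicity of $h$ is immediate since $g$ and $g^{-1}$ are decreasing. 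Second, state the Schur-concavity/convexity equivalence explicitly as the lemma you are quoting (your midpoint-convexity sketch plus continuity of the generator of a strict t-norm does establish it), since the hypothesis of the proposition is Schur-concavity of $T$, not convexity of $g$.
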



Here, we find that for this class of residuums, increasing the consequent (the second argument of the implication) is minimal for the L1 norm. This update reflects modus ponens reasoning: When the antecedent is true, increase the consequent. As we have argued in \cite{vankriekenAnalyzingDifferentiableFuzzy2022}, this could cause issues in many machine learning setups: Consider the modus tollens correction instead decreases the antecedent. For common-sense knowledge, this is more likely to reflect the true state of the world.

\section{Experiments}

We performed experiments on two tasks. The first one does not involve learning. Instead, we aim to solve SAT problems. This experiment allows assessing whether ILR can enforce complex and unstructured knowledge. The second experiment is on the MNIST Addition task~\cite{manhaeveDeepProbLogNeuralProbabilistic2018} to test ILR in a neuro-symbolic setting and assess its ability to learn from data.

\subsection{Experiments on 3SAT problems}
With this experiments, our goal is to find out how quickly ILR finds a \boosted\ vector and how minimal this vector is. We test this on formulas of varying complexity to analyze for what problems each algorithm performs well\footnote{Code available at \href{https://github.com/DanieleAlessandro/IterativeLocalRefinement}{https://github.com/DanieleAlessandro/IterativeLocalRefinement}}.

\subsubsection{Setup}
We perform experiments on SATLIB \cite{hoosSATLIBOnlineResource2000}, a library of randomly generated 3SAT problems. 3SAT problems are formulas in the form $\bigwedge_{i=1}^c \bigvee_{j=1}^3 l_{ij}$, where $l_{ij}$ is a literal that is either $P_k$ or $\neg P_k$ and where $P_k\in\{P_1, ..., P_n\}$ is an input proposition. In particular, we consider uf20-91 of satisfiable 3SAT problems with $n=20$ propositions and $c=91$ disjunctive clauses. For this, we select \ale{the \boosted\ value $\revis{\varphi}$ to be 1. We also experiment with $\revis{\varphi}\in\{0.3, 0.5\}$ in Appendix \ref{appendix:additional-experiments}}. We uniformly generate initial truth values for the propositions $\truth\in [0,1]^d$~\footnote{\ale{Each run used the same initial value for each algorithm to have a fair comparison.}}. To allow experimenting with formulas of varying complexity, we \ale{introduce a simplified version of the task which uses only the first 20 clauses}.

We compare ILR with a gradient descent baseline described in Section \ref{sec:gradient-descent} using three metrics. The first is speed: How many iterations does it take for each algorithm to converge? Since both algorithms have similar computational complexities, we will use the number of iterations for this. The second is satisfaction: Is the algorithm able to find a solution with truth value $\revis{\varphi}$?
Finally, we consider minimality: How close to the original prediction is the \boosted\ vector $\boostv$? \ale{Note that the \boost\ function for the product logic is only optimal for the L1 norm, while
for G\"{o}del and \luk, the \boost\ function is optimal for all L$_p$ norms, including L1. Moreover, the results of L1 and L2 are very similar. Therefore, we use the L1 as a metric for minimality for each t-norm.}

\subsubsection{Gradient descent baseline}
\label{sec:gradient-descent}
We compare ILR to gradient descent with the following loss function 
\begin{equation}
    \mathcal{L}(\hat{\bz}, \truth, \revis{\varphi})= \|\op_\varphi(\sigma(\hat{\bz})) - \revis{\varphi}\|_2 + \alpha \|\sigma(\hat{\bz}) - \truth\|_p.
\end{equation}
Here $\boostv = \sigma(\hat{\bz})$ is a real-valued vector $\hat{\bz}\in \mathbb{R}^n$ transformed to $\boostv\in [0, 1]^n$ using the sigmoid function $\sigma$ to ensure the values of $\boostv$ remain in $[0,1]^n$ during gradient descent. The first term minimizes the distance between the current truth value of the formula $\varphi$ and the \boost\ value, while the second term is a regularization term that minimizes the distance between the \boosted\ vector and the original truth value $\truth$ in the L$p$ norm. $\alpha$ is a hyperparameter that trades off the importance of this regularization term. 

This method for finding \boosted\ vectors is very similar to the collective classification method introduced in SBR \cite{diligentiSemanticbasedRegularizationLearning2017,roychowdhuryRegularizingDeepNetworks2021}. The main difference is in the L$p$ norms chosen, as we use squared error for the first term instead of the L1 norm. 
Gradient descent is a steepest descent method that take steps minimizing the L2 norm. Therefore, it can also be seen as a method for finding minimal \boost\ functions given the L2 norm. The corresponding steepest descent method for the L1 norm is the coordinate descent algorithm. Future work could compare how coordinate descent performs for finding minimal \boost\ functions for the L1 norm. We suspect it will be much slower than gradient descent-based methods as it can only change a single truth value each iteration.  

We found that ADAM \cite{kingmaAdamMethodStochastic2017} significantly outperformed standard gradient descent in all metrics, and we chose to use it throughout our experiments. Furthermore, inspired by the analysis of the derivatives of aggregation operators in \cite{vankriekenAnalyzingDifferentiableFuzzy2022}, we slightly change the formulation of the loss function for the \luk\ t-norm and product t-norm.
The \luk\ t-norm will have precisely zero gradients for most of its domain. Therefore, we remove the $\max$ operator when evaluating the $\bigwedge$ in the SAT formula so it has nonzero gradients. For the product t-norm, the gradient will also approach 0 because of the large set of numbers between $[0, 1]$ that it multplies. As suggested by \cite{vankriekenAnalyzingDifferentiableFuzzy2022}, we instead optimize the logarithm of the product t-norm:
\begin{equation*}
    \mathcal{L}_P(\hat{\bz}, \truth, \revis{\varphi})= \| \sum_{i=1}^c \log f_{\bigvee_{j=1}^3}(\sigma(\truth)) - \log \revis{\varphi}\|_2 + \alpha \|\sigma(\hat{\bz}) - \truth\|_1.
\end{equation*}

\subsubsection{Results}
\label{sec:results-sat}
\begin{figure}
    \includegraphics[width=\linewidth]{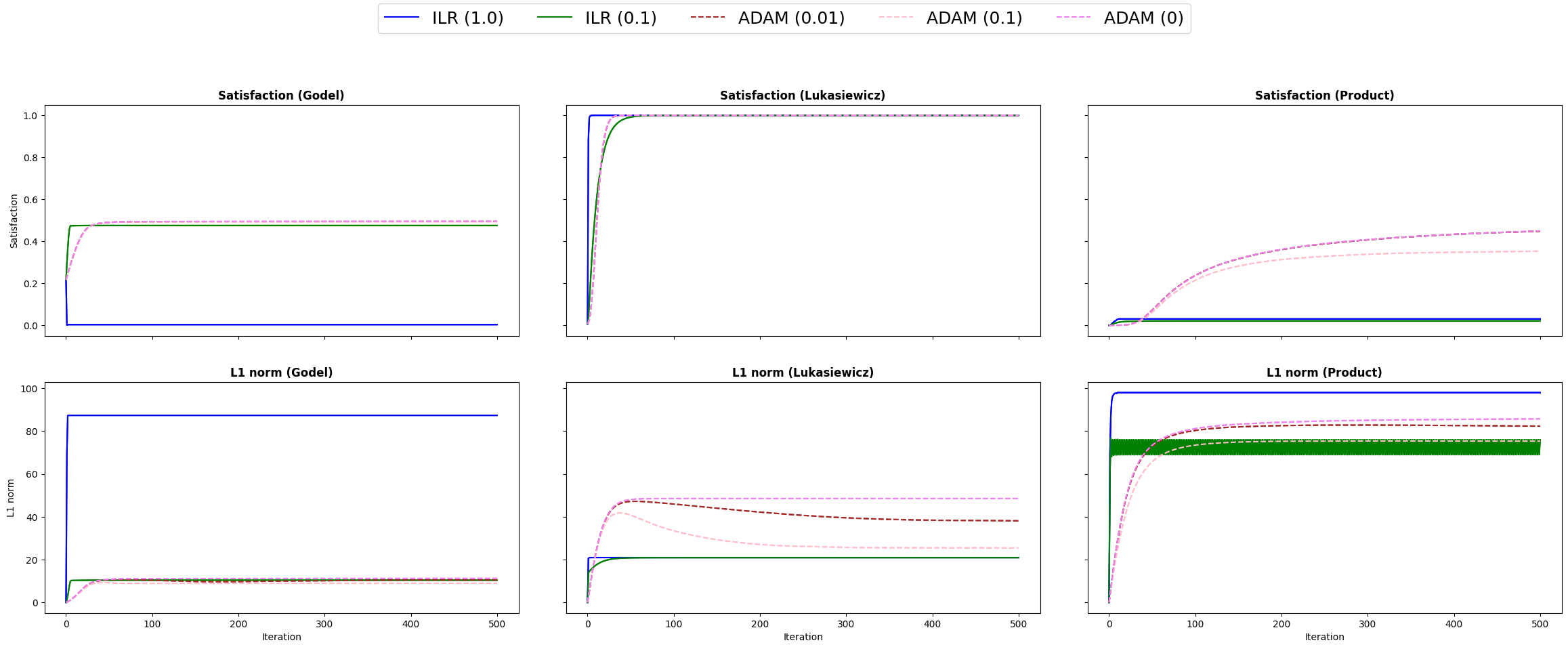}
    \caption{\ale{Comparison of ILR with ADAM on uf20-91 in SATLIB. \Boosted\ value 1.0.}. The x axis corresponds to the number of iterations, while the y axis is the value of $\revis{\varphi}$ in the first row of the grid, and the L1 norm in the second row.}
    \label{fig:results_91}
\end{figure}

\ale{
In Figure~\ref{fig:results_91} we show the results obtained by ILR and ADAM on the three t-norms (one for each column of the grid).} 
\ale{
We observe that ILR with schedule parameter $\alpha=0.1$ has a smoother plot compared to ILR with $\alpha=1.0$, which converges faster: In our experiments, the number of steps until convergence was always between 2 and 5. For both values of the scheduling parameters, ILR outperforms ADAM in terms of convergence speed.}

\ale{
When comparing satisfaction and minimality, the behavior differs based on the t-norm used. In the case of \luk, all methods find feasible solutions to the optimization problem. Furthermore, in terms of minimality (i.e., L1 norm), ILR finds better solutions than ADAM.}

\ale{
For the G\"{o}del logic no method is capable of reaching a feasible solution. Here, ILR with schedule parameter $\alpha=1$ performs very poorly, obtaining worse solutions than the original truth values. On the other hand, with $\alpha = 0.1$, it performs as well as ADAM for both metrics but with faster convergence.
}

\ale{
Finally, for the product logic, ILR fails to increase the satisfaction of the formula to the \boosted\ value. However, ADAM can find much better solutions, getting the average truth value to around 0.5. Still, it is far from reaching a feasible solution. Nonetheless, we recommend using ADAM for complicated formulas in the product logic.
}

\ale{However, we argue that in the context of Neural-Symbolic Integration, the provided knowledge is usually relatively easy to satisfy. } \emi{With 91 clauses, there are few satisfying solutions in this space of $2^ {21}$ possible binary solutions. However, background knowledge usually does not constrain the space of possible solutions as heavily as this.} \ale{For this reason, we propose a simplified formula, where we only use 20 out of 91 clauses. Figure~\ref{fig:results_20} shows the results for this setting.} \emi{We see that ILR with no scheduling ($\alpha=1$) finds feasible solutions for all t-norms. ILR finds solutions for the G\"{o}del t-norm where ADAM cannot find any, while for \luk\ and product, it finds solutions in much fewer iterations and with a lower L1 norm. Hence, we argue that for knowledge bases that are less constraining, ILR without scheduling is the best choice.}

\begin{figure}
    \includegraphics[width=\linewidth]{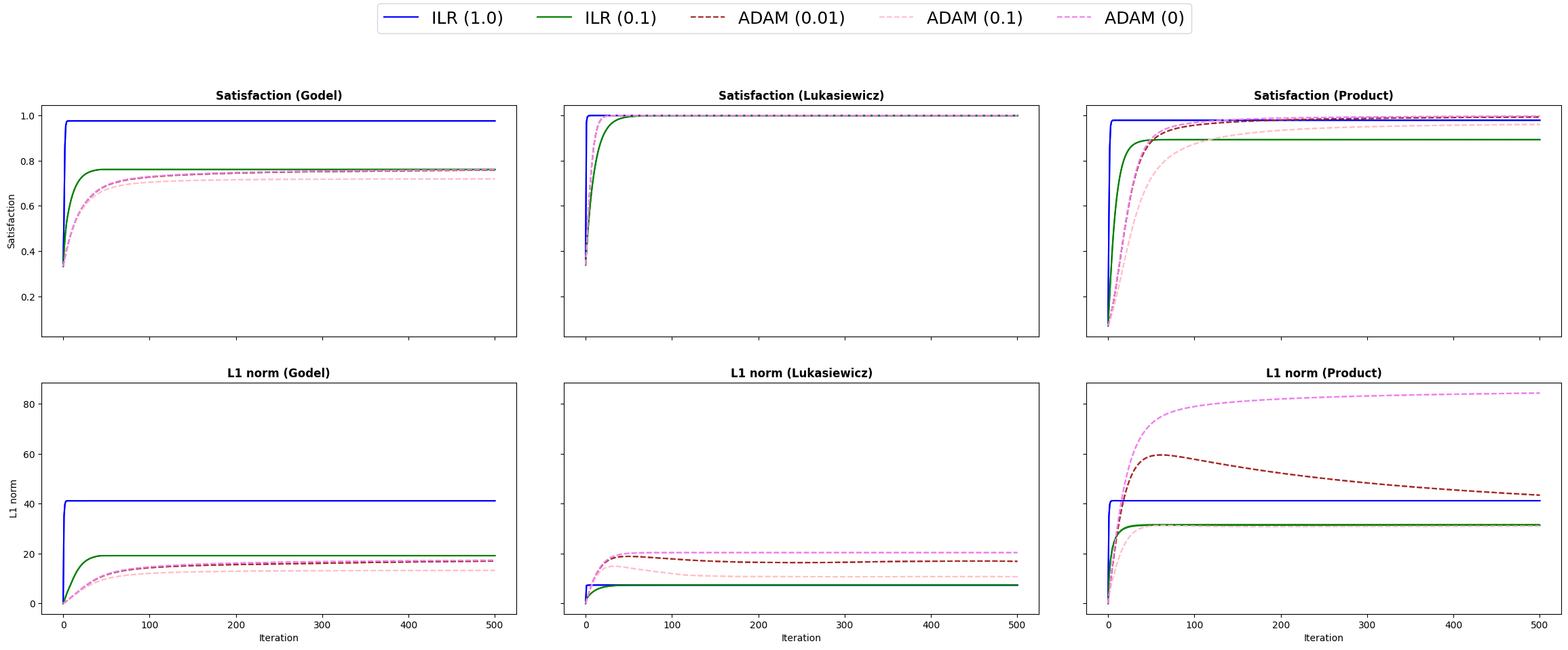}
    \caption{\ale{Comparison of ILR with ADAM on the uf20-91 with 20 clauses. Target value 1.0.}}
    \label{fig:results_20}
\end{figure}

\subsection{Experiments on MNIST Addition}
\label{sec:MNIST_exp}
The experiments on the SATLIB benchmark show how well ILR can enforce knowledge in highly constrained settings. However, as already mentioned, in neuro-symbolic AI, the background knowledge is typically much simpler. SAT benchmarks often only have a few solutions, heavily limiting what predictions the neural network can make.
Moreover, previous experiments only tested ILR where initial truth vectors are random, and we did not have any neural networks or learning. 

To evaluate the performance of ILR in neuro-symbolic settings, we implemented the architecture of Figure~\ref{fig:ILR_MNIST}. Here, the task is to learn a classifier for handwritten digits while only receiving supervision on the sums of pairs of digits.

\subsubsection{Setup}

We follow the architecture of Figure~\ref{fig:ILR_MNIST}. We use the neural network proposed by~\cite{manhaeveDeepProbLogNeuralProbabilistic2018}, which is a network composed of two convolutional layers, followed by a MaxPool layer, followed by a fully connected layer with ReLU activation function and a fully connected layer with softmax activation. We use the G\"{o}del t-norm and corresponding minimal \boost\ functions. Note that G\"{o}del implication can only increase the consequent and can never decrease the antecedents. For this reason, ILR converges in a single step.

We set both $\alpha$ and target value $\hat{t}$ to one, meaning that we ask ILR to make the entire formula completely satisfied in one step. We use the ADAM optimizer and a learning rate of 0.01, with the cross-entropy loss function. However, since the outputs of the ILR step do not sum to one, we cannot directly apply it to the \boosted\ vector ILR computes. To overcome this issue, we add a logarithm followed by a softmax as the last layers of the model. If the sum of the \boosted\ vector is one, the composition of the logarithm and softmax functions corresponds to the identity function. Moreover, these two layers are monotonic increasing functions and preserve the order of the \boosted\ vector.

We use the dataset defined in~\cite{manhaeveDeepProbLogNeuralProbabilistic2018} with 30000 samples, and also run the experiment using only 10\% of the dataset (3000 samples). We run ILR for 5 epochs on the complete dataset, and 30 epochs on the small one. We repeat this experiment 10 times.
We are interested in the accuracy obtained in the test set for the addition task.
We ran the experiments on a MacBook Pro (2016) with a 3,3 GHz Dual-Core Intel Core i7.

\subsubsection{Results}

\begin{table}[]
    \centering
    \begin{tabular}{l|l|l}
         & 30000 & 3000 \\
         \hline
        DeepProblog~\cite{manhaeveDeepProbLogNeuralProbabilistic2018} & 97.20 $\pm$ 0.45 & 92.18 $\pm$ 1.57 \\
        LTN~\cite{badreddineLogicTensorNetworks2022} & 96.78 $\pm$ 0.5 & 92.15 $\pm$ 0.75 \\
        ILR & 96.67 $\pm$ 0.45 & 93.38 $\pm$ 1.70\\
    \end{tabular}
    \caption{Results on the MNIST addition task. We report the accuracy of predicting the sum (in \%) on the test set with 30000 and 3000 samples. DeepProbLog results are taken from~\cite{badreddineLogicTensorNetworks2022}. LTN results have been obtained by replicating the experiments of~\cite{badreddineLogicTensorNetworks2022}.}
    \label{tab:results}
\end{table}

ILR can efficiently learn to predict the sum, reaching results similar to the state of the art, requiring on average 30 seconds per epoch. However, sometimes ILR got stuck in a local minimum during training, where the accuracy reached was close to 50\%. It is worth noticing that LTN suffers from the same problem~\cite{badreddineLogicTensorNetworks2022}, with results strongly dependent on the initialization of the parameters. 
To better understand this local minimum, we analyzed the confusion matrix.
\begin{figure}
    \centering
    \includegraphics[width=7cm]{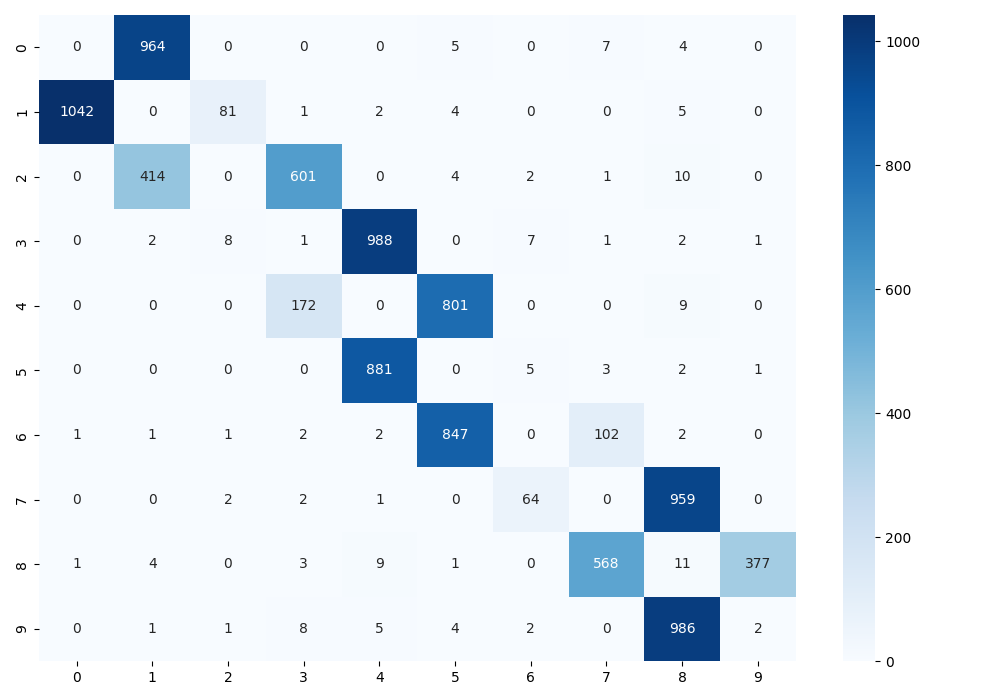}
    \caption{Confusion matrix on the MNIST classification for a local minimum}
    \label{fig:confusion}
\end{figure}
Figure~\ref{fig:confusion} shows one of the confusion matrices for a model stuck in the local minimum: the CNN recognizes each digit either as the correct digit minus one or plus one. Then, our model obtains the correct prediction in close to 50\% of the cases. For example, suppose the digits are a 3 and a 5. The 3 is classified either as either a 2 or a 4, while the 5 is classified as a 4 or a 6. If the model predicts 2 and 6 or 4 and 4, it returns the correct sum (8), otherwise, it does not. We believe that in these local minima there is no way for the model to change the digit predictions without increasing the loss, and the model remains stuck in the local minimum.

Table~\ref{tab:results} shows the results in terms of accuracy of ILR, LTN~\cite{badreddineLogicTensorNetworks2022} and DeepProblog~\cite{manhaeveDeepProbLogNeuralProbabilistic2018}. To calculate the accuracy, we follow~\cite{badreddineLogicTensorNetworks2022} and select only the models that do not stop in the local minimum. Notice that this problem is very rare for ILR (once every 30 runs) and happens more frequently with LTN (once every 5 runs).

\section{Conclusion and Future Work}
We analytically studied a large class of minimal fuzzy \boost\ functions. We used \boost\ functions to construct ILR, an efficient algorithm for general formulas. Another benefit of these analytical results is to get a good intuition into what kind of corrections are done by each t-norm.
In our experimental evaluation of this algorithm, we found that \emi{our algorithm converges much faster and often finds better solutions than the baseline ADAM, especially for problems that are less constraining. However, for complicated formulas and the product logic, we conclude ADAM finds better results}. Finally, we assess ILR on the MNIST Addition task and show it can be combined with a neural network, providing results similar to two of the most prominent methods for neuro-symbolic AI.

There is a lot of opportunity for future work on \boost\ functions. We will study how the \boost\ functions induced by different t-norms perform in practical neuro-symbolic integration settings. On the theoretical side, possible future work could be considering analytical \boost\ functions for certain classes of complex formulas. Furthermore, there are many classes of t-norms and norms for which finding analytical \boost\ functions is an open problem. Another promising avenue for research is designing specialized loss functions that handle biases in the gradients arising from combining constrained output layers with cross-entropy loss functions \cite{giunchigliaMultiLabelClassificationNeural2021}. We also want to highlight the possibility of extending the work on fuzzy \boost\ functions to probabilistic \boost\ functions, using a notion of minimality such as the KL-divergence.


\section*{Acknowledgements}
Alessandro Daniele and Emile van Krieken are involved in a HumaneAI Microproject.
HumaneAI received funding from the European Union’s Horizon 2020 research and innovation program under grant agreement No 761758.


\bibliographystyle{abbrvnat}
\bibliography{references.bib}

\appendix

\section{Basic T-norms (Proofs)}
\subsection{G\"{o}del t-norm minimal \boosted\ function proofs}
\subsubsection{G\"{o}del t-norm}
\label{appendix:godel-t-norm}
\begin{proposition}
    \label{prop:godel-t-norm}
    The minimal \boost\ function of the G\"{o}del t-norm for $\revis{T_G} \in [T_G(\truth, \truthc), \min_{i=1}^m C_i]$ is
    \begin{equation}
        \minboost_{T_G}(\truth, \revis{T_G})_i=\begin{cases}
            \revis{T_G} & \text{if } \truths_i < \revis{T_G}, \\
            \truths_i & \text {otherwise}
        \end{cases}
    \end{equation} 
\end{proposition}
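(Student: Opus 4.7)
The plan is to verify two things in turn: (i) the stated $\minboost_{T_G}(\truth,\revis{T_G})$ is a refinement, i.e.\ $T_G(\minboost_{T_G}(\truth,\revis{T_G}),\truthc)=\revis{T_G}$, and (ii) it minimizes $\|\boostv-\truth\|_p$ over all refined vectors. Throughout I will use the hypothesis $T_G(\truth,\truthc)\leq \revis{T_G}\leq \min_{j=1}^m C_j$, which keeps the constants from being the bottleneck.

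For (i), I would argue by cases based on where the minimum sits. After refinement every component satisfies $\minboosts_i\geq \revis{T_G}$ by construction, and $C_j\geq \revis{T_G}$ by hypothesis, so $T_G(\minboostv,\truthc)\geq \revis{T_G}$. For the reverse inequality I would split on whether some $\truths_i<\revis{T_G}$ (then that coordinate becomes exactly $\revis{T_G}$, hitting the minimum), or no $\truths_i<\revis{T_G}$ (then the refinement is the identity, and the hypothesis $\revis{T_G}\geq T_G(\truth,\truthc)=\min(\min_i\truths_i,\min_j C_j)$ forces equality on the right-hand side, so the untouched vector already evaluates to $\revis{T_G}$). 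A short sub-argument is needed here to rule out the apparently problematic configuration where both $\min_i\truths_i>\revis{T_G}$ and $\min_j C_j>\revis{T_G}$, using $\revis{T_G}\geq T_G(\truth,\truthc)$ for a contradiction.

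For (ii), the key observation is that every refined vector $\boostv$ must satisfy $\boosts_i\geq \revis{T_G}$ for all $i$, because $T_G(\boostv,\truthc)=\min(\min_i\boosts_i,\min_j C_j)=\revis{T_G}$ together with $\min_j C_j\geq \revis{T_G}$ gives $\min_i\boosts_i\geq \revis{T_G}$. Since the $p$-norm is coordinate-wise separable under box constraints, I can minimize $|\boosts_i-\truths_i|^p$ independently for each $i$ subject to $\boosts_i\in[\revis{T_G},1]$: when $\truths_i<\revis{T_G}$ the unique minimizer is $\boosts_i=\revis{T_G}$, and when $\truths_i\geq \revis{T_G}$ the minimizer is $\boosts_i=\truths_i$, which is exactly the definition of $\minboost_{T_G}$. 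Summing (or taking the $p$-th power and summing) gives minimality with respect to $\|\cdot\|_p$.

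The main obstacle is the bookkeeping in step (i): the piecewise definition hides a couple of degenerate cases (in particular when the refinement is the identity) and one has to verify that the hypothesis on $\revis{T_G}$ exactly rules out the bad case. Once that is in place, step (ii) is a transparent coordinate-wise argument because the G\"odel t-norm imposes only the lower-bound constraint $\boosts_i\geq \revis{T_G}$, decoupling the optimization across coordinates; no Lagrange multipliers or KKT analysis is needed.
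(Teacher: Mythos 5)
Your proof is correct and follows essentially the same route as the paper's: both hinge on the observation that $T_G(\boostv,\truthc)=\revis{T_G}$ forces $\boosts_i\geq\revis{T_G}$ for every $i$, after which the optimization decouples coordinate-wise and the stated vector is the closest point satisfying that lower bound. The only difference is presentational --- the paper argues by contradiction via strict convexity of the norm in each coordinate and leaves the feasibility check $T_G(\minboostv,\truthc)=\revis{T_G}$ implicit, whereas you verify it explicitly in step (i), which is a welcome bit of extra care.
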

\begin{proof}
    Assume otherwise. Then there is a \boosted\ vector $\boostv$ for $T_G$, $\truth\in [0, 1]^n$ and $\revis{T_G} \in [T_G(\truth), \min_{i=1}^m]$ such that $\boostv\neq \minboostv$ while $\|\boostv - \truth\|_p < \|\minboostv - \truth\|_p$, where $\minboostv=\minboost_{T_G}(\truth, \revis{T_G})$. Since $T_G(\boostv)=\revis{T_G}$, for all $i\in \{1, ..., n\}$, $\boosts_i\geq \revis{T_G}$ and so necessarily for all $i$ such that $\truths_i< \revis{T_G}$, $\boosts_i \geq \revis{T_G}$. Since there is some $i$ such that $\boosts_i\neq \minboosts_i$, either $\truths_i < \revis{T_G}$ and then necessarily $\boosts_i > \minboosts_i$, or $\boosts_i \geq \revis{T_G}$ but $\boosts_i \neq \minboosts_i=\truths_i$. In either case, since $\|\cdot \|_p$ is strictly convex in each argument with minimum at $\truth$, $\|\boostv - \truth\|_p > \|\minboostv - \truth\|_p$, hence $\boostv$ could not have smaller norm. 
\end{proof}

\subsubsection{G\"{o}del t-conorm}
\label{appendix:godel-t-conorm}
A derivation for increasing the G\"{o}del t-conorm was first presented in \cite{danieleKnowledgeEnhancedNeural2019} and is adapted to our notation here:
\begin{proposition}
    \label{prop:godel-t-conorm}
    The minimal \boost\ function of the G\"{o}del t-conorm for $\revis{S_G} \in [S_G(\truth, \truthc), 1]$ is 
    \begin{equation}
        \minboost_{S_G}(\truth, \revis{S_G})_i=\begin{cases}
            \revis{S_G} & \text{if } i=\arg\max_{j=1}^n \truths_j \\
            \truths_i & \text {otherwise.}
        \end{cases}
    \end{equation}
\end{proposition}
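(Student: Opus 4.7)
The plan is to establish two things: that the candidate $\minboostv$ is a valid refined vector for the value $\revis{S_G}$, and that no alternative refined vector can achieve a smaller distance to $\truth$ under any $p$-norm. The argument is a direct analog of the G\"odel t-norm case in Proposition \ref{prop:godel-t-norm}, but run coordinate-by-coordinate against the $\max$ operation rather than $\min$.

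First I would verify validity. Because the stated domain starts at $S_G(\truth, \truthc) = \max(\max_j \truths_j,\, \max_i C_i)$, we have both $\revis{S_G}\geq \max_j \truths_j$ and $\revis{S_G}\geq \max_i C_i$. Setting the argmax propositional coordinate to $\revis{S_G}$ therefore leaves every other propositional value and every constant at most $\revis{S_G}$, so $S_G(\minboostv, \truthc) = \revis{S_G}$ as required.

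For minimality, I would take an arbitrary refined vector $\boostv'$ with $S_G(\boostv', \truthc) = \revis{S_G}$ and analyze it coordinatewise. Every coordinate must satisfy $\boosts'_j \leq \revis{S_G}$, and in the nontrivial case $\revis{S_G} > \max_i C_i$, the maximum can only reach $\revis{S_G}$ if some propositional index $k$ has $\boosts'_k = \revis{S_G}$. Then the $k$th term alone forces $\|\boostv' - \truth\|_p^p \geq (\revis{S_G} - \truths_k)^p$, and this lower bound is itself minimized over the choice of $k$ by taking $k = \arg\max_j \truths_j$. The candidate $\minboostv$ attains exactly this value and leaves every other coordinate fixed at $\truths_i$; since the $p$-norm is strictly convex in each coordinate with minimum at $\truth$, any deviation from $\minboostv$ in a different index, or any additional perturbation of another coordinate, strictly increases the norm.

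The only delicate points are bookkeeping rather than substance: handling the boundary case $\revis{S_G} = \max_i C_i$ (where $\truth$ itself is trivially minimal and the formula agrees up to the choice of which argmax coordinate to set), and the case of ties in $\arg\max_j \truths_j$ (where any such choice yields an equally minimal refined vector). Neither affects the validity of the argument; they merely show that minimal refined vectors need not be unique.
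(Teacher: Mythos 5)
Your proof is correct. Note that the paper itself supplies no argument for this proposition: Appendix \ref{appendix:godel-t-conorm} only remarks that the derivation was first presented in the KENN paper and states the result without proof. What you have written is essentially the missing proof, and it follows the same template the paper uses for the G\"{o}del t-norm (Proposition \ref{prop:godel-t-norm}): feasibility is immediate because $\revis{S_G}\geq S_G(\truth,\truthc)$ bounds all other coordinates and all constants from above, and for minimality any \boosted\ vector $\boostv'$ with $S_G(\boostv',\truthc)=\revis{S_G}>\max_i C_i$ must have some propositional coordinate equal to $\revis{S_G}$, so its distance to $\truth$ is at least $\revis{S_G}-\max_j\truths_j$, which the candidate attains while leaving every other coordinate untouched. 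Two cosmetic remarks. First, the phrase \say{strictly convex in each coordinate} is not literally true for $p=1$; what you actually use is that $\vert \boosts_i - \truths_i\vert$ is zero iff $\boosts_i=\truths_i$ and strictly positive otherwise, which holds for every $p$-norm (the paper's own t-norm proof has the same loose phrasing). Second, your observations about ties in the argmax and about the boundary case $\revis{S_G}=\max_i C_i$ are consistent with the paper's definition of minimality, which only forbids the existence of a strictly closer \boosted\ vector and so does not require uniqueness.
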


\subsubsection{G\"{o}del Implication}
\label{appendix:godel-implication}
We next present a proof for Proposition \ref{prop:godel-impl}. 
\begin{proof}
    First, assume $\revis{R_G} <  1$. To ensure $R_G(\truths_1, \truths_2)=\revis{R_G}$, we require $\truths_2 =\revis{R_G}$ as is clear from the definition. However, we also require $\truths_1 > \revis{R_G}$. If $\truths_1$ is already larger, we can leave it to ensure minimality. Otherwise, we require it to be at least infinitesimally bigger, that is $\revis{R_G} + \epsilon$. 

    Next, assume $\revis{R_G}=1$. If $\truths_1\leq \truths_2$, then the implication is already 1 and we do not need to revise anything. Otherwise, setting it equal to any value between $\truths_2$ and $\truths_1$ is minimal. \emile{should probably formalize this}
\end{proof}

\subsection{\luk\ t-norm minimal \boosted\ function proofs}
\subsubsection{\luk\ t-norm}
\label{appendix:luk-tnorm}
\begin{proposition}
    \label{prop:luk-t-norm}
     Let $\revis{T_L}\in[T_L(\truth, \truthc), \max(\|\truthc\|_1 - (m - 1), 0)]$ and define $\lukincrease_K=\frac{\revis{T_L}+ m + K -1-\|\truthc\|_1 - \sum_{i=1}^ K\truths^\uparrow_{i}}{K}$. Let $K^ *$ be the largest integer $1\leq K\leq \vert D \vert$ such that $\lukincrease_{K}<1-\truths^ \uparrow_{K}$. Then the minimal \boost\ vector of the \luk\ t-norm is 
\begin{equation}
    \minboost_{T_L}(\truth, \revis{T_L})_i=\begin{cases}
        \truths_i + \lukincrease_{K^*} & \text{if } \truths_i \leq \truths^\uparrow_{K}, \\
        1 & \text {otherwise}
    \end{cases}
\end{equation}
\end{proposition}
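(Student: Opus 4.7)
The plan is to recognize this as a constrained convex optimization problem, reduce the nonlinear constraint $\op_\varphi(\boostv)=\revis{T_L}$ to a single linear equality on $\|\boostv\|_1$, and then characterize the minimizer via the KKT conditions plus a greedy saturation argument that identifies which upper-box constraints are active.

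First, I would observe that because $\revis{T_L}\in[T_L(\truth,\truthc),\max(\|\truthc\|_1-(m-1),0)]$, in particular $\revis{T_L}>0$, and so on the feasible set the outer $\max$ in $T_L(\boostv,\truthc)=\max(\|\boostv\|_1+\|\truthc\|_1-(n+m-1),0)$ is attained by its interior argument. Hence the constraint $\op_\varphi(\boostv)=\revis{T_L}$ is equivalent to the affine equality
\[
\sum_{i=1}^n \boosts_i \;=\; B, \qquad B := \revis{T_L}+n+m-1-\|\truthc\|_1,
\]
together with $0\le \boosts_i\le 1$. Our optimization is therefore: minimize $\|\boostv-\truth\|_p$ over this polytope. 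By the hypothesis $\revis{T_L}\ge T_L(\truth,\truthc)$, we have $B\ge \|\truth\|_1$, so we must \emph{increase} the total mass of $\truth$ by $\Delta := B-\|\truth\|_1\ge 0$.

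Second, I would write the KKT conditions for this convex program. Let $\mu$ be the multiplier for the equality and $\nu_i,\omega_i\ge 0$ the multipliers for $\boosts_i\ge 0$ and $\boosts_i\le 1$. Stationarity of $\|\boostv-\truth\|_p$ forces $p|\boosts_i-\truths_i|^{p-1}\mathrm{sign}(\boosts_i-\truths_i)=\mu+\nu_i-\omega_i$. Since $\Delta\ge 0$ the constraint $\boosts_i\ge 0$ is inactive, so $\nu_i=0$. On the set $F$ of indices with $\boosts_i<1$ we also have $\omega_i=0$, so $\boosts_i-\truths_i$ equals a common non-negative constant $\lukincrease$ on $F$; the remaining indices are clamped at $1$. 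Thus every candidate minimizer has the shape \emph{``raise a set $F$ of coordinates by a common $\lukincrease$ and saturate the rest at $1$''}.

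Third, I would identify $F$. Because saturation can only help indices with $\truths_i+\lukincrease>1$, and because larger $\truths_i$ are the ones most likely to overshoot, one expects $F$ to consist of the $K$ smallest entries of $\truth$, i.e. $\truths^\uparrow_1,\dots,\truths^\uparrow_K$. Matching total mass $\sum_i \boosts_i = B$ under this ansatz, the $(n-K)$ saturated coordinates contribute $n-K$ and the free ones contribute $\sum_{i=1}^K\truths^\uparrow_i+K\lukincrease_K$, which solves to the stated
\[
\lukincrease_K \;=\; \frac{\revis{T_L}+m+K-1-\|\truthc\|_1-\sum_{i=1}^K \truths^\uparrow_i}{K}.
\]
The consistency of this ansatz requires two inequalities: the largest free coordinate stays feasible, $\truths^\uparrow_K+\lukincrease_K\le 1$, and the smallest saturated coordinate genuinely wants saturation, $\truths^\uparrow_{K+1}+\lukincrease_K\ge 1$ (when $K<n$). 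I would then verify by elementary algebra that these two inequalities are satisfied simultaneously exactly when $K$ equals the largest index with $\lukincrease_K<1-\truths^\uparrow_K$, i.e., the $K^*$ of the statement. Finally, convexity of the objective and feasible set ensures that this KKT point is a global minimum (unique for $p>1$, one of many equivalent minimizers for $p=1$).

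The main obstacle will be the consistency step: proving that ``the largest $K$ with $\lukincrease_K<1-\truths^\uparrow_K$'' does simultaneously certify $\truths^\uparrow_{K+1}+\lukincrease_K\ge 1$, so that no alternative active set yields a strictly smaller $\|\boostv-\truth\|_p$. This rests on showing a monotonicity relation between $\lukincrease_K$ and $K$ (roughly, shrinking $F$ forces $\lukincrease_K$ to grow fast enough to keep freshly-saturated coordinates at the boundary), together with careful handling of the edge cases $K^*=n$ (no saturation needed, $\lukincrease_n\le 1-\truths^\uparrow_n$ trivially) and $K^*=1$ (almost everything saturated). Once this discrete combinatorial fact is nailed down, the rest is routine KKT bookkeeping.
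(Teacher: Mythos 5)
Your proposal is correct and follows essentially the same route as the paper's own proof: both linearize the constraint (using $\revis{T_L}>0$ to drop the outer $\max$), apply the KKT conditions to deduce a common increment on the unsaturated coordinates with the rest clamped at $1$, and certify the active set via the monotonicity $\lukincrease_{K^*}\geq\lukincrease_{K^*+1}\geq 1-\truths^\uparrow_{K^*+1}$, which is exactly the dual-feasibility step the paper carries out. The "main obstacle" you identify is precisely the algebraic verification the paper performs, so no new ideas are needed beyond what you outline.
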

\begin{proof}
    We will prove this using the KKT conditions, which are both necessary and sufficient for minimality for the \luk\ t-norm since it is affine when the max constraint is not active. We drop the $p$-root in the norm since it is a strictly monotonically increasing function.  The Lagrangian and corresponding derivative is
    \begin{align*}
        \ell&=\sum_{i=1}^n \vert\boosts_i -\truths_i\vert^p + \lambda(\max(\|\boostv\|_1 + \|\truthc\|_1 - (m+n-1), 0) - \revis{T_L}) +\sum_{i=1}^ n \gamma_i(\boosts_i -1)\\
        \frac{\partial \ell}{\partial  \boosts_i} &= p(\boosts_i - \truths_i)^{p-1} + \lambda \frac{\partial }{\partial \boosts_i} \max(\|\boostv\|_1 + \|\truthc\|_1 - (m+n-1), 0) + \gamma_i =0.
    \end{align*}
    We note that we drop the absolute signs since $T_L$ is strictly monotonically increasing function and $\revis{T_L} \geq T_L(\truth, \truthc)$. 
    Assuming $\revis{T_L} > 0$, $T_L(\boostv, \truthc)=\revis{T_L}$ can only be true if the first argument of $\max$ is chosen. Then for all $i, j\in \{1, ..., n\}$, $p(\boosts_i - \truths_i)^ {p-1} + \gamma_i=p(\boosts_j - \truths_j)^ {p-1} +  \gamma_j$. Define $I$ as the set of $K^ *$ smallest $\truths_i$. 
    \begin{itemize}
        \item \emph{Primal feasibility:} For all $i\in I$, $\minboost_{T_L}(\truth, \revis{T_L})_i=\lukincrease_{K^ *}\leq 1$ by definition. For all $i\in \{1, ..., n\}\setminus I$, $\minboost_{T_L}(\truth, \revis{T_L})_i=1-\truths_i$. Furthermore, 
        \begin{align*}
        T_L(\minboost_{T_L}(\truth, \revis{T_L}), \truthc) &=\max(\sum_{i=1}^{K^*} (\truths^\uparrow_i + \lukincrease_{K^ *}) +\sum_{i=K^* +1}^n 1 + \|\truthc\|_1 - n - m + 1, 0) \\
        &= \max(\sum_{i=1}^{K^*} \truths^\uparrow_i + K^ * \lukincrease_{K^ *} + n - K^ * + \|\truthc\|_1 - n - m + 1, 0) \\
        &= \max(\sum_{i=1}^{K^*}\truths_i^\uparrow + \revis{T_L} + m + K^ * -1 -\|\truthc\|_1 - \sum_{i=1}^{K^*} \truths^\uparrow_i  \\
        &- K^ *   + \|\truthc\|_1  - m+ 1, 0)  =\revis{T_L}
        \end{align*}
        \item \emph{Complementary Slackness:} Clearly, for all $i\in I$, we require $\gamma_i=0$. For all $i\in \{1, ..., n\}\setminus I$, $\minboost_{T_L}(\truth, \revis{T_L}t)_i -1 = 1 - 1 = 0$. 
        \item \emph{Dual feasibility:} 
        For all $i\in I$, $\gamma_i=0$. For $i\in \{1, ..., n\}\setminus I$, consider some $j\in I$ and note that $p(\boosts_i - \truths_i)^ {p-1} + \gamma_i=p(\boosts_j - \truths_j)^ {p-1} +  \gamma_j$. Filling in $\boostv$, we find $\gamma_i= p\lukincrease_{K^ *}^ {p-1} - p(1-\truths_i)^ {p-1}$. This is nonnegative if $\lukincrease_{K^ *}\geq 1-\truths_i$. First, we show $\lukincrease_{K^*} \geq \lukincrease_{K^*+1}$. Write out their definitions, multiply by $K^*(K^*+1)$ and remove common terms. Then,
        \begin{align*}
            \revis{T_L} + m - 1 - \|\truthc\|_1 - \sum_{i=1}^{K^*}\truths_{i}^\uparrow &\geq -K^* \truths^\uparrow_{K^*+1}\\
            \revis{T_L} + m + K^*+1 -\|\truthc\|_1 - \sum_{i=1}^{K^*+1}\truths_{i}^\uparrow &\geq (K^*+1)(1-\truths_{K^*+1}^\uparrow) \\
            \lukincrease_{K^*+1}&\geq 1-\truths_{K^*+1}^\uparrow.
        \end{align*}
        $\lukincrease_{K^*+1}\geq 1-\truths_{K^*+1}^\uparrow$ is true by the construction in the proposition. Therefore, 
        \begin{equation*}
            \lukincrease_{K^*}\geq\lukincrease_{K^*+1}\geq 1-\truths^\uparrow_{K^*+1}\geq 1-\truths_i,
        \end{equation*}
        proving dual feasibility.
    \end{itemize}
\end{proof}   
\subsubsection{\luk\ t-conorm}
\label{appendix:luk-tconorm}
\begin{proposition}
    \label{prop:luk-t-conorm}
    The minimal \boost\ function of the \luk\ t-conorm for $\revis{S_L}\in[S_L(\truth, \truthc), \revis{S_L}]$ is
\begin{equation}
    \minboost_{S_L}(\truth, \revis{S_L})_i=
        \truths_i + \frac{\max(\revis{S_L}-\|\truth\|_1 - \|\truthc\|_1, 0)}{n}
\end{equation}
\end{proposition}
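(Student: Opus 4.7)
The strategy is to verify the Karush--Kuhn--Tucker (KKT) conditions, in direct analogy with the proof of Proposition~\ref{prop:luk-t-norm} for the \luk\ t-norm. Since we are given $\revis{S_L}\in[S_L(\truth,\truthc),1]$, the constraint forces $\boosts_i\geq\truths_i$ for the minimizer, so the absolute values in the $L_p$ norm can be dropped. Additionally, $\revis{S_L}\leq 1$ implies that whenever we need $\|\boostv\|_1+\|\truthc\|_1=\revis{S_L}$ (the nontrivial case where $\|\truth\|_1+\|\truthc\|_1<\revis{S_L}$), the outer $\min$ in $S_L$ is not saturated, so the constraint is the affine equation $\|\boostv\|_1+\|\truthc\|_1=\revis{S_L}$. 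The resulting optimization problem is convex: convex objective, affine equality constraint, and box constraints.

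The plan is then to write the Lagrangian
\begin{equation*}
    \ell=\sum_{i=1}^n(\boosts_i-\truths_i)^p+\lambda\bigl(\|\boostv\|_1+\|\truthc\|_1-\revis{S_L}\bigr)+\sum_{i=1}^n\gamma_i(\boosts_i-1)-\sum_{i=1}^n\mu_i\boosts_i
\end{equation*}
and differentiate with respect to $\boosts_i$ to obtain $p(\boosts_i-\truths_i)^{p-1}+\lambda+\gamma_i-\mu_i=0$. The candidate solution $\minboosts_i=\truths_i+\Delta$ with $\Delta=(\revis{S_L}-\|\truth\|_1-\|\truthc\|_1)/n$ gives a constant difference $\minboosts_i-\truths_i=\Delta$ for all $i$, which is exactly what stationarity with $\mu_i=\gamma_i=0$ demands. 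Primal feasibility on the t-conorm value is a direct calculation: summing gives $\|\minboostv\|_1+\|\truthc\|_1=\|\truth\|_1+n\Delta+\|\truthc\|_1=\revis{S_L}\leq 1$.

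The key technical lemma I would need is the upper bound $\minboosts_i\leq 1$, to justify $\gamma_i=0$. This amounts to showing $\Delta\leq 1-\truths_i$, i.e.\ $\revis{S_L}-\|\truth\|_1-\|\truthc\|_1\leq n(1-\truths_i)$. Using $\revis{S_L}\leq 1$, it suffices to prove $n\truths_i-\|\truth\|_1\leq n-1+\|\truthc\|_1$, which follows from the elementary bound $\sum_{j=1}^n(\truths_i-\truths_j)\leq n-1$ (there are $n-1$ nonzero terms, each at most $1$). This also shows the inequality is strict unless we are in a degenerate configuration where $\Delta=0$ anyway, so the multipliers $\gamma_i$ genuinely vanish.

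Finally, I would handle the remaining case $\revis{S_L}\leq\|\truth\|_1+\|\truthc\|_1$: here $\Delta=0$ by the $\max$, and the assumption $\revis{S_L}\geq S_L(\truth,\truthc)=\min(\|\truth\|_1+\|\truthc\|_1,1)$ forces $\|\truth\|_1+\|\truthc\|_1>1$ and $\revis{S_L}=1$, so $\minboostv=\truth$ is trivially feasible and optimal (with zero cost). The main obstacle I anticipate is book-keeping across the two regimes ($\revis{S_L}<1$ vs.\ $\revis{S_L}=1$) and explicitly certifying that no upper-bound constraint $\boosts_i\leq 1$ becomes active; once the $L_\infty$-style bound above is in hand, the KKT verification is routine and exactly parallels the end of the proof of Proposition~\ref{prop:luk-t-norm}.
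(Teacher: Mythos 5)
Your proposal is correct and follows essentially the same route as the paper: a Lagrangian/KKT argument in which stationarity forces a constant increment $\delta=(\revis{S_L}-\|\truth\|_1-\|\truthc\|_1)/n$ on every coordinate, followed by an a posteriori check that the box constraints $\boosts_i\le 1$ are inactive (the paper certifies this via $\delta\le\frac{1-\|\truth\|_1-\|\truthc\|_1}{n}\le 1-\truths_i$, your bound $n\truths_i-\|\truth\|_1\le n-1$ is an equivalent variant). The only nitpick is in your degenerate branch: $\revis{S_L}\le\|\truth\|_1+\|\truthc\|_1$ together with $\revis{S_L}\ge S_L(\truth,\truthc)$ does not force $\|\truth\|_1+\|\truthc\|_1>1$, since equality $\revis{S_L}=\|\truth\|_1+\|\truthc\|_1\le 1$ is also possible; but there too $\Delta=0$ and $\minboostv=\truth$ already attains the target, so the conclusion stands.
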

\begin{proof}
    We do not add multipliers for the constraints on $\boosts_i$, and show critical points adhere to these constraints. The Lagrangian is
\begin{equation}
    \ell=\sum_{i=1}^n (\boosts_i - \truths_i)^p + \lambda(\min(\|\boostv\|_1 + \|\truthc\|_1, 1) - \revis{S_L})
\end{equation}
Note that $\revismax{S_L}=1$. 
Taking the derivative to $\boosts_i$, we find
\begin{align*}
    \frac{\partial \ell}{\partial \boosts_i} =  p\cdot (\boosts_i - \truths_i)^{p-1} + \lambda\frac{\partial}{\partial \boosts_i} \min(\min(\|\boostv\|_1 + \|\truthc\|_1, 1)=0
\end{align*}
Assume $\revis{S_L}\neq S_L(\truth)$, this gives three cases for all $i\in \{1, ..., n\}$:
\begin{enumerate}
    \item If $\|\truth\|_1 + \|\truthc\| \geq 1$ and $\revis{S_L}=1$, then since $\boosts_i \geq \truths_i$, $\frac{\partial}{\partial \boosts_i} \min(\|\boostv\|_1 + \|\truthc\|_1, 1)=\frac{\partial}{\partial \boosts_i}1=0$, and so $\boosts_i=\truths_i$.
    \item If $\|\truth\|_1 + \|\truthc\| \geq 1$, then $\revismin{S_L} =\revismax{S_L}= 1$, and again $\boosts_i=\truths_i$. 
    \item Otherwise, it must be that $\|\boostv\|_1 + \|\truthc\|_1\leq 1$ and so $\frac{\partial}{\partial \boosts_i} \min(\|\boostv\|_1 + \|\truthc\|_1, 1)=\frac{\partial}{\partial \boosts_i}\|\boostv\|_1=1$, and therefore $p\cdot(\boosts_i - \truths_i)^{p-1}=-\lambda$. \todo{Note that if $p=1$, this means the gradient is constant everywhere. This means the proof doesn't work for p=1}  Since the equality holds for all $i\in \{1, ..., n\}$, we find $p\cdot(\boosts_i - \truths_i)^{p-1}=p\cdot(\boosts_j - \truths_j)^{p-1}$ for all $i, j\in \{1, ..., n\}$. As we are only interested in real nonnegative solutions,  we find that $\boosts_i - \truths_i=\boosts_j - \truths_j=\delta$. 
    Since $\|\boostv\|_1 + \|\truthc\|_1  = \|\truth\|_1 + \|\truthc\|_1 + n \delta  = \revis{S_L}$, we find
    \begin{equation*}
        \delta = \frac{\revis{S_L}-\|\truth\|_1 - \|\truthc\|_1}{n}, \quad \boosts_i=\truths_i + \delta.
    \end{equation*}
    Note that $\boosts_i\geq \truths_i$, since by assumption $\revis{S_L}\geq S_L(\truth, \truthc)$, and $\boosts_i \leq 1$ since by $\revis{S_L} \leq \revismax{S_L}\leq1$, $\delta = \frac{\revis{S_L}-\|\truth\|_1 - \|\truthc\|_1}{n} \leq \frac{1-\|\truth\|_1 - \|\truthc\|_1}{n}\leq \frac{1-\truths_i}{n}\leq 1-\truths_i$, that is, the constraints of Equation \ref{eq:optim-problem} are satisfied.

\end{enumerate}
\end{proof}%

\section{Dual Problem}
This section introduces a dual problem to Equation \ref{eq:optim-problem}. This is used extensively in several proofs. 
\subsection{Strict cone monotonicity}
\label{appendix:cone-monotonicity}
\begin{definition}
    \label{def:cone-monotone}
    A set $K\subset [0, 1]^n$ is a \emph{(convex) cone} if for every $s>0$ and $\truth\in K$ such that $s\truth\in [0, 1]^n$, also $s\truth \in K$. 

    A fuzzy evaluation operator $\op_\varphi$ is \emph{strictly cone-increasing} at $\truth\in [0, 1]^n$ if there is a nonempty cone $K(\truth)$ such that $\truth'-\truth \in K$ implies $\op_\varphi(\truth) < \op_\varphi(\truth')$.
\end{definition}

Strict cone-monotonicity is a weak notion of monotonicity in the sense that all t-norms that are strictly increasing in each argument are strictly cone-increasing, but the reverse need not be true. 

\begin{proposition}
    \label{prop:cone-monotone}
    If $f_\varphi$ is non-decreasing and strictly cone-increasing at $\truth \in [0, 1]^n$, there exist a nonempty cone $K'(\truth)\subseteq K(\truth)$ such that $\truth'-\truth\in K'(\truth)$ implies $\truth'_i \geq \truths_i$ for all $i\in \{0, ..., n\}$.
\end{proposition}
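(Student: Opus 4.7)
The plan is to start from an arbitrary witness direction $\boldsymbol{d}\in K(\truth)$ (which exists by nonemptiness of $K(\truth)$) and construct a new cone made of componentwise nonnegative vectors, using monotonicity of $\op_\varphi$ to preserve the strict-increase property.

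First, I would introduce the componentwise positive part $\boldsymbol{d}^{+}$ defined by $d^{+}_i = \max(d_i, 0)$, so that $\boldsymbol{d}^{+}\geq \boldsymbol{d}$ coordinatewise and $\boldsymbol{d}^{+}\in [0,\infty)^n$. For any scalar $s>0$ small enough that both $\truth + s\boldsymbol{d}\in[0,1]^n$ and $\truth + s\boldsymbol{d}^{+}\in[0,1]^n$, the cone property of $K(\truth)$ gives $s\boldsymbol{d}\in K(\truth)$, hence by strict cone-increasingness $\op_\varphi(\truth + s\boldsymbol{d}) > \op_\varphi(\truth)$. Since $\truth + s\boldsymbol{d}^{+}\geq \truth + s\boldsymbol{d}$ componentwise and $\op_\varphi$ is non-decreasing, this chains to $\op_\varphi(\truth + s\boldsymbol{d}^{+})\geq \op_\varphi(\truth + s\boldsymbol{d}) > \op_\varphi(\truth)$.

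Next, I would define
\[
K'(\truth) = \{\, s\boldsymbol{d}^{+} : \boldsymbol{d}\in K(\truth),\ s>0,\ \truth + s\boldsymbol{d}^{+}\in[0,1]^n \,\}.
\]
Nonemptiness follows by picking any $\boldsymbol{d}\in K(\truth)$ and $s>0$ sufficiently small. Every element has nonnegative coordinates by construction, giving $\truth'_i \geq \truths_i$ for all $i$ whenever $\truth'-\truth\in K'(\truth)$. Closure under positive scaling (the cone property as given in Definition~\ref{def:cone-monotone}) is immediate: if $\boldsymbol{e}=s\boldsymbol{d}^{+}\in K'(\truth)$ and $r>0$ with $r\boldsymbol{e}+\truth\in[0,1]^n$, then $r\boldsymbol{e} = (rs)\boldsymbol{d}^{+}$ is of the required form. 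The argument of the previous paragraph shows every element of $K'(\truth)$ yields $\op_\varphi(\truth')>\op_\varphi(\truth)$, so $K'(\truth)$ is itself a valid witness cone for the strict cone-increasing property, and the inclusion $K'(\truth)\subseteq K(\truth)$ holds when $K(\truth)$ is taken to be the maximal such witness (which one may assume without loss of generality, since if $K(\truth)$ witnesses strict cone-increasingness then so does any union of cones all of whose elements are strict-increase directions).

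The main subtlety I expect is the subset relation $K'(\truth)\subseteq K(\truth)$: the positive part $\boldsymbol{d}^{+}$ need not itself lie in the originally chosen $K(\truth)$. I would handle this by replacing $K(\truth)$ at the outset with the canonical maximal cone $\{\boldsymbol{d} : \truth+s\boldsymbol{d}\in[0,1]^n \text{ and } \op_\varphi(\truth+s\boldsymbol{d})>\op_\varphi(\truth) \text{ for all sufficiently small } s>0\}$, which is a cone witnessing the strict-increase property whenever $\op_\varphi$ is strictly cone-increasing, and which contains every element constructed above. All other steps are routine once this maximality convention is in place.
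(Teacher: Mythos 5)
Your proposal is correct and follows essentially the same route as the paper: the paper's proof also takes an arbitrary direction $\truth'-\truth$ in $K(\truth)$, zeroes out its negative components (setting $\hat{\truth}_i=\truths_i$ wherever $\truth'_i<\truths_i$), and uses the non-decreasing property to chain $f_\varphi(\hat{\truth})\geq f_\varphi(\truth')>f_\varphi(\truth)$, declaring the resulting rays to form $K'(\truth)$. If anything, you are more careful than the paper, which silently ignores the issue that the positive-part directions need not lie in the originally chosen witness cone; your maximal-cone convention is a reasonable way to patch that.
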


\begin{proof}
    Assume otherwise. Consider some $\truth'$ such that $s(\truth'-\truth)\in K(\truth)$ for $s>0$. By assumption, there is some $i\in \{0, ..., n\}$ such that $\truth'_i<\truths_i$. Consider $\hat{\truth}$ equal to $\truth'$ except that $\hat{\truth}_i=\truths_i$ for such $i$. Since $f_\varphi$ is non-decreasing in each argument, $f_\varphi(\hat{\truth})\geq f_\varphi(\truth') > f_\varphi(\truth)$, then clearly $s(\hat{\truth}-\truth)$ for $s>0$ forms the cone $K'(\truth)$.
\end{proof}
\subsection{Dual problem}
Next, we will investigate a dual problem  for the problem in Equation \ref{eq:optim-problem} that will allow us to prove multiple useful theorems:
\begin{equation}
    \label{eq:optim-problem-max}
    \begin{aligned}
        \textrm{For all } \quad & \truth\in [0,1]^n, u \in [0, \infty):  \\
        \max_{\boostv} \quad & \op_\varphi(\boostv)   \\
        \textrm{such that } \quad & \|\boostv - \truth\| = u,  \\
        & 0\leq  \boosts_i \leq 1. 
    \end{aligned}
\end{equation}
That is, instead of finding the $\boostv$ closest to $\truth$ with \boost\ value $\revis{\varphi}$, we find the largest \boosted\ value attainable with a fixed budget $u$. We need to be precise when solutions of this dual problem coincide with the problem in Equation \ref{eq:optim-problem}. We consider strict cone-monotonicity \cite{vandykeConeMonotonicityStructure2013,clarkeSubgradientCriteriaMonotonicity1993}, which is a weak notion of strict monotonicity for higher dimensions. This intuitively means that there is always some direction we can move in to increase the value of the t-norm. Since t-norms are already non-decreasing in each argument, this implies there is no point where the t-norm is \say{flat} in all directions. The precise definition is given in Definition \ref{def:cone-monotone}.

\begin{theorem}
    \label{theorem:max-to-min}
   A solution $\minboostv$ for some $\op_\varphi$, $\truth$ and $u\geq 0$ of Equation \ref{eq:optim-problem-max} is also a solution to Equation $\ref{eq:optim-problem}$ for $\truth$ and $\revis{\varphi}=\op_{\varphi}(\minboostv)\geq\op_{\varphi}(\truth)$ if $\op_\varphi$ is non-decreasing in all arguments and strictly cone-increasing at each $\truth'\in [0, 1]^n$ such that $\op_\varphi(\truth')=\revis{\varphi}$, and if $\|\cdot \|$ is strictly increasing in all arguments.
\end{theorem}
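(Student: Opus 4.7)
The plan is to argue by contradiction: produce a point $\boostv'\in[0,1]^n$ with $\|\boostv'-\truth\|=u$ and $\op_\varphi(\boostv')>\revis{\varphi}$, which directly contradicts the maximality of $\minboostv$ on the sphere $F_u:=\{\boostv\in[0,1]^n:\|\boostv-\truth\|=u\}$. So assume, for contradiction, there exists $\boostv''\in[0,1]^n$ with $\op_\varphi(\boostv'')=\revis{\varphi}$ and $\|\boostv''-\truth\|<u$; the construction of $\boostv'$ will happen in two stages followed by an intermediate value step.

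First I would invoke strict cone-monotonicity at $\boostv''$, valid because $\op_\varphi(\boostv'')=\revis{\varphi}$, and combine it with Proposition \ref{prop:cone-monotone} to choose a non-negative $\delta\in K'(\boostv'')$ and an $s>0$ small enough that $\boostv''+s\delta\in[0,1]^n$, $\op_\varphi(\boostv''+s\delta)>\revis{\varphi}$, and (by continuity of the norm) $\|\boostv''+s\delta-\truth\|<u$. Hence the open superlevel set $V:=\{\boostv\in[0,1]^n:\op_\varphi(\boostv)>\revis{\varphi}\}$ already contains a point strictly inside the $u$-ball around $\truth$.

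Next I would produce a point of $V$ at distance strictly greater than $u$. Using $\revis{\varphi}\ge\op_\varphi(\truth)$, monotonicity of $\op_\varphi$, and strictness of $\|\cdot\|$, I would first reduce to the case $\minboostv\ge\truth$ componentwise: whenever $\minboosts_i<\truths_i$, replacing that coordinate by $\truths_i$ yields a point with $\op_\varphi\ge\revis{\varphi}$ and strictly smaller norm, and iterating (taking the resulting point in place of $\boostv''$ when its value stays at $\revis{\varphi}$) can be arranged so that strict cone-monotonicity is only ever applied at level $\revis{\varphi}$. Once $\minboostv\ge\truth$, strict cone-monotonicity at $\minboostv$ together with Proposition \ref{prop:cone-monotone} gives a non-negative $\delta'\in K'(\minboostv)$ and a small $s'>0$ with $\op_\varphi(\minboostv+s'\delta')>\revis{\varphi}$; since $\minboostv+s'\delta'\ge\minboostv\ge\truth$ with strict inequality in at least one coordinate, strictness of $\|\cdot\|$ in every argument forces $\|\minboostv+s'\delta'-\truth\|>u$.

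Finally, because $\op_\varphi$ is non-decreasing, $V$ is upward-closed, hence path-connected: any two $v,w\in V$ are joined by the piecewise-linear path $v\to v\vee w\to w$ (componentwise maximum), which stays in $V$. Along such a path joining $\boostv''+s\delta$ to $\minboostv+s'\delta'$, the norm is continuous and crosses from a value below $u$ to a value above $u$, so the intermediate value theorem produces $\boostv'\in V\cap F_u$, the desired contradiction. The main obstacle is the reduction to $\minboostv\ge\truth$: without it, a non-negative cone direction at $\minboostv$ could decrease the norm in any coordinate where $\minboosts_i<\truths_i$, and the strict-norm step would fail; carrying out the reduction cleanly demands that strict cone-monotonicity be invoked only on $\{\op_\varphi=\revis{\varphi}\}$ (the only place the hypothesis provides it) and seems to depend essentially on $\revis{\varphi}\ge\op_\varphi(\truth)$.
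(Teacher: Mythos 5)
Your overall architecture matches the paper's: both proofs suppose a feasible point $\boostv''$ of Equation \ref{eq:optim-problem} with $\|\boostv''-\truth\|<u$, invoke strict cone-monotonicity on the level set $\{\op_\varphi=\revis{\varphi}\}$ together with Proposition \ref{prop:cone-monotone} to obtain a nonnegative direction of strict increase, and then use continuity and strictness of the norm to land on the sphere of radius $u$ at a point with value exceeding $\revis{\varphi}$, contradicting the maximality of $\minboostv$. You differ in the landing mechanism: the paper rides a single nonnegative ray from $\boostv''$ outward until the norm equals $u$ (using $\boostv''-\truth\geq\boldsymbol{0}$ so that the norm increases along the ray), whereas you build one point of $V=\{\op_\varphi>\revis{\varphi}\}$ strictly inside the sphere and one strictly outside, and connect them through $V$ via the componentwise maximum. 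The path-connectedness of the upward-closed set $V$ and the intermediate-value step are correct, and this two-sided argument is a legitimate alternative to the single-ray step.

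The gap is in your construction of the outside point. It genuinely requires $\minboostv\geq\truth$ componentwise, since a nonnegative perturbation of $\minboostv$ only forces the norm above $u$ when every coordinate of $\minboostv-\truth$ is already nonnegative; you correctly flag this as the main obstacle, but the reduction you propose does not deliver it. Replacing each coordinate with $\minboosts_i<\truths_i$ by $\truths_i$ produces a point whose distance to $\truth$ is strictly \emph{less} than $u$, so the modified point is no longer on the sphere and cannot anchor a point of $V$ at distance greater than $u$; the iteration you describe (\say{taking the resulting point in place of $\boostv''$}) only ever manufactures additional points inside the ball and never restores a usable point on or outside it. As written, the case $\minboostv\not\geq\truth$ is therefore not handled. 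The paper sidesteps this entirely by never perturbing $\minboostv$: it perturbs only the hypothetical better solution, asserts $\boostv''-\truth\geq\boldsymbol{0}$ from $\revis{\varphi}\geq\op_\varphi(\truth)$ and monotonicity, and walks that single ray until the norm reaches $u$. To repair your argument you would either need to show that some maximizer of Equation \ref{eq:optim-problem-max} satisfies $\minboostv\geq\truth$ (which suffices, since the conclusion of the theorem depends only on the optimal value $\revis{\varphi}$ and the radius $u$, not on the particular maximizer), or fall back on the paper's single-ray construction for the final step.
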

\begin{proof}
    Assume otherwise, and suppose a solution $\boostv$ for Equation \ref{eq:optim-problem} exists such that $\op_\varphi(\boostv)=\revis{\varphi}$ while $\| \boostv - \truth\| < \| \minboostv  - \truth\|=u$. Since $\op_\varphi$ is non-decreasing in all arguments and $\revis{\varphi} \geq \op_\varphi(\truth)$, $\boostv - \truth$ and $\minboostv  - \truth$ are nonnegative. By Proposition \ref{prop:cone-monotone} there is some cone $K(\boostv)$ that contains a line segment $\boldsymbol{\epsilon}(s)=s(\truth'-\boostv)$ such that for all $s>0$, $\op_\varphi(\boostv) < \op_\varphi(\boostv + \mathbf{\epsilon}(s))$ and for all $i\in \{0, ..., n \}$, $0 \leq \boldsymbol{\epsilon}(s)_i$. Therefore, necessarily there is some $i$ such that $0 < \boldsymbol{\epsilon}(s)_i$. Since $\|\cdot \|$ is strictly increasing on nonnegative vectors and continuous (since it is a norm), necessarily there is some $s>0$ such that $\|\boostv+\boldsymbol{\epsilon}(s)\|=u$. However, this is in contradiction with the premise that $\minboostv$ is a solution of Equation \ref{eq:optim-problem-max}, as $\op_\varphi(\boostv+\epsilon(s)) > \op_\varphi(\minboostv)$. 
\end{proof}

Since $f_\varphi\in[0, 1]^n \rightarrow [0, 1]$, $\op_\varphi$ cannot satisfy the conditions of Theorem \ref{theorem:max-to-min} when $\revis{\varphi}=1$. For all $\revis{\varphi}\in[0, 1)$ however, both the G\"{o}del and product t-norms and t-conorms are strictly cone-increasing. The \luk{} t-norm satisfies the conditions for $\revis{\varphi}\in (0, 1)$, since it has flat regions for $\revis{\varphi}=0$. The same reasoning can be made for the nilpotent minimum and drastic t-norms, see \cite{vankriekenAnalyzingDifferentiableFuzzy2022}. Furthermore, all t-norms with an additive generator are strictly cone-increasing on $\revis{\varphi} \in (0, 1)$, as are all strict t-norms.

\section{Schur-concave t-norms (Proofs)}
\label{appendix:schur-concave}

\subsection{Minimal \boost\ function for t-norms}
\label{appendix:boost-schur-tnorm}

\begin{theorem}
    \label{theorem:schur-concave-t-norm}
    Let $T$ be a Schur-concave t-norm that is strictly cone-increasing at $\revis{T}$ and let $\|\cdot \|$ be a strict norm. Then there is a minimal \boosted\ vector $\minboostv$ for $\truth$ and $\revis{T}$ such that whenever $\truths_i> \truths_j$, then $\minboosts_i - \truths_i\leq \minboosts_j - \truths_j$. 
\end{theorem}
\begin{proof}
    Assume there is a minimal \boosted\ vector $\boostv\neq\minboostv$ which has some $\boosts_i - \truths_i > \boosts_j - \truths_j$ while $\truths_i > \truths_j$. Consider $\boostv'$ equal to $\boostv$ except that $\boosts'_i=\boosts_j - \truths_j + \truths_i$ and $\boosts'_j=\boosts_i - \truths_i + \truths_j$ such that by symmetry $\|\boostv - \truth\|=\|\boostv' - \truth\|$. Define $\newmax'=\max( \boosts_i', \boosts_j')$ and $\newmin' = \min(\boosts_i', \boosts_j')$. Clearly, $\boosts_i> \newmax'\geq\newmin'> \boosts_j$. 
    We will show $\boostv$ majorizes $\boostv'$ by checking the condition of Definition \ref{def:schur-concave} for any $k\in \{1, ..., n\}$. 
    \begin{enumerate}
        \item If $\boosts^\downarrow_k> \boosts_i$, then all elements are equal and $\sum_{l=1}^ k \boosts^\downarrow_l=\sum_{l=1}^ k \boosts'^\downarrow _l$. 
        \item If $\boosts_i \geq \boosts^\downarrow_k > \newmax'$, then $\sum_{l=1}^ {k}\boosts^\downarrow_l=\sum_{l=1}^ {k-1} \boosts'^\downarrow_l + \boosts_i \geq \sum_{l=1}^ k \boosts'^\downarrow_l$.
        \item If $\newmax' \geq \boosts^\downarrow_k > \newmin'$, then $\sum_{l=1}^k \boosts^\downarrow_l > \sum_{l=1}^k \boosts'^\downarrow_l$, since by removing common terms we get $\boosts_i > \newmax'$. 
        \item If $\newmin'\geq \boosts^\downarrow_k > \boosts_j$, then removing all common terms in the sums, we are left with $\boosts_i + \boosts^\downarrow_k > \newmin'+\newmax'$. Note $\newmin' + \newmax' = \boosts_j + \truths_i - \truths_j  + \boosts_i + \truths_j-\truths_i=\boosts_i + \boosts_j$. Subtracting $\boosts_i$ from both sides, we are left with $\boosts^\downarrow_k > \boosts_j$, which is true by assumption.
        \item If $\newmin \geq \boosts^\downarrow_k$, then removing common terms, we are left with $\newmax + \newmin=\boosts_i + \boosts_j$.
    \end{enumerate}
    Therefore, $\boostv$ majorizes $\boostv'$, and so by Schur concavity, $T(\boostv, \truthc)\leq T(\boostv', \truthc)$, noting that the additional truth vector $\truthc$ will not influence the majorization result since it is applied at both sides. By Theorem \ref{theorem:max-to-min}, either 1) $T(\boostv, \truthc)< T(\boostv', \truthc)$, so $\boostv$ could not have been minimal, leading to a contradiction, or 2) $T(\boostv, \truthc)=T(\boostv', \truthc)$ and both $\boostv$ and $\boostv'$ are minimal.
\end{proof}
\subsection{Closed-form \boost\ function using additive generators}
\label{appendix:closed-form-additive}
\begin{proposition}
    \label{prop:additive-generator}
    Let $T$ be a Schur-concave t-norm with additive generator $g$ and let $0<\revis{T}\in [T(\truth, \truthc), \revismax{T}]$. 
    Let $K\in \{0, ..., n-1\}$ denote the number of truth values such that $\minboosts_i=\truths_i$ in Equation \ref{eq:minboost-t-norm-schur-concave}.
    Then using

    \begin{equation}
        \lambda_K = g^ {-1}\left(\frac{1}{n-K}\left(g(\revis{T}) -\sum_{i=1}^K g(\truths^\downarrow_i) - \sum_{i=1}^m g(C_i)\right)\right)
    \end{equation}

    in Equation $\ref{eq:minboost-t-norm-schur-concave}$ gives $T(\minboostv, \truthc)=\revis{T}$ if $\minboostv\in [0, 1]^n$. 
\end{proposition}
\begin{proof}
    Using Equations \ref{eq:additive-generator} and \ref{eq:minboost-t-norm-schur-concave}, we find that 
    \begin{align*}
        T(\minboostv, \truthc)=g^{-1}\left(\min\left(g(0^+), \sum_{i=1}^K g(\truths^\downarrow_i) + \sum_{i=K+1}^n g(\lambda_K) + \sum_{i=1}^m g(C_i)\right)\right)=\revis{T}
    \end{align*}
    Since $\revis{T} > 0$, we can remove the $\min$, since $\revis{T}>0$ will require that $\sum_{i=1}^K g(\truths^\downarrow_i) + (n-K) g(\lambda_K) + \sum_{i=1}^m g(C_i)>g(0^+)$. We apply $g$ to both sides of the equation, which is allowed since $g$ is a bijection. Thus 
    \begin{align*}
        g(\revis{T}) &= \sum_{i=1}^K g(\truths^\downarrow_i) + (n-K) g(\lambda_K) + \sum_{i=1}^m g(C_i)  \\
        g(\lambda_K) &= \frac{1}{n-K}\left( g(\revis{T}) - \sum_{i=1}^K g(\truths^\downarrow_i) - \sum_{i=1}^m g(C_i) \right) \\
        \lambda_K &= g^ {-1}\left(\frac{1}{n-K}\left(g(\revis{T}) -\sum_{i=1}^K g(\truths^\downarrow_i) - \sum_{i=1}^m g(C_i)\right)\right),
    \end{align*}
    where in the last step we apply $g^{-1}$.
\end{proof}
In a similar manner we can find the $\lambda$ for the t-conorm. Let $j={\arg\max}_{i=1}^n \truths_i$. 
\begin{align*}
    S(\minboostv) = 1-g^{-1}(\min (g(0^+), \sum_{i=1}^n g(1-\minboosts_i) + \sum_{i=1}^m g(1-C_i))) &= \revis{S} \\
    \min(g(0^ +), g(1-\lambda)+ \sum_{i\neq j} g(1-\truths_i) + \sum_{i=1}^m g(1-C_i)) &= g(1-\revis{S})
\end{align*}
If $\revis{S} < 1$, or if $g(0)$ is well defined, then we can ignore the $\min$:
\begin{align*}
    g(1-\lambda) &= g(1-\revis{S}) - \sum_{i\neq j} g(1-\truths_i)) -  \sum_{i=1}^m g(1-C_i) \\
    \lambda &= 1 - g^{-1}\left(g(1-\revis{S}) - \sum_{i\neq j}g(1-\truths_i) -  \sum_{i=1}^m g(1-C_i)\right) 
\end{align*}

\subsection{L1 minimal \boost\ function for t-norms}
\label{appendix:boost-schur-t-norm-L1}
\emile{The $\lambda$ to choose is the one that satisfies those conditions and reaches $\weight$. But it turns out in the experiments that this lambda is also the smallest lambda, that is adding more or less elements to $N$ increases the value of $\lambda$? This is useful because it makes the implementation a lot easier... Is this something we can prove?}
\begin{proposition}
    Let $\truth\in [0, 1]^n$ and let $T$ be a Schur-concave t-norm that is strictly cone-increasing at $\revis{T}\in [T(\truth, \truthc), \revismax{T}]$. Then there is a value $\lambda\in [0, 1]$ such that the vector $\minboostv$,
    \begin{equation}
        \label{eq:minboost-t-norm-schur-concave}
        \minboosts_i = \begin{cases}
            \lambda, & \text{if } \truths_i < \lambda, \\
            \truths_i, & \text{otherwise,}
        \end{cases}
    \end{equation}
    is a minimal \boosted\ vector for $T$ and the L1 norm at $\truth$ and $\revis{T}$.
\end{proposition}
\begin{proof}
    Assume otherwise. Then, using Theorem \ref{theorem:max-to-min}, there must be a \boosted\ vector $\boostv$ such that $\|\boostv - \truth\|_1=\|\minboostv - \truth\|_1$ but $T( \boostv, \truthc) > T(\minboostv, \truthc)$. Since $\revis{T} \in [T(\truth, \truthc), \revismax{T}]$, we can assume $\boosts_i\geq \truths_i$. 
    We define $\pi^*(i)$ as the permutation in descending order of $\minboostv$. Furthermore, let $k$ be the smallest $j$ such that $\truths^ \downarrow_j < \lambda$. 

    Since $\|\boostv\|_1=\|\minboostv\|_1$, by assumption of equal L1 norms of $\boostv$ and $\minboostv$, we will prove for all $i\in \{1, ..., n\}$ that $\boostv$ majorizes $\minboostv$. 
    \begin{itemize}
        \item If $i < k$, then $\sum_{j=1}^i\boosts^\downarrow_j\geq\sum_{j=1}^i \boosts_{\pi^ *(j)} \geq \sum_{j=1}^i \truths_{\pi^*(j)}=\sum_{j=1}^i\minboosts^\downarrow_j$. The first inequality follows from the fact that there is no ordering of $\boostv$ that will have a higher sum than in descending order. 
        \item If $i\geq k$, then clearly $\minboosts^\downarrow_i=\lambda$. Furthermore, $\sum_{j=1}^i \minboosts^\downarrow_j=\sum_{j=1}^k\truths^\downarrow_j+(i-k)\lambda$. We will distinguish two cases:
        \begin{enumerate}
            \item $\boosts^\downarrow_i \geq \lambda$. Then for all $j\in \{k, ..., i\}$, $\boosts^\downarrow_j\geq \lambda$. Furthermore, from the previous result, $\sum_{j=1}^{k-1}\boosts^\downarrow_j \geq \sum_{j=1}^ {k-1}\minboosts^\downarrow_j$ and so clearly $\sum_{j=1}^i \boosts^\downarrow_j \geq \sum_{j=1}^ i\minboosts^\downarrow_i$.
            \item $\boosts^\downarrow_i < \lambda$. Then for all $j>i$, $\boosts^\downarrow_j\leq \boosts^\downarrow_i< \lambda$, and so $\sum_{j=i+1}^n \boosts^\downarrow_j \leq \sum_{j=i+1}^ n \boosts^\downarrow_i=(n-i)\boosts^\downarrow_i<(n-i)\lambda$. Using this, we note that 
            \begin{equation*}
                \|\minboostv\|_1=\sum_{j=1}^k\truths^\downarrow_j+(n-k)\lambda=\|\boostv\|_1=\sum_{j=1}^i\boosts^\downarrow_j+\sum_{j=i+1}^n\boosts^\downarrow_j<\sum_{j=1}^i \boosts^\downarrow_j + (n-i)\lambda.
            \end{equation*} 
            Then, subtracting $(n-i)\lambda$ from the inequality, we find
            \begin{align*}
                \sum_{j=1}^i\boosts^\downarrow_j> \sum_{j=1}^k\truths^\downarrow_j+(n-k)\lambda-(n-i)\lambda=\sum_{j=1}^k\truths^\downarrow_j+(i-k)\lambda=\sum_{j=1}^i \minboosts^\downarrow_j
            \end{align*}
        \end{enumerate}
    \end{itemize}
    And so, $\boostv$ majorizes $\minboostv$, and by Schur concavity of $T$, $T(\boostv, \truthc) \leq T(\minboostv, \truthc)$ leading to a contradiction. 
\end{proof}

\subsection{L1 minimal \boost\ function for t-conorms}
\label{appendix:schur-concave-t-conorm-l1}
\begin{proposition}
    Let $\truth\in [0, 1]^n$ and let $S$ be a Schur-convex t-conorm that is strictly cone-increasing at $\revis{S}\in [S(\truth, \truthc), 1]$. Then there is a value $\lambda\in [0, 1]$ such that the vector $\minboostv$,
    \begin{equation}
        \minboosts_i = \begin{cases}
            \lambda & \text{if } i={\arg\max}_{i\in D}\truths_i, \\
            \truths_i, & \text{otherwise,}
        \end{cases}
    \end{equation}
    is a minimal \boosted\ vector for $S$ and the L1 norm at $\truth$ and $\revis{S}$.
\end{proposition}
\begin{proposition}
    Let $\truth\in [0, 1]^n$ and let $S$ be a Schur-convex t-conorm that is strictly cone-increasing at $\revis{S}\in [S(\truth, \truthc), 1]$. Then there is a value $\lambda\in [0, 1]$ such that the vector $\minboostv$ with $i\in D$,
    \begin{equation}
        \minboosts_i = \begin{cases}
            \lambda & \text{if } i={\arg\max}_{i\in D}\truths_i, \\
            \truths_i, & \text{otherwise,}
        \end{cases}
    \end{equation}
    is a minimal \boosted\ vector for $S$ and the L1 norm at $\truth$ and $\revis{S}$.
\end{proposition}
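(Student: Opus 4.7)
The plan is to mirror the proof strategy used for the Schur-concave t-norm case in Appendix~\ref{appendix:boost-schur-t-norm-L1}, but with the majorization direction reversed to exploit Schur \emph{convexity} of $S$. First, I would use Theorem~\ref{theorem:max-to-min} to reduce the claim to the dual problem: assume for contradiction that there is a \boosted\ vector $\boostv$ with $\|\boostv-\truth\|_1=\|\minboostv-\truth\|_1$ but $S(\boostv,\truthc)>S(\minboostv,\truthc)$. Since $\revis{S}\geq S(\truth,\truthc)$ and $S$ is non-decreasing in each argument, I may assume componentwise $\boosts_i\geq\truths_i$, and similarly $\minboosts_i\geq\truths_i$ by construction. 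Set $j^*=\arg\max_i \truths_i$ and $\delta=\lambda-\truths_{j^*}\geq 0$, so that $\|\minboostv-\truth\|_1=\delta=\|\boostv-\truth\|_1$, and hence $\|\minboostv\|_1=\|\boostv\|_1=\|\truth\|_1+\delta$.

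Next I would show $\minboostv\succ\boostv$. Since $\lambda\geq\truths^\downarrow_1$, the descending ordering of $\minboostv$ is $(\lambda,\truths^\downarrow_2,\ldots,\truths^\downarrow_n)$, so for each $k\in\{1,\ldots,n\}$,
\begin{equation*}
\sum_{i=1}^k\minboosts^\downarrow_i=\delta+\sum_{i=1}^k\truths^\downarrow_i.
\end{equation*}
Letting $\pi$ be a permutation witnessing the descending sort of $\boostv$ and writing $d_i=\boosts_i-\truths_i\geq 0$, I would bound
\begin{equation*}
\sum_{i=1}^k\boosts^\downarrow_i=\sum_{i=1}^k\truths_{\pi(i)}+\sum_{i=1}^k d_{\pi(i)}\leq\sum_{i=1}^k\truths^\downarrow_i+\sum_{i=1}^n d_i=\sum_{i=1}^k\truths^\downarrow_i+\delta,
\end{equation*}
where the first inequality uses that the descending sort maximizes any partial sum over permutations, and the second uses $d_{\pi(i)}\geq 0$. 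Combined with the equality of total sums, this establishes $\minboostv\succ\boostv$, and appending $\truthc$ on both sides preserves majorization.

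Finally, by Schur convexity of $S$, the majorization $\minboostv\succ\boostv$ gives $S(\minboostv,\truthc)\geq S(\boostv,\truthc)$, contradicting the assumed strict inequality. Thus no such $\boostv$ exists and $\minboostv$ is a minimal \boosted\ vector, provided a $\lambda\in[0,1]$ achieving $S(\minboostv,\truthc)=\revis{S}$ exists, which follows from continuity and monotonicity of $S$ in the $j^*$-th argument between $\truths_{j^*}$ and $1$, together with $\revis{S}\in[S(\truth,\truthc),1]$.

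The main obstacle I anticipate is keeping track of the direction of majorization: in the t-norm case it was $\boostv\succ\minboostv$ (spreading), whereas here the $\minboostv$ that concentrates the entire $L_1$ budget into one coordinate is the most majorizing among feasible vectors. A secondary subtlety is the edge case where the argmax is not unique, but any tiebreaking rule yields the same descending sort and hence the same majorization argument; and the case $\revis{S}=1$, where the existence of $\lambda$ must be argued separately since $S$ need not be strictly cone-increasing there, but this is handled by cone-monotonicity at $\revis{S}$ exactly as in Theorem~\ref{theorem:max-to-min}.
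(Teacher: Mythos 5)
Your proposal is correct and follows essentially the same route as the paper's proof in Appendix~\ref{appendix:schur-concave-t-conorm-l1}: reduce to the dual problem via Theorem~\ref{theorem:max-to-min}, show $\minboostv$ majorizes any competing $\boostv$ of equal L1 distance by bounding the partial sums of the descending sort (permutation bound plus the nonnegative-increment budget $\delta=\lambda-\truths^\downarrow_1$), and conclude by Schur convexity. Your added remarks on the existence of $\lambda$ and on the componentwise assumption $\boosts_i\geq\truths_i$ make explicit details the paper leaves implicit, but they do not change the argument.
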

\begin{proof}
   Assume otherwise. Then, using Theorem \ref{theorem:max-to-min}, there must be a \boosted\ vector $\boostv\neq \minboostv$ such that $\|\boostv - \truth\|_1=\|\minboostv - \truth\|_1=\lambda-\truth^\downarrow_1$ but $S(\boostv, \truthc) > S(\minboostv, \truthc)$. 
   Let $\pi(i)$ be the permutation in descending order of $\boostv$. 


   Consider any $k\in \{1, ..., n\}$. Then $\sum_{i=1}^k \minboosts^\downarrow_i=\sum_{i=1}^k\truths^\downarrow_i+(\lambda - \truths^\downarrow_1)$, while $\sum_{i=1}^k \boosts^\downarrow_j=\sum_{i=1}^k\truths_{\pi(i)}+\sum_{i=1}^k(\boosts_{i} - \truths_{\pi(i)})$. There is no permutation with higher sum than in descending order, so $\sum_{i=1}^k\truths_{\pi(i)} \leq \sum_{i=1}^k\truths^\downarrow_i$. Furthermore, since $\|\boostv - \truth\|_1=\lambda - \truths^\downarrow_1$, $\sum_{i=1}^k(\boosts_i - \truths_{\pi(i)})\leq \lambda - \truths^\downarrow_1$. Therefore, $\sum_{i=1}^k \boosts^\downarrow_i \leq \sum_{i=1}^k \minboosts^\downarrow_i$, that is, $\minboostv$ majorizes $\boostv$, and by Schur convexity of $S$, $S(\minboostv, \truthc) \geq S(\boostv, \truthc)$. 
\end{proof}

\subsection{L1 minimal \boost\ function for residuums}
\label{appendix:boost-schur-residuum}
\todo{This doesn't consider constants}
    \begin{proposition}
        Let $\truths_1, \truths_2\in [0,1]$ and let $T$ be a strict Schur-concave t-norm with additive generator $g$. Consider its residuum $R(\truths_1, \truths_2)=\sup \{z\vert T(\truths_1, z)\leq \truths_2\}$ that is strictly cone-increasing at $0<\revis{R}\in [R(\truths_1, \truths_2), \revismax{R}]$. Then there is a value $\lambda\in [0, 1]$ such that $\minboostv=[\truths_1, \lambda]^\top$ is a minimal \boosted\ vector for $R$ and the L1 norm at $\truth$ and $\weight$. 
    \end{proposition}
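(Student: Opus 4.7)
The plan is to parameterize the feasible set $\{\boostv : R(\boostv) = \revis{R}\}$ as a one-dimensional curve and reduce to a single-variable minimization. First, I would note that strict cone-monotonicity at $\revis{R}$ rules out $\revis{R} = 1$: at any point with $\truths_2 \geq \truths_1$ the residuum is locally constant. So we may assume $\revis{R} < 1$, and since $T$ is strict (so $g(0^+) = \infty$) we have $R(a, c) = g^{-1}(g(c) - g(a))$ whenever $c < a$. Every feasible refinement then has the form $(x, h(x))$ with
\begin{equation*}
    h(x) := g^{-1}(g(\revis{R}) + g(x)),
\end{equation*}
so $\lambda^* := h(\truths_1)$ is the candidate refinement, and the claim reduces to showing that $F(x) := |x - \truths_1| + |h(x) - \truths_2|$ is minimized at $x = \truths_1$.

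The key technical ingredient is that $h$ is monotone nondecreasing and $1$-Lipschitz. I would derive this from the fact that Schur-concavity of a continuous Archimedean t-norm forces its additive generator $g$ to be convex: applying $T(a,a) \geq T(a - \delta, a + \delta)$ through the generator gives the midpoint-convexity inequality $2g(a) \leq g(a - \delta) + g(a + \delta)$, and continuity promotes this to convexity. Since $g$ is strictly decreasing and convex, $|g'|$ is nonincreasing, and $h(x) < x$ then yields $h'(x) = g'(x)/g'(h(x)) \in (0, 1]$.

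Given the $1$-Lipschitz property and the observation that $\lambda^* \geq \truths_2$ (obtained by applying $g$ to $R(\truths_1, \truths_2) \leq \revis{R}$), the final case analysis is short. For $x > \truths_1$, monotonicity of $h$ gives $h(x) > \lambda^* \geq \truths_2$, so $F(x) = (x - \truths_1) + (h(x) - \truths_2) > \lambda^* - \truths_2 = F(\truths_1)$. For $x < \truths_1$, the $1$-Lipschitz bound $\lambda^* - h(x) \leq \truths_1 - x$ gives $F(x) \geq \lambda^* - \truths_2$ in both subcases $h(x) \geq \truths_2$ and $h(x) < \truths_2$; the latter uses that $\truths_2 - h(x) \geq 0$ adds rather than subtracts to the bound. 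Thus $x = \truths_1$ is optimal and $\lambda = \lambda^*$ is the promised value.

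The main obstacle I anticipate is rigorously deriving convexity of $g$ from Schur-concavity of $T$ without importing additional regularity assumptions; I would want to cross-reference \cite{takaciSchurconcaveTriangularNorms2005,alsinaSchurConcaveTNormsTriangle1984} to make sure the implication holds in the generality claimed. Strictness of $T$ is essential to the whole argument, since it provides $g(0^+) = \infty$ and the clean closed form for $R$ on the relevant region, which explains why the present approach does not extend to non-strict $T$.
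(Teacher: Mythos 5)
Your proof is correct, but it takes a genuinely different route from the paper's. The paper argues by contradiction through its dual problem (Theorem \ref{theorem:max-to-min}): it assumes a competitor with the same L1 budget but a strictly larger residuum value, observes that such a competitor must have the form $[t_1-\epsilon,\lambda-\epsilon]^\top$, pushes the inequality through the additive generator to turn it into a statement about $T$, and then derives a contradiction by showing $[\lambda-\epsilon,t_1]^\top$ majorizes $[\lambda,t_1-\epsilon]^\top$ and invoking Schur-concavity of $T$ directly. You instead parameterize the level set $\{(x,h(x))\}$ with $h(x)=g^{-1}(g(\hat{t}_R)+g(x))$ and reduce everything to showing $h$ is nondecreasing and $1$-Lipschitz, which you get by converting Schur-concavity into convexity of $g$ (the majorization $(a+\delta,a-\delta)\succ(a,a)$ gives midpoint convexity, and continuity of the generator of a strict t-norm upgrades this to convexity — this equivalence is indeed classical, cf.\ \cite{alsinaSchurConcaveTNormsTriangle1984}, so the step you flag as a risk is safe; alternatively the $1$-Lipschitz bound follows directly from the secant-slope monotonicity of the convex $g$ without differentiating). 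The two arguments use Schur-concavity in essentially the same place — the paper's majorization inequality $T(\lambda,t_1-\epsilon)\geq T(\lambda-\epsilon,t_1)$ is exactly the convexity of $g$ along an anti-diagonal — but your packaging is more self-contained (no appeal to the dual problem) and makes the geometry explicit: the level curve of $R$ has slope at most $1$, so trading a unit of antecedent for a unit of consequent can never pay off under L1. The paper's version has the advantage of uniformity with its other Schur-concavity proofs and of not requiring the generator-convexity characterization. One small point worth making explicit in your write-up: the case $\hat{t}_R=1$ is excluded by the strict cone-increasing hypothesis (as the paper also notes after Theorem \ref{theorem:max-to-min}), which is what licenses the clean closed form $R(a,c)=g^{-1}(g(c)-g(a))$ on the entire feasible set.
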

    
\begin{proof}
    We will assume $\truths_1 > \truths_2$, as otherwise $R(\truths_1, \truths_2)=1$ for any residuum, which necessarily means $\revis{R}=1$ and so $\minboostv=\truth$. 
    Assume $\minboostv$ is not minimal. Since $R$ is strictly cone increasing at $\revis{R}$, by Theorem \ref{theorem:max-to-min}\footnote{This theorem has to be adjusted for the fact that fuzzy implications are non-increasing in the first argument. It can be applied by considering $1-\truths_1$.} there must be some $\boostv$ such that $\|\boostv-\truth\| = \|\minboostv - \truth\|=\lambda-\truths_2$ but $R(\boosts_1, \boosts_2)>R(\minboosts_1, \minboosts_2)$. Since $R$ is non-decreasing in the first argument and non-increasing in the second, we consider $\boostv=[\truths_1 - \epsilon, \lambda - \epsilon]^\top$ for $\epsilon>0$. 
    
    The residuum constructed from continuous t-norms with an additive generator can be computed as $R(\truths_1, \truths_2)=\add^{-1}(\max(\add(\truths_2)-\add(\truths_1), 0))$. Since we assumed $R(\minboosts_1, \minboosts_2) < R(\boosts_1, \boosts_2)$, applying $g$ to both sides,
    \begin{align*}
       \max(g(\lambda) - g(\truths_1), 0) &> \max(g(\lambda - \epsilon) - g(\truths_1 - \epsilon), 0)\\
       g^{-1}(g(\lambda) + g(\truths_1 - \epsilon)) &< g^{-1}(g(\lambda-\epsilon) + g(\truths_1))\\
       T(\lambda, \truths_1 - \epsilon) & < T(\lambda - \epsilon, \truths_1)
    \end{align*}
    where in the second step we assume $\lambda \leq \truths_1$, that is, we are not setting new consequent larger than the antecedent, as otherwise we could find a smaller \boosted\ vector by setting it to exactly $\truths_1$. In the last step we use that $T$ is strict, as then $T(\truths_1, \truths_2)=g^{-1}(g(\truths_1) + \truths_2))$. We now use the majorization as $\lambda + \truths_1 - \epsilon = \lambda - \epsilon + \truth$.

    Since $\lambda \leq \truths_1$, surely $\truths_1 > \lambda - \epsilon$. Then there are two cases: 
    \begin{enumerate}
    \item $\lambda \geq \truths_1 -\epsilon$. Then $\truths_1 \geq \lambda$ as assumed.
    \item $\truths_1 - \epsilon \geq \lambda$. Then clearly $\truth \geq \truths_1 - \epsilon$ as $\epsilon > 0$. 
    \end{enumerate}
    Therefore $[\lambda - \epsilon, \truths_1]^\top$ majorizes $[\lambda, \truth- \epsilon]^\top$, and by Schur concavity $T(\lambda, \truth - \epsilon) \geq T(\lambda - \epsilon, \truths_1)$ which is a contradiction.
\end{proof}

\section{Product t-norm with L2 norm}
\label{Appendix:product-l2}
In this appendix, we consider the \boost\ functions for the product t-norm under the L2-norm. We find that there is no simple closed-form parameterization in terms of $\revis{\varphi}$, but we can find approximations in linear time. These are satisfactory to reliably find the minimal \boost\ function.

In the following, we wil ignore constants and consider formulas $\bigwedge_{i=1}^n P_i$, and consider the problem in Equation \ref{eq:optim-problem}. We consider the logarithm of the product as its optimum coincides. 

\begin{equation}
    \label{eq:optim-problem-prod-p1}
    \begin{aligned}
        \textrm{For all } \quad & \truth\in [0,1]^n, \revis{T_P} \in [T_P(\truth), 1] & \\
        \min_{\boostv} \quad & \sum_{i=1}^n (\boosts_i - \truths_i)^2 & \\
        \textrm{such that } \quad & \sum_{i=1}^n \log \boosts_i = \log \revis{T_P} \\
        & \boosts_i-1 \leq 0 
    \end{aligned}
\end{equation}
With Lagrangian $L=\sum_i (\boosts_i - \truths_i)^2 + \lambda(\sum_i \log \boosts_i -\log \revis{T_P})  - \gamma_i(\boosts_i - 1)$, and so
\begin{align*}
    \frac{\partial L}{\partial \boosts_i} &= 2(\boosts_i - \truths_i) - \gamma_i +\frac{\lambda}{\boosts_i}=0 \\
    \lambda &= (\gamma + 2\truths_i - 2\boosts_i) \boosts_i 
\end{align*}
Since this holds for all $i$, we find that for all $i$, $j$, $(\gamma + 2\truths_i - 2\boosts_i) \boosts_i= (\gamma + 2\truths_j - 2\boosts_j) \boosts_j=\lambda$. We partition $\{1, ..., n\}$ into sets $I$ and $M$, where $I$ contains all $i$ such that $\boosts_i < 1$, and $M$ those where $\boosts_i=1$. For $i\in I$, by noting that using the complementary slackness condition $\gamma_i=0$, this induces a quadratic equation in $\boosts_i$ with solutions
\begin{equation}
    \label{eq:boostv_p2}
    \boosts_i = \frac{1}{2}(\truths_i \pm \sqrt{\truths_i^2-2\lambda}).
\end{equation} 
Since we assume $\boosts_i\geq \truths_i$, we have to take the solution that adds the root of the determinant, that is, $\boosts_i=\frac{1}{2}(\sqrt{\truths_i^2-2\lambda}+\truths_i)$. Furthermore, since we constrain for $i\in I$ that $\boosts_i < 1$, we find that 
\begin{align*}
    1 &> \frac{1}{2}(\truths_i + \sqrt{\truths_i^2-2\lambda}) \\
    2 - \truths_i & > \sqrt{\truths_i^2 - 2\lambda} \\
    \lambda &> 2\truths_i - 2.
\end{align*}
Therefore, given some chosen value of $c$, we require for all $i\in I$ that $\lambda > 2\truths_i - 2$, and so, 
\begin{equation*}
    \min_{i\in I} 2\truths_i - 2 >\lambda 
\end{equation*} 

Unfortunately, finding the exact value of $\lambda$ such that $T_P(\boostv)=\revis{T_P}$ is a challenge. Filling in $\boosts_i$, we find
\begin{equation}
    T_P(\boostv)=\prod_{i=1}^n (\boosts_i)=\prod_{i\in I}\frac{1}{2}(\truths_i+\sqrt{\truths_i^2-2\lambda})=\revis{T_P}.
\end{equation}
This is a $2n$-th degree polynomial in $\lambda$, and we were not able to find an obvious, general closed form solution to it. Mathematica \cite{inc.MathematicaVersion12} finds a complicated closed form formula for $n=2$, but cannot find closed form formulas for $n>2$. 

We also still need to figure out how to partition $i=\{1, ..., n\}$ into $I$ and $M$. Since $\boosts_i$ as computed by Equation \ref{eq:boostv_p2} is a strictly decreasing function in $\lambda$ for all $i\in I$, we have the following unproven proposition. It supports the result given in Theorem \ref{theorem:schur-concave-t-norm} .
\begin{proposition}
    For all $\lambda\in[\min_{i=1}^n 2\truths_i-2, 0]$, the function 
    \begin{equation}
        \minboost_{T_P}(\truth, \lambda)_i= \begin{cases}
            \frac{1}{2}(\truths_i + \sqrt{\truths_i^2 - 2\lambda}) & \text{if } 2\truths_i -2 > \lambda, \\
            1-\truths_i & \text {otherwise.}
        \end{cases}
    \end{equation}
    has the following properties:
    \begin{enumerate}
        \item $\minboost_{T_P}(\truth, \lambda)$ is a minimal \boost\ vector for the product t-norm, the L2 norm and $\revis{T_P}=T_P(\minboost_{T_P}(\truth, \lambda))$;
        \item $\revis{T_P}=T_P(\minboost_{T_P}(\truth, \lambda))$ is a strictly decreasing function in $c$ on $(\min_{i=1}^n 2\truths_i-2, 0]$, and so there is a bijection between $\lambda$ and $\weight \in [T_P(\truth), 1]$ on this interval.
    \end{enumerate}
\end{proposition}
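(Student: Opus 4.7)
The plan is to split the claim into its two properties and attack each via a KKT-based sufficiency argument coupled with a convex relaxation of the equality constraint. For property (1), the crucial maneuver is to replace $T_P(\boostv)=\revis{T_P}$ with the inequality $T_P(\boostv)\geq \revis{T_P}$. Taking logarithms, this becomes $\sum_{i=1}^n \log \boosts_i \geq \log \revis{T_P}$, and because $\log$ is concave, the resulting super-level set is convex; intersecting with $[0,1]^n$ keeps it closed and convex. The strictly convex objective $\|\boostv-\truth\|_2^2$ therefore has a unique minimizer on this relaxed feasible set, and since $\revis{T_P}\geq T_P(\truth)$ the point $\truth$ is infeasible (or on the boundary), forcing the minimizer to lie on $\{T_P(\minboostv)=\revis{T_P}\}$. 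This coincides with the solution to the original equality-constrained problem, so global optimality is available without fighting the non-convex level set directly.

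To identify this unique minimizer with the formula in the proposition, I would verify the KKT conditions of the convex relaxation, which are both necessary and sufficient. Starting from the Lagrangian $L=\sum_i(\boosts_i-\truths_i)^2+\lambda(\sum_i\log\boosts_i-\log\revis{T_P})+\sum_i\gamma_i(\boosts_i-1)$ used in the excerpt, stationarity gives $2(\boosts_i-\truths_i)+\lambda/\boosts_i+\gamma_i=0$. In the interior regime $\gamma_i=0$, this reduces to the quadratic $2\boosts_i^2-2\truths_i\boosts_i+\lambda=0$, whose root with the $+\sqrt{\cdot}$ sign is exactly the formula given (the sign is fixed by the requirement $\minboosts_i\geq \truths_i$, which holds since $\lambda\leq 0$). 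In the saturated regime $\minboosts_i=1$, one computes $\gamma_i = 2\truths_i - 2 - \lambda$, which is nonnegative precisely in the saturated branch of the proposition, yielding dual feasibility; complementary slackness is automatic. Primal feasibility in $[0,1]^n$ follows from the direct calculation that the interior expression is at most $1$ on exactly that branch and equals $1$ at the transition $\lambda=2\truths_i-2$, and the equality $T_P(\minboostv)=\revis{T_P}$ is achieved by choosing $\lambda$ appropriately, whose existence is precisely property (2).

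For property (2), I would show that $\lambda \mapsto T_P(\minboostv(\lambda))=\prod_i\minboosts_i(\lambda)$ is strictly decreasing on $(\min_i 2\truths_i-2,\,0]$. In the interior regime each factor has $\frac{d}{d\lambda}\bigl[\tfrac12(\truths_i+\sqrt{\truths_i^2-2\lambda})\bigr]=-\frac{1}{2\sqrt{\truths_i^2-2\lambda}}<0$, while saturated factors are constant at $1$. Since at $\lambda$ close to $0$ no factor is saturated (because $2\truths_i-2\leq 0$), at least one factor is strictly decreasing throughout the open interval, hence so is the product. The endpoints are explicit: at $\lambda=0$ the interior formula yields $\minboosts_i=\truths_i$, so $\revis{T_P}=T_P(\truth)$; at $\lambda=\min_i 2\truths_i-2$ every factor is forced to $1$, giving $\revis{T_P}=1$. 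Continuity of $\minboosts_i$ at each transition $\lambda=2\truths_i-2$ is immediate because the interior formula evaluates to $1$ there, so $\revis{T_P}(\cdot)$ is a continuous, strictly decreasing bijection $[\min_i 2\truths_i-2,0]\to[T_P(\truth),1]$.

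The main obstacle is the sufficiency half of property (1): the native feasible set $\{T_P(\boostv)=\revis{T_P}\}$ is a non-convex algebraic surface, so KKT stationarity alone does not deliver a global minimum. The convex relaxation plus boundary argument is what unlocks the proof, replacing a hard non-convex verification with a standard convex-analytic uniqueness statement. A minor secondary subtlety is the careful case analysis for the active set at the transition points $\lambda=2\truths_i-2$, which requires checking that the interior and saturated expressions agree there; after that, the monotonicity and bijection claims in property (2) reduce to elementary single-variable calculus.
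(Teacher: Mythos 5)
Your proposal addresses the one statement in the paper that is explicitly left \emph{unproven}: the surrounding text introduces it as an ``unproven proposition'' and says of property (1) that the authors ``do not have a direct proof as of yet and leave this for future work,'' while property (2) is justified only by the same derivative observation you make. So for property (1) there is no paper proof to compare against, and your argument is a genuine addition rather than a variant. The central idea is sound: relaxing $T_P(\boostv)=\revis{T_P}$ to $\sum_i\log\boosts_i\geq\log\revis{T_P}$ yields a convex feasible set (super-level set of a concave function intersected with the box), the strictly convex L2 objective has a unique minimizer there, and since the unconstrained minimizer $\truth$ is infeasible whenever $\revis{T_P}>T_P(\truth)$, the log constraint must be active at the optimum; the relaxed and equality-constrained problems therefore share their unique minimizer, and KKT sufficiency for the convex relaxation (no Slater condition needed in the sufficiency direction) certifies global optimality. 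This is precisely the obstruction --- non-convexity of the level set --- that stops the paper's own Lagrangian computation from being a proof, so your route buys the missing half of property (1). Property (2) you handle essentially as the paper sketches it.

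One correction you must make explicit: what your KKT verification actually establishes is a \emph{repaired} version of the printed formula, not the formula as stated. Stationarity with $\gamma_i=0$ gives the interior root $\tfrac12(\truths_i+\sqrt{\truths_i^2-2\lambda})$, which is $<1$ exactly when $\lambda>2\truths_i-2$; dual feasibility of the saturated case, $\gamma_i=2\truths_i-2-\lambda\geq 0$, holds exactly when $\lambda\leq 2\truths_i-2$; and the saturated value is $1$, not $1-\truths_i$. The proposition as printed assigns the interior expression to the case $2\truths_i-2>\lambda$ (i.e.\ $\lambda<2\truths_i-2$) and the value $1-\truths_i$ to the complement --- the orientation is reversed relative to your analysis and to the derivation earlier in the same appendix, which requires $\lambda>2\truths_i-2$ for $i\in I$. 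Your own endpoint computations give this away: ``every factor is forced to $1$ at $\lambda=\min_i 2\truths_i-2$'' and ``$\minboostv(\lambda)=\truth$ at $\lambda=0$'' are both true for the corrected branches and false for the printed ones, so your assertions that $\gamma_i\geq 0$ ``precisely in the saturated branch of the proposition'' and that the interior expression is at most $1$ ``on exactly that branch'' do not hold for the statement as literally written. This is a defect of the statement rather than of your argument, but the proof should state the corrected case split rather than silently substituting it.
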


The second property is easy to see by noting the derivative of $\minboost_{T_P}(\truth, \lambda)$ is negative on $\lambda \in (\min_{i=1}^n 2\truths_i - 2]$, but for the first we do not have a direct proof as of yet and leave this for future work.

Although $\minboost_{T_P}(\truth, \lambda)$ is not parameterized in terms of $\revis{T_P}$, it can still be used in practical scenarios where $\lambda$ can  be seen as the negative \say{confidence} in the clause. A practical implementation could learn a weight for the clause between 0 and 1, and then transform it to the domain of $\lambda$ by dividing by $\min_{i=1}^n 2\truths_i - 2$. Alternatively, $\|T_P(\truth+\minboost_{T_P}(\truth, \lambda))-\revis{T_P}\|_2$ can be minimized with respect to $\lambda$ using mathematical optimization methods like gradient descent or Newton's method to find answers in terms of $\revis{T_P}$.

\section{Additional experiments}
\label{appendix:additional-experiments}
In this Appendix we present additional experiments when $\revis{\varphi}$ is not 1. 
\subsection{Results - \Boosted\ value 0.3}
The figures in this section present the results when the \boosted\ value $\revis{\varphi}=0.3$.
\begin{figure}[H]
    \includegraphics[width=\linewidth]{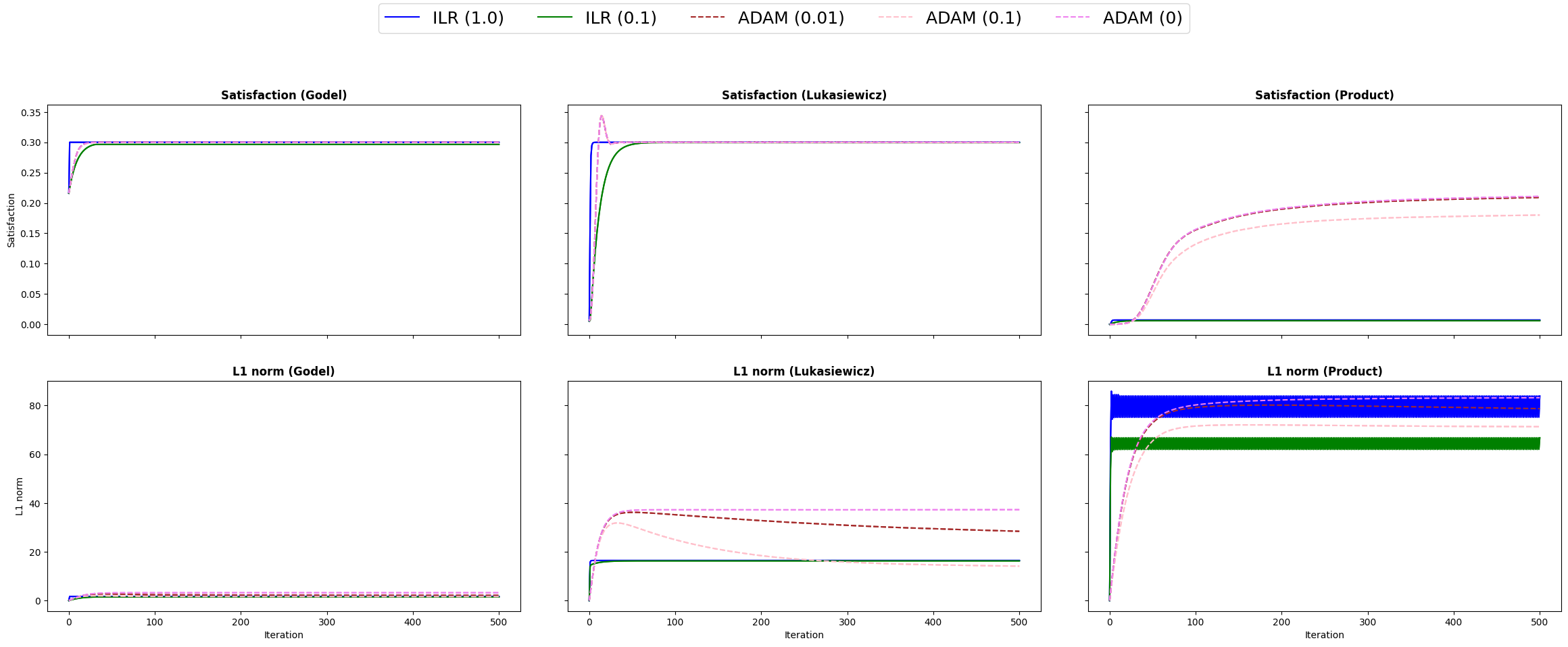}
    \caption{\ale{Comparison of ILR with ADAM on uf20-91 of SATLIB. \Boosted\ value 0.3. The x axis corresponds to the number of iterations, while the y axis is the value of $\revis{\varphi}$ in the first row of the grid, and the L1 norm in the second row.}}
    \label{fig:results_91_0.3}
\end{figure}

\begin{figure}[H]
    \includegraphics[width=\linewidth]{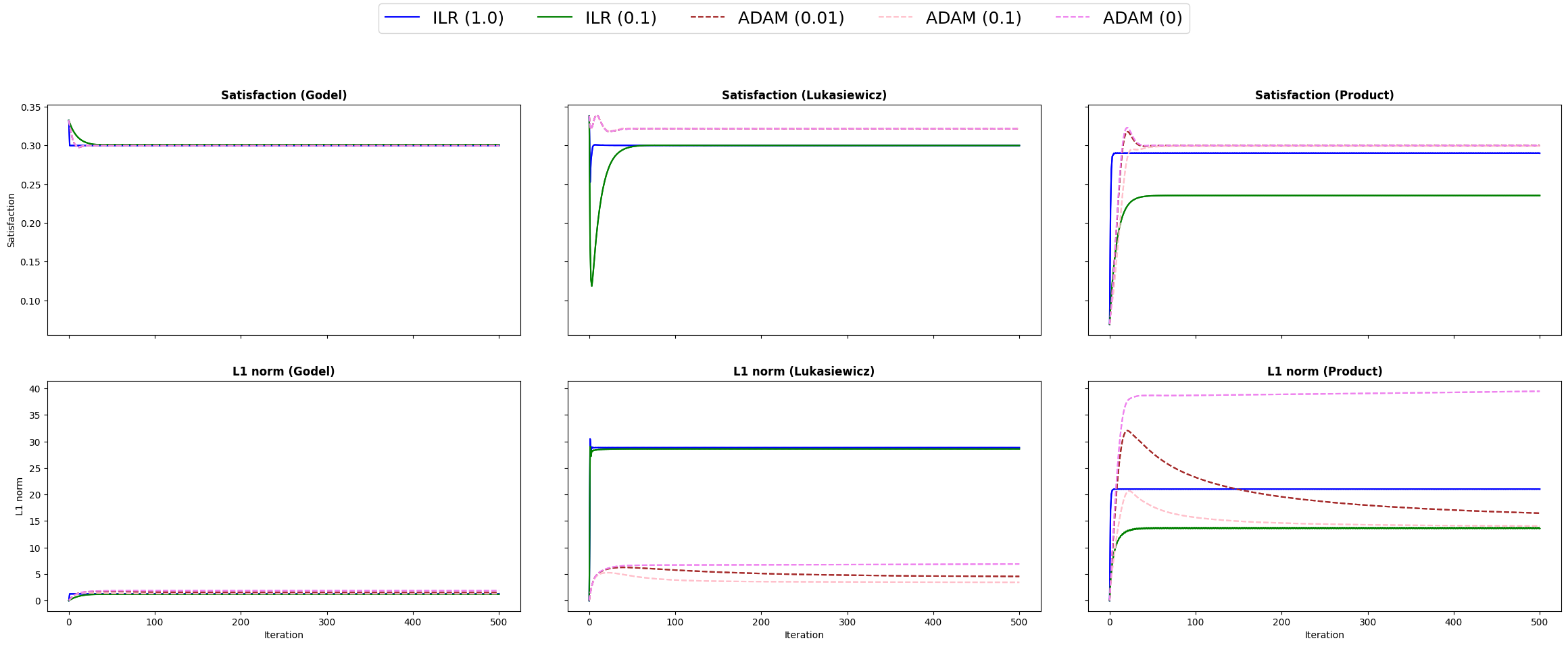}
    \caption{\ale{Comparison of ILR with ADAM on the uf20-91 with 20 clauses. \Boosted\ value 0.3.}}
    \label{fig:results_91_0.3}
\end{figure}

\subsection{Results - \Boosted\ value 0.5}
The figures in this section present the results when the \boosted\ value $\revis{\varphi}=0.5$. We note that the satisfaction for ADAM in \luk\ converges above 0.5 in Figure \ref{fig:results_91_0.5}. This means the final truth value is too high, and it has not found a proper solution here. 

\begin{figure}[H]
    \includegraphics[width=\linewidth]{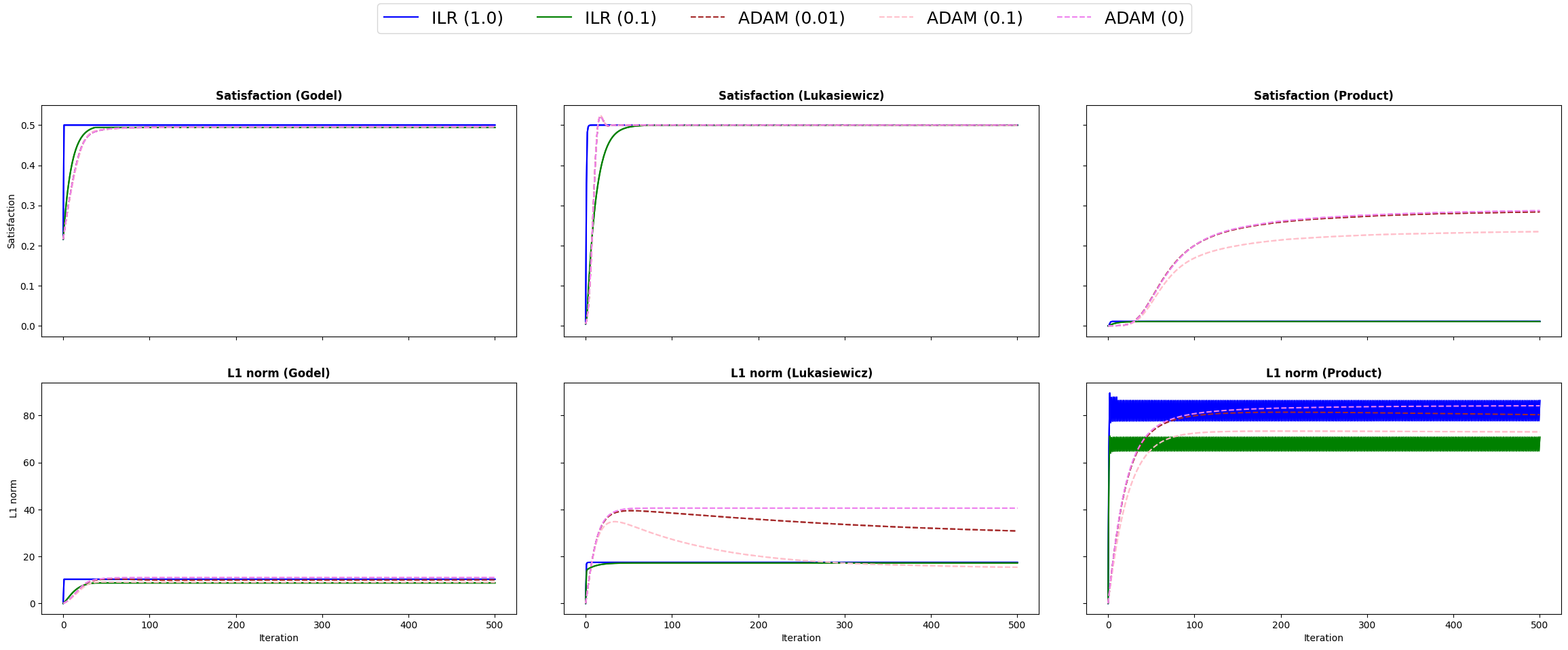}
    \caption{\ale{Comparison of ILR with ADAM on the uf20-91 of SATLIB. \Boosted\ value 0.5.}}
    \label{fig:results_91_0.5}
\end{figure}

\begin{figure}[H]
    \includegraphics[width=\linewidth]{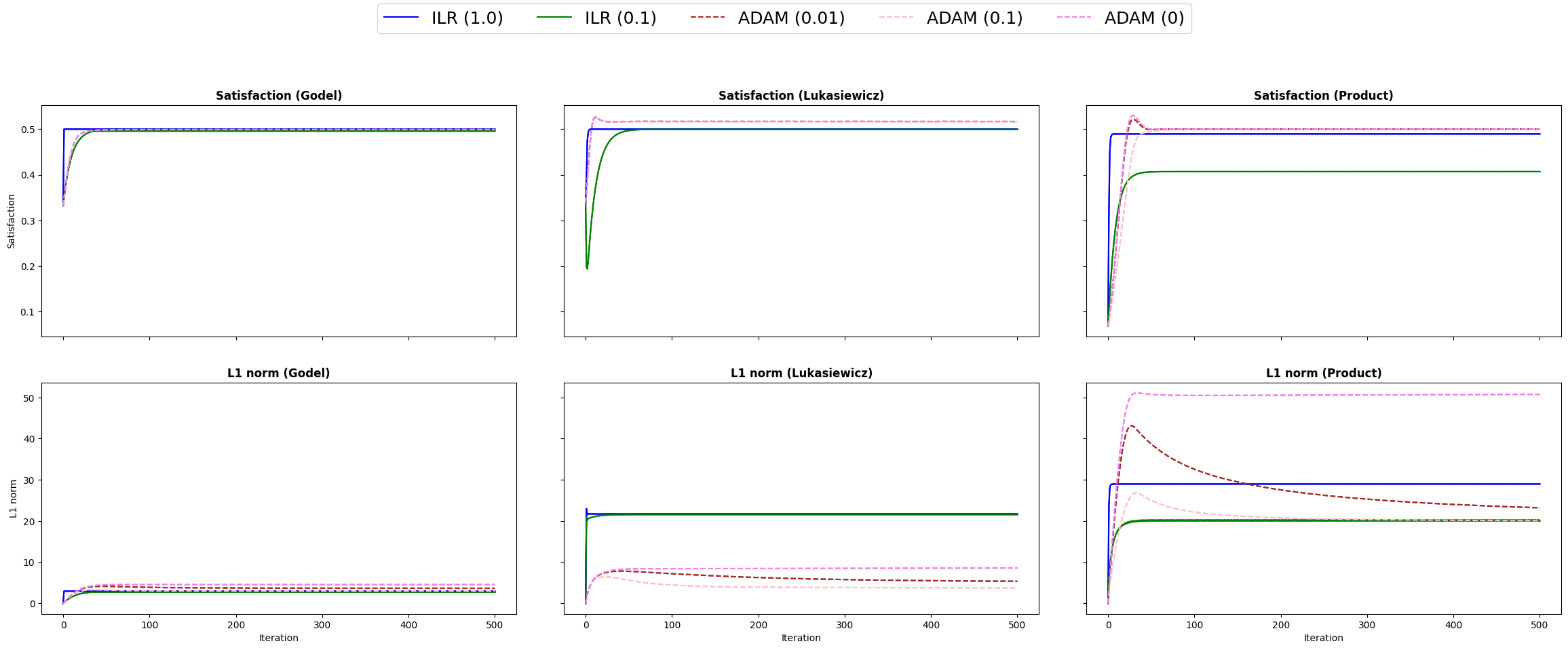}
    \caption{\ale{Comparison of ILR with ADAM on the uf20-91 with 20 clauses. \Boosted\ value 0.5.}}
    \label{fig:results_91_0.5}
\end{figure}

\end{document}